\title{Improved Optimistic Algorithms for Logistic Bandits}
\author{Louis Faury $^{1,2,\pmb{*}}$\qquad Marc Abeille $^{1,\pmb{*}}$\qquad Cl\'ement Calauz\`enes $^{1}$\qquad Olivier Fercoq $^{2}$}
\address{$1$ Criteo AI Lab, 32 Rue Blanche, Paris, France.}
\address{$2$ LTCI, T\'el\'ecom Paris, Institut Polytechnique de Paris, Palaiseau, France.}
\address{$\pmb{*}$ Equal contribution.}
\email{Correspondence to \texttt{l.faury@criteo.com}}
\begin{document}
\maketitle

\begin{abstract}
The generalized linear bandit framework has attracted a lot of attention in recent years by extending the well-understood linear setting and allowing to model richer reward structures. It notably covers the logistic model, widely used when rewards are binary. For logistic bandits, the frequentist regret guarantees of existing algorithms are $\bigO{\kappa \sqrt{T}}$, where $\kappa$ is a problem-dependent constant. Unfortunately, $\kappa$ can be arbitrarily large as it scales exponentially with the size of the decision set. This may lead to significantly loose regret bounds and poor empirical performance. In this work, we study the logistic bandit with a focus on the prohibitive dependencies introduced by $\kappa$. We propose a new optimistic algorithm based on a finer examination of the non-linearities of the reward function. We show that it enjoys a $\bigO{\sqrt{T}}$ regret with no dependency in $\kappa$, but for a second order term. Our analysis is based on a new tail-inequality for self-normalized martingales, of independent interest. 

\end{abstract}

\section*{Introduction}

Parametric stochastic bandits is a framework for sequential decision making where the reward distributions associated to each arm are assumed to share a structured relationship through a common unknown parameter. It extends the standard Multi-Armed Bandit framework and allows one to address the exploration-exploitation dilemma in settings with large or infinite action space. Linear Bandits (LBs) are the most famous instance of parametrized bandits, where the value of an arm is given as the inner product between the arm feature vector and the unknown parameter. While the theoretical challenges in LBs are relatively well understood and addressed (see~\citep{dani2008stochastic,rusmevichientong2010linearly,abbasi2011improved,abeille2017linear} and references therein), their practical interest is limited by the linear structure of the reward, which may fail to model real-world problems. As a result, extending LBs to allow for richer reward structures and go beyond linearity has attracted a lot of attention from the bandit community in recent years. To this end, two main approaches have been investigated. Following~\cite{valko2013finite}, the linearity of the reward structure has been relaxed to hold only in a reproducing kernel Hilbert space. Another line of research relies on Generalized Linear Models (GLMs) to encode non-linearity through a link function. We focus in this work on the second approach.

\paragraph{Generalized Linear Bandits.}
The use of generalized linear models for the bandit setting was first studied by \citet{filippi2010parametric}. They introduced GLM-UCB, a generic optimistic algorithm that achieves a $\tilde{O}(d\sqrt{T})$ frequentist regret. In the finite-arm case,~\citet{li2017provably} proposed SupCB-GLM for which they proved a $\tilde{O}(\sqrt{d\log K}\sqrt{T})$ regret bound. Similar regret guarantees were also demonstrated for Thompson Sampling, both in the frequentist \citep{abeille2017linear} and Bayesian \citep{russo2013eluder,russo2014learning,dong2018information} settings. In parallel,~\citet{jun2017scalable} focused on improving the time and memory complexity of Generalized Linear Bandits (GLBs) algorithms while~\cite{dumitrascu2018pg} improved posterior sampling for a Bayesian version of Thompson Sampling in the specific logistic bandits setting.

\paragraph{Limitations.} At a first glance, existing performance guarantees for GLBs seem to coincide with the state-of-the-art regret bounds for LB w.r.t. the dimension $d$ and the horizon $T$. However, a careful examination of the regret bounds shows that they all depend in an \emph{"unpleasant manner on the form of the link function of the GLM, and it seems there may be significant room for improvement"} \citep[\S19.4.5]{lattimore2018bandit}. More in detail, the regrets scale with a multiplicative factor $\kappa$ which characterizes the degree of non-linearity of the link function. As such, for highly non-linear models, $\kappa$ can be prohibitively large, which drastically worsens the regret guarantees as well as the practical performances of the algorithms.

\paragraph{Logistic bandit.}
The magnitude of the constant $\kappa$ is particularly significant for one GLB of crucial practical interest: the logistic bandit. In this case, the link function of the GLB is the sigmoid function, resulting in a highly non-linear reward model. Hence, the associated problem-dependent constant $\kappa$ is large even in typical instances. While this reduces the interest of existing guarantees for the logistic bandit, previous work suggests that there is room for improvement.
In the Bayesian setting and under a slightly more specific logistic bandit instance, \citet{dongon2019} proposed a refined analysis of Thompson Sampling. Their work suggest that in some problem instances, the impact on the regret of the diameter of the decision set (directly linked to $\kappa$) might be reduced.
In the frequentist setting, \citep[\S 4.2]{filippi2010parametric} conjectured that GLM-UCB can be slightly modified in the hope of enjoying an improved regret bound, deflated by a factor $\kappa^{1/2}$. To the best of our knowledge, this is still an open question. 

\paragraph{Contributions.}
In this work, we consider the logistic bandit problem and explicitly study its dependency with respect to $\kappa$. We propose a new non-linear study of optimistic algorithms for the logistic bandit. Our main contributions are : \textbf{1)} we answer positively to the conjecture of \citet{filippi2010parametric} showing that a slightly modified version of GLM-UCB enjoys a $\tilde{O}(d\sqrt{\kappa T})$ frequentist regret (Theorem~\ref{thm:regretofglmimproved}). \textbf{2)} Further, we propose a new algorithm with yet better dependencies in $\kappa$, showing that it can be pushed in a second-order term. This results in a $\tilde{O}(d\sqrt{T}+\kappa)$ regret bound (Theorem~\ref{thm:regretourbandit}). \textbf{3)} A key ingredient of our analysis is a new Bernstein-like inequality for self-normalized martingales, of independent interest (Theorem~\ref{thm:bernsteinselfnormalized}).


\section{Preliminaries}

\paragraph{Notations}
For any vector $x\in\mbb{R}^d$ and any positive definite matrix $\mbold{M}\in\mbb{R}^{d\times d}$, we will note $\mnorm{x}{M}=\sqrt{x^\transp\mbold{M}x}$ the $\ell^2$-norm of $x$ weighted by $\mbold{M}$, and $\lambda_{\text{min}}(\mbold{M})>0$ the smallest eigenvalue of $\mbold M$. For two symmetric matrices $\mbold{A}$ and $\mbold{B}$, $\mbold{A}\succ \mbold{B}$ means that $\mbold{A}-\mbold{B}$ is positive semi-definite. We will denote $\mcal{B}_p(d)=\left\{x\in\mbb{R}^d: \lVert x \rVert_p \leq 1\right\}$ the $d$-dimensional ball of radius 1 under the norm $\ell^p$. For two real-valued functions $f$ and $g$ of a scalar variable $t$, we will use the notation $f_t = \tilde{\mcal{O}}_t(g_t)$ to indicate that $f_t = \mcal{O}(g_t)$ up to logarithmic factor in $t$. For an univariate function $f$ we will denote $\dot{f}$ its derivative.

\subsection{Setting}
\label{sec:setting}
We consider the stochastic contextual bandit problem.
At each round $t$, the agent observes a context and is presented a set of actions $\mcal{X}_t$ (dependent on the context, and potentially infinite). The agent then selects an action $x_t\in\mathcal{X}_t$ and receives a reward $r_{t+1}$. Her decision is based on the information gathered until time $t$, which can be formally encoded in the filtration $\mathcal{F}_t\defeq \left(\mathcal{F}_0,\sigma(\{x_s, r_{s+1}\}_{s=1}^{t-1})\right)$ where $\mathcal{F}_0$ represents any prior knowledge. In this paper, we assume that conditionally on the filtration $\mathcal{F}_t$, the reward $r_{t+1}$ is binary, and is drawn from a Bernoulli distribution with parameter $\mu(x_t^\transp\theta_*)$. The \emph{fixed} but \emph{unknown} parameter $\theta_*$  belongs to $\mbb{R}^d$, and $\mu(x)\defeq (1+\exp(-x))^{-1}$ is the sigmoid function. Formally:
\begin{equation}
\begin{aligned}
    \mathbb{P}\left(r_{t+1} = 1\;\vert\; x_t,\mathcal{F}_t \right) &=  \mu\left(x_t^\transp\theta_*\right)
\end{aligned}
\label{eq:reward_model}
\end{equation}
Let $x_*^t\defeq \argmax_{x\in\mathcal{X}_t}\mu\left(x^\transp\theta_*\right)$ be the optimal arm. When pulling an arm, the agent suffers an instant \emph{pseudo-regret} equal to the difference in expectation between the reward of the optimal arm $x_*^t$ and the reward of the played arm $x_t$. The agent's goal is to minimize the \emph{cumulative} pseudo-regret up to time $T$, defined as:
\begin{align*}
    R_T &\defeq \sum_{t=1}^T \mu\left(\theta_*^\transp x_*^t\right) - \mu\left(\theta_*^\transp x_t\right).
\end{align*}

Following \cite{filippi2010parametric}, we work under the subsequent assumptions on the problem structure, necessary for the study of GLBs\footnote{Assumption~\ref{asm:arm.set} is made for ease of exposition and can be easily relaxed to $\ltwo{x} \leq X$.} .
\begin{ass}[Bandit parameter]
\label{asm:param}
$\theta_* \in \Theta$ where $\Theta$ is a compact subset of $\mbb{R}^d$. Further, $S\defeq \max_{\theta\in\Theta}\ltwo{\theta}$ is known.
\end{ass}
\begin{ass}[Arm set]
\label{asm:arm.set}
Let $\mcal{X}=\bigcup_{t=1}^\infty \mcal{X}_t$. For all $x\in\mcal{X}$, $\ltwo{x}\leq 1$.
\end{ass}
We let $L=M=1/4$ be the upper-bounds on the first and second derivative of the sigmoid function respectively. 
Finally, we formally introduce the parameter $\kappa$ which quantifies the degree of non-linearity of the sigmoid function over the decision set $(\mcal{X},\Theta)$:
\begin{align}
\kappa \defeq \sup_{x\in\mcal{X},\theta\in\Theta} 1/\dot{\mu}(x^\transp\theta).
\label{eq:kappadef}
\end{align}
This key quantity and its impact are discussed in Section~\ref{sec:challenges}.

\subsection{Reminders on optimistic algorithms}
\label{sec:optimistic}
At round $t$, for a given estimator $\theta_t$ of $\theta_*$ and a given exploration bonus $\epsilon_t(x)$, we consider optimistic algorithms that play:
$$
    x_t = \argmax_{x\in\mcal{X}_t} \mu(\theta_t^\transp x) + \epsilon_t(x)
$$
We will denote $\Delta^{\text{pred}}(x,\theta_t)\defeq \left\vert \mu(x^\transp \theta_*)-\mu(x^\transp \theta_t)\right\vert$ the \emph{prediction error} of $\theta_t$ at $x$. It is known that setting the bonus to be an upper-bound on the prediction error naturally gives a control on the regret. Informally:
\begin{align*}
\Delta^{\text{pred}}(x,\theta_t)\leq \epsilon_t(x)
~~\Longrightarrow~~ 
R_T \leq 2\sum_{t=1}^{T}\epsilon_t(x_t).
\end{align*}
This implication is classical and its proof is given in Section~\ref{subsec:regret_decomposition} in the supplementary materials. As usual in bandit problems, tighter predictions bounds on $\Delta^{\text{pred}}(x,\theta_t)$ lead to smaller exploration bonus and therefore better regret guarantees, as long as the sequence of bonus can be shown to cumulate sub-linearly. Reciprocally, using large bonus leads to over-explorative algorithms and consequently large regret.

\subsection{Maximum likelihood estimate}
In the logistic setting, a natural way to compute an estimator for $\theta_*$ given $\mathcal{F}_t$ derives from the maximum-likelihood principle. At round $t$, the regularized log-likelihood (or negative cross-entropy loss) can be written as:
\begin{align*}
    \mcal{L}_t^\lambda(\theta) = \sum_{s=1}^{t-1} \Big[r_{s+1}\log\mu(x_s^\transp\theta)+(1-r_{s+1})\log(1-\mu(x_s^\transp\theta))\Big] -\frac{\lambda}{2}\ltwo{\theta}^2.
\end{align*}
$\mcal{L}_t^{\lambda}$ is a strictly concave function of $\theta$ for $\lambda>0$, and the maximum likelihood estimator is defined as $\hat{\theta}_t\defeq \argmax_{\theta\in\mbb{R}^d} \mcal{L}_t^\lambda(\theta)$. In what follows, for $t\geq 1$ and $\theta\in\mbb{R}^d$ we define $g_t(\theta)$ such as:
\begin{align}
    \nabla_\theta \mcal{L}_t^{\lambda}(\theta) = \sum_{s=1}^{t-1}r_{s+1}x_s - \Big(\underbrace{\sum_{s=1}^{t-1}\mu(x_s^\transp \theta)x_s+\lambda\theta}_{{\textstyle{\defeq g_t(\theta)}}}\Big).
    \label{eq:defgt}
\end{align}
We also introduce the Hessian of the negative log-loss:
\begin{align}
\label{eq:defht}
    \mbold{H}_t(\theta) \defeq \sum_{s=1}^{t-1}\dot{\mu}(x_s^\transp\theta)x_sx_s^\transp + \lambda\mbold{I}_d,
\end{align}
as well as the design-matrix $\mbold{V}_t\defeq \sum_{s=1}^{t-1}x_sx_s^\transp + \kappa\lambda\mbold{I}_d$. 

The negative log-loss $\mcal{L}_t^{\lambda}(\theta)$ is known to be a \emph{generalized self-concordant} function \cite{bach2010self}. For our purpose this boils down to the fact that $\vert \ddot\mu\vert \leq \dot\mu$.

\section{Challenges and contributions}
\label{sec:challenges}

\begin{figure*}[t]
\vspace{1cm}

\centering
   \subcaptionbox*{}{
    \begin{tikzpicture}[scale=0.8]
     \begin{axis}[
        axis x line=center,
        axis y line=center,
        xtick=\empty,
        ytick=\empty,
        scaled ticks=false,
        xmin=-4,
        xmax=4,
        ymin=-0.2,
        ymax=1.1,
        xlabel=,
        ylabel=,
   ]
    \path[name path=axis] (axis cs:-1.2,0) -- (axis cs:1.2,0);
     \node[] at (axis cs:-2.0,0.75) {$\pmb{\boxed{\color{blue}\kappa = 5}}$};
     \draw[dotted, line width= 0.3mm] (axis cs: 1.2,0) -- (axis cs:1.2,0.76);
     \draw[dotted, line width= 0.3mm] (axis cs: -1.2,0) -- (axis cs:-1.2,0.23);
     \addplot[domain=-4:4, black, ultra thick,smooth, name path=f] {1/(e^(-x)+1)};
     \addplot[gray!25, fill opacity=0.5] fill between[of=f and axis, soft clip={domain=-1.2:1.2}];
    \node[circle,fill,inner sep=1pt] at (axis cs:1.2,0) {};
    \node[label=270:{{$\max_{x,\theta}x^T\theta$}},,inner sep=1pt] at (axis cs:1.5,0) {};
    \node[circle,fill,inner sep=1pt] at (axis cs:-1.2,0) {};
    \node[label=270:{{$\min_{x,\theta}x^T\theta$}},inner sep=1pt] at (axis cs:-1.5,0) {};
    \addplot[domain=-1:2.5,blue, densely dashed, line width=2] {0.18*(x-1.2)+0.78};
    
    \end{axis}
    \end{tikzpicture}
    }
    \hspace{25pt}
    \subcaptionbox*{}{
    \begin{tikzpicture}[scale=0.8]
     \begin{axis}[
        axis x line=center,
        axis y line=center,
        xtick=\empty,
        ytick=\empty,
        scaled ticks=false,
        xmin=-4,
        xmax=4,
        ymin=-0.2,
        ymax=1.1,
        xlabel=,
        ylabel=,
   ]
     \node[] at (axis cs:-2.0,0.75) {$\boxed{\color{blue}\begin{aligned} \pmb{\kappa}&\pmb{=1000} \\ &\pmb{\sim \exp(z)} \end{aligned}}$};
     \draw[dotted, line width= 0.3mm] (axis cs: 2.0,0) -- (axis cs:2.0,0.997);
      \node[label=270:{{${\max_{x,\theta}x^T\theta}\color{blue}\pmb{=z}$}},circle,fill,inner sep=1pt] at (axis cs:2.0,0) {};
      \node[label=270:{{$\min_{x,\theta}x^T\theta$}},circle,fill,inner sep=1pt] at (axis cs:-2.0,0) {};
       \path[name path=axis] (axis cs:-2.0,0) -- (axis cs:2.0,0);
    \addplot[domain=-4:4, black, ultra thick,smooth, name path=f] {1/(e^(-3*x)+1)};
    \addplot[gray!25, fill opacity=0.5] fill between[of=f and axis, soft clip={domain=-2:2}];
    \addplot[domain=-0.5:4,blue, densely dashed, line width=2] {0.015*(x-2)+1};
    \end{axis}
    \end{tikzpicture}
    }
    \caption{Visualization of the reward signal for different arm-sets and parameter-sets. Left: $\kappa$ is small as the agent mostly plays in the linear part of the sigmoid, a case of little practical interest. Right: $\kappa$ is significantly larger as the agent plays on a larger  spectrum of the sigmoid. This case is more realistic as there exists both actions of very high and very low value.}
    \label{fig:sigmoidcmu}
\end{figure*}
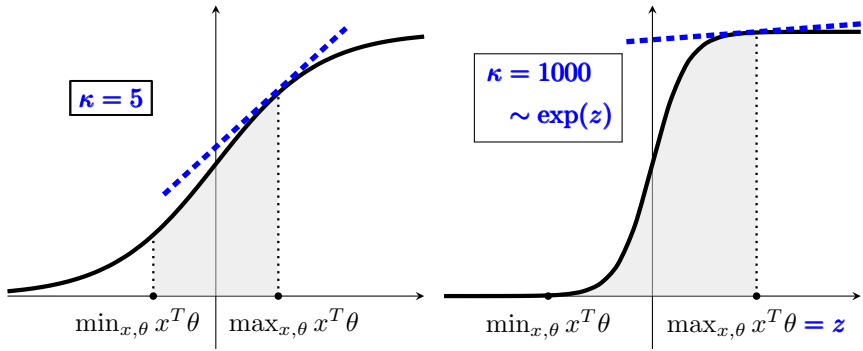

\paragraph{On the scaling of $\pmb{\kappa}$.} First, we 
stress 
the problematic scaling of $\kappa$ (defined in Equation~\eqref{eq:kappadef}) with respect to the size of the decision set $\mcal{X}\times\Theta$. As illustrated in Figure~\ref{fig:sigmoidcmu}, the dependency is exponential and hence prohibitive. From the definition of $\kappa$ and the definition of the sigmoid function, one can easily see that:
\begin{equation}
    \kappa \geq \exp\left(\max_{x\in\mcal{X}}\vert x^\transp\theta_*\vert\right).
    \label{eq:kappalogodds}
\end{equation}
The quantity $x^\transp \theta_*$ is directly linked to the probability of receiving a reward when playing $x$. As a result, this lower bound stresses that $\kappa$ will be exponentially large as soon as there exists bad (resp. good) arms $x$ associated with a low (resp. high) probability of receiving a reward. This is unfortunately the case of most logistic bandit applications. For instance, it stands as the standard for click predictions, since the probability of observing a click is usually low (and hence $\kappa$ is large). Typically, in this setting, $\mbb{P}(\text{click})=10^{-3}$ and therefore $\kappa\sim 10^3$. As all existing algorithms display a linear dependency with $\kappa$ (see Table~\ref{tab:regret_comparison}), this narrows down the class of problem they can efficiently address. On the theoretical side, this indicates that the current analyses fail to handle the regime where the reward function is significantly non-linear, which was the primary purpose of extending LB to GLB. Note that \eqref{eq:kappalogodds} is only a lower-bound on $\kappa$. In some settings $\kappa$ can be even larger: for instance when $\mcal{X}=\mcal{B}_2(d)$, we have $\kappa \geq  \exp(S)$. Even for reasonable values of $S$, this has a disastrous impact on the regret bounds.

\begin{table*}[t]
    \centering
    \begin{tabular}{|c||c|c|}
    \hline
         \textbf{Algorithm} & \textbf{Regret Upper Bound} & \textbf{Note}   \\
         \hline
         \begin{tabular}{c} GLM-UCB\\ \cite{filippi2010parametric}\end{tabular} &  $\mcal{O}\left({\color{black}\pmb{\kappa}}\cdot d \cdot T^{1/2}\cdot \log(T)^{3/2}\right)$& GLM\\ 
         \hline
         \begin{tabular}{c} Thompson Sampling \\ \cite{abeille2017linear}\end{tabular}& $\mcal{O}\left({\color{black}\pmb{\kappa}}\cdot d^{3/2} \cdot T^{1/2} \log(T)\right)$ & GLM\\
         \hline
         \begin{tabular}{c} SupCB-GLM\footnotemark \\ \cite{li2017provably}\end{tabular}& $\mcal{O}\left({\color{black}\pmb{\kappa}}\cdot (d\log K)^{1/2} \cdot T^{1/2} \log(T)\right)$ & GLM, $K$ actions \\
         \hline
         \begin{tabular}{c}\glmimproved \\ \textbf{(this paper)} \end{tabular} & $\mcal{O}\left({\color{black}\pmb{\kappa^{1/2}}}\cdot d \cdot T^{1/2} \log(T)\right)$& Logistic model\\
         \hline
         \begin{tabular}{c}\ourbandit\\ \textbf{(this paper)}\end{tabular} & $\mcal{O}\left( d \cdot T^{1/2} \log(T)+{\color{black}\pmb{\kappa}}\cdot d^2\cdot \log(T)^2\right)$& Logistic model\\
         \hline
    \end{tabular}
    \caption{Comparison of frequentist regret guarantees for the logistic bandit with respect to $\kappa$, $d$ and $T$. $\kappa$ is problem-dependent, and can be prohibitively large even for reasonable problem instances.}
    \label{tab:regret_comparison}
\end{table*}

\paragraph{Uniform vs local control over $\pmb{\dot\mu}$.}
The presence of $\kappa$ in existing regret bounds is inherited from the \emph{learning} difficulties that arise from the logistic regression. Namely, when $\theta_*$ is large, repeatedly playing actions that are closely aligned with $\theta_*$ (a region where $\dot\mu$ is close to 0) will almost always lead to the same reward. This makes the estimation of $\theta_*$ in this direction \emph{hard}. However, this should not impact the regret, as in this region the reward function is \emph{flat}. Previous analyses ignore this fact, as they don't study the reward function \emph{locally} but globally. More precisely, they use both uniform upper ($L$) and lower bounds ($\kappa^{-1}$) for the derivative of the sigmoid $\dot\mu$. Because they are not attained at the same point, at least one of them is loose.
Alleviating the dependency in $\kappa$ thus calls for an analysis and for algorithms that better handle the non-linearity of the sigmoid, switching from a uniform to a local analysis. 
As mentioned in Section~\ref{sec:optimistic}, a thorough control on the \emph{prediction error} $\Delta^{\text{pred}}$ is key to a tight design of optimistic algorithm. The challenge therefore resides in finely handling the locality when controlling the prediction error.

\footnotetext{\citet{li2017provably} uses a definition for $\kappa$ which slightly differs from ours. However, it exhibits the same scaling in $\max\vert x^\transp \theta_*\vert$. We keep this notation to ease discussions.}

\paragraph{On \citet{filippi2010parametric}'s conjecture.} 
In their seminal work, \cite{filippi2010parametric} provided a prediction bound scaling as $\kappa$, directly impacting the size of the bonus. They however hint, by using an asymptotic argument, that this dependency could be reduced to a $\sqrt{\kappa}$. This suggest that a first limitation resides in their concentration tools. 
To this end, we introduce a novel Bernstein-like self-normalized martingale tail-inequality (Theorem~\ref{thm:bernsteinselfnormalized}) of potential independent interest. Coupled with a generalized self-concordant analysis, we give a formal proof of Filippi's asymptotic argument in the finite-time, adaptive-design case (Lemma~\ref{lemma:lemmapredictionbound1}). We leverage this refined prediction bound to introduce \glmimproved. We show that it suffers at most a regret in $\bigO{d\sqrt{\kappa T}}$ (Theorem~\ref{thm:regretofglmimproved}), improving previous guarantees by $\sqrt{\kappa}$. Our novel Bernstein inequality, together with the generalized self-concordance property of the log-loss are key ingredients of local analysis, which allows to compare the derivatives of the sigmoid function at two different points without using $L$ and $\kappa^{-1}$.

\paragraph{Dropping the $\pmb{\kappa}$ dependency.} 
Further challenge is to get rid of the remaining $\sqrt{\kappa}$ factor from the regret. This in turns requires to eliminate it from the bonus of the algorithm. We show that this can be done by pushing $\kappa$ to a second order term in the prediction bound (Lemma~\ref{lemma:predictionboundtwo}). Coupled with careful algorithmic design, this yields \ourbandit, for which we show a $\bigO{d\sqrt{T}+\kappa\log T}$ regret bound (Theorem~\ref{thm:regretourbandit}), where the dependency in $\kappa$ is removed from the leading term.

\paragraph{Outline of the following sections.} Section~\ref{sec:prediction} focuses on exhibiting improved upper-bound on prediction errors. We describe our algorithms and their regret bound in Section~\ref{sec:algos}. Finally, we discuss our results and their implications in Section~\ref{sec:discussion}.




\section{Improved prediction guarantees}
\label{sec:prediction}

This section focuses on the first challenge of the logistic bandit analysis, and aims to provide tighter prediction bounds for the logistic model. Bounding the prediction error relies on building tight confidence sets for $\theta_*$, and our first contribution is to provide more adapted concentration tools to this end. Our new tail-inequality for self-normalized martingale allows to construct such confidence sets with better dependencies with respect to $\kappa$. 

\subsection{New  tail-inequality for self-normalized martingales}

We present here a new, Bernstein-like tail inequality for self-normalized vectorial martingales. This inequality extends known results on self-normalized martingales \citep{de2004self, abbasi2011improved}. Compared to the concentration inequality from Theorem 1 of \cite{abbasi2011improved}, its main novelty resides in considering martingale increments that satisfy a Bernstein-like condition instead of a sub-Gaussian condition. This allows to derive tail-inequalities for martingales "re-normalized" by their quadratic variation.

\begin{restatable}[]{thm}{bernsteinselfnormalized}
\label{thm:bernsteinselfnormalized}
Let $\{\mcal{F}_t\}_{t=1}^\infty$ be a filtration. Let $\{x_t\}_{t=1}^{\infty}$ be a stochastic process in $\mcal{B}_2(d)$ such that $x_t$ is $\mcal{F}_{t}$ measurable. Let $\{\varepsilon_{t}\}_{t=2}^\infty$ be a martingale difference sequence such that $\varepsilon_{t+1}$ is $\mcal{F}_{t+1}$ measurable. Furthermore, assume that conditionally on $\mcal{F}_t$ we have $ \vert \varepsilon_{t+1}\vert \leq 1$ almost surely, and note $\sigma_t^2 \defeq \mbb{E}\left[\varepsilon_{t+1}^2\vert \mcal{F}_t \right]$. Let $\lambda>0$ and for any $t\geq 1$ define:
\begin{align*}
\mbold{H}_t\defeq\sum_{s=1}^{t-1}\sigma_s^2 x_sx_s^T + \lambda\mbold{I}_d, \qquad S_t\defeq \sum_{s=1}^{t-1} \varepsilon_{s+1}x_s.
\end{align*}
Then for any $\delta\in(0,1]$:
\begin{align*}
\mbb{P}\Bigg(\exists t\geq 1, \, \left\lVert S_t\right\rVert_{\mbold{H}_t^{-1}} \!\geq\! \frac{\sqrt{\lambda}}{2}\!+\!\frac{2}{\sqrt{\lambda}}\log\!\left(\frac{\det\left(\mbold{H_t}\right)^{\frac{1}{2}}\!\lambda^{-\frac{d}{2}}}{\delta}\right)+\frac{2}{\sqrt{\lambda}}d\log(2)\Bigg)\leq \delta.
\end{align*}
\end{restatable}
\begin{proof}
The proof is deferred to Section~\ref{sec:proof_bernsteinselfnormalized} in the supplementary materials. It follows the steps of the pseudo-maximization principle introduced in \cite{de2004self}, used by \cite{abbasi2011improved} for the linear bandit and thoroughly detailed in Chapter 20 of \cite{lattimore2018bandit}. The main difference in our analysis comes from the fact that we consider another super-martingale, which adds complexity to the analysis.
\end{proof}

\paragraph{Comparison to prior work} The closest inequality of this type was derived by \citet{abbasi2011improved} to be used for the linear bandit setting. Namely, introducing $\omega \defeq \inf_s \sigma_s^2$, it can be extracted from their Theorem 1 that that with probability at least $1-\delta$ for all $t\geq 1$:
\begin{align}
\label{eq:concentrationyasin}
    \left\lVert S_t\right\rVert_{\mbold{V}_t^{-1}} \leq \sqrt{2d\log\left(1+\frac{\omega t}{\lambda d}\right)},
\end{align}
where $\mbold{V}_t=\sum_{s=1}^{t-1}x_sx_s^T + (\lambda/\omega)\mbold{I}_d$.
Note that this result can be used to derive another high-probability bound on $\left\lVert S_t\right\rVert_{\mbold{H}_t^{-1}}$. Indeed notice that $\mbold{H}_t\succeq \omega \mbold{V}_t$, which yields that with probability at least $1-\delta$:
\begin{align}
    \left\lVert S_t\right\rVert_{\mbold{H}_t^{-1}} \leq \frac{1}{\sqrt{\omega}}\sqrt{2d\log\left(1+\frac{\omega t}{\lambda d}\right)}.
    \label{eq:csababoundonSt}
\end{align}
In contrast the bound given by Theorem~\ref{thm:bernsteinselfnormalized} gives that with high-probability $\left\lVert S_t\right\rVert_{\mbold{H}_t^{-1}} = \mcal{O}\left(d\log(t)\right)$ which is independent of $\omega$. This saves up the multiplicative factor $1/\sqrt{\omega}$, which is potentially very large if some $\varepsilon_s$ have small conditional variance. However, it is lagging by a $\sqrt{d\log(t)}$ factor behind the bound provided in \eqref{eq:csababoundonSt}. This issue can be fixed by simply adjusting the regularization parameter. More precisely, for a given horizon $T$, Theorem~\ref{thm:bernsteinselfnormalized} ensure that choosing a regularization parameter $\lambda= d\log(T)$ yields that on a high-probability event, for all $t\leq T$:
\begin{align*}
    \left\lVert S_t\right\rVert_{\mbold{H}_t^{-1}} = \mcal{O}\left(\sqrt{d\log(T)}\right).
\end{align*}
In this case, our inequality is a \emph{strict} improvement over previous ones, which involved the scalar $\omega$.

\subsection{A new confidence set}
\label{sec:newconfidenceset}
\begin{figure*}
\centering
\begin{subfigure}{0.49\linewidth}
    \centering
    \includegraphics[width=\linewidth]{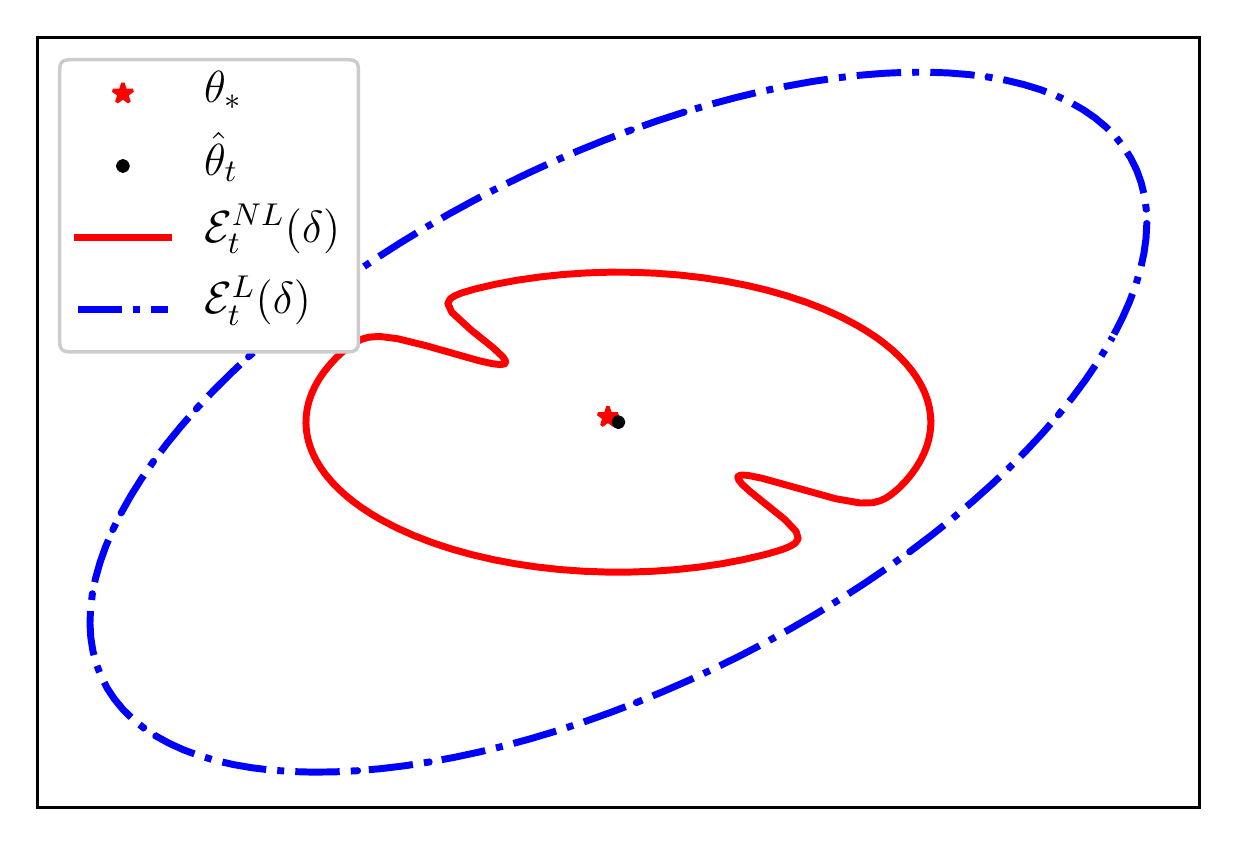}
    \caption{$\kappa=50$, $\delta=0.05$}
    \end{subfigure}
    \begin{subfigure}{0.49\linewidth}
    \includegraphics[width=\linewidth]{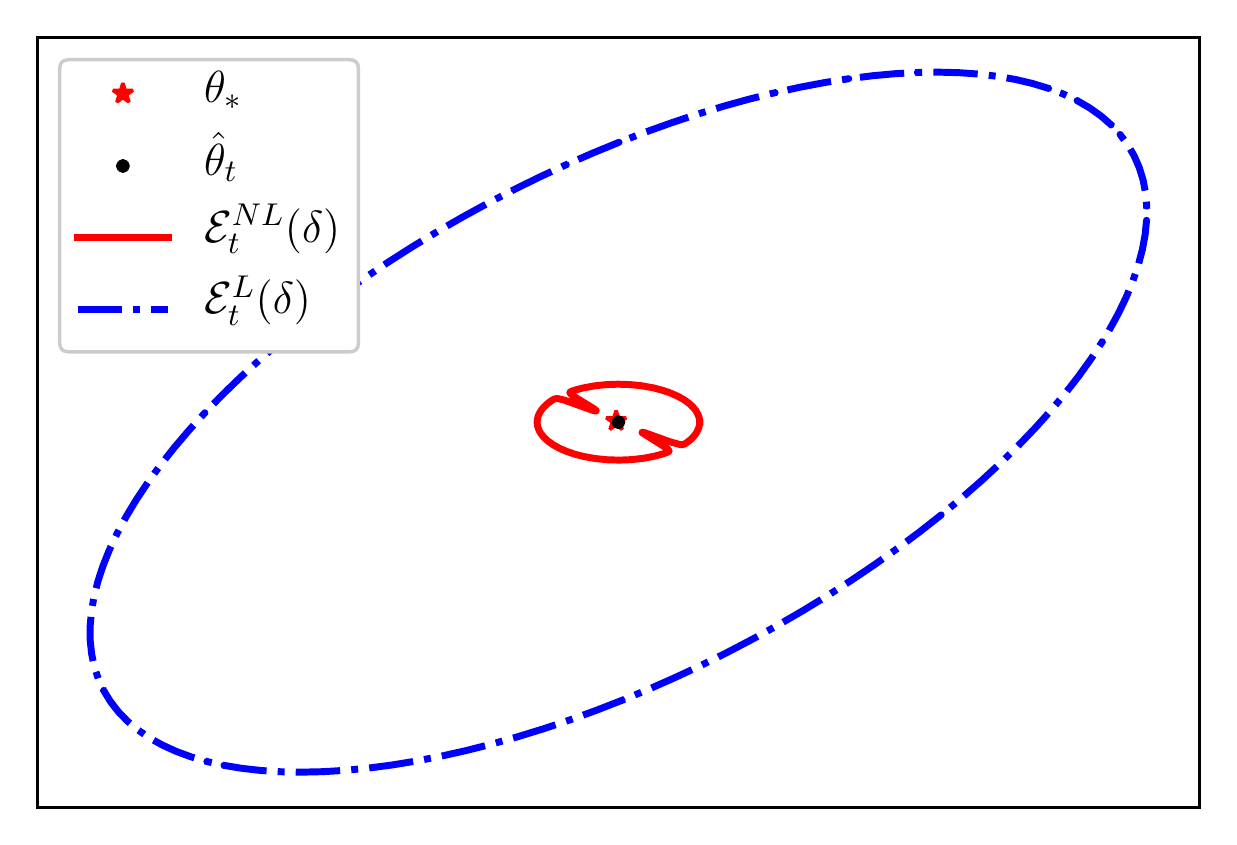}
    \caption{$\kappa=500$, $\delta=0.05$}
    \end{subfigure}
    \caption{Visualization of $\mcal{E}_t^{\text{L}}(\delta)$ and $\mcal{E}_t^{\text{NL}}(\delta)$ for different values of $\kappa$. On both figures, a direction is over-sampled to highlight the non-linear nature of $\mcal{E}_t^{\text{NL}}(\delta)$. As $\kappa$ grows, the difference in diameter between $\mcal{E}_t^{\text{L}}(\delta)$ and $\mcal{E}_t^{\text{NL}}(\delta)$ increases.}
    \label{fig:confidence_sets}
\end{figure*}

We now use our new concentration inequality (Theorem~\ref{thm:bernsteinselfnormalized}) to derive a confidence set on $\theta_*$ that in turns will lead us to upper bounds on the prediction error. We introduce:
\begin{align*}
    \mcal{C}_t(\delta) \!\defeq\! \left\{ \theta\!\in\!\Theta,\, \left\lVert g_t(\theta)-g_t(\hat{\theta}_t)\right\rVert_{\mbold{H}_t^{-1}(\theta)}\leq \gamma_t(\delta)\right\},
\end{align*}
with $g_t$ defined in \eqref{eq:defgt}, $\mbold{H}_t$ in \eqref{eq:defht}, and where
\begin{align*}
    \gamma_t(\delta)\!\defeq \!\sqrt{\lambda}(S\!+\!\frac{1}{2})\!+\! \frac{2}{\sqrt{\lambda}}\!\log\left(\!\frac{2^d}{\delta}\!\left(1\!+\!\frac{Lt}{d\lambda}\right)^\frac{d}{2}\right).
\end{align*}

A straight-forward application of Theorem~\ref{thm:bernsteinselfnormalized} proves that the sets  $\mcal{C}_t(\delta)$ are confidence sets for $\theta_*$.
\begin{restatable}[]{lemma}{lemmaconfidenceset}
\label{lemma:confidenceset}
    Let $\delta\in(0,1]$ and
    \begin{align*}
        E_\delta \defeq \{\forall t\geq 1, \, \theta_*\in\mcal{C}_t(\delta)\}.
    \end{align*}
    Then $\mbb{P}\left(E_\delta\right)\geq 1-\delta$.
\end{restatable}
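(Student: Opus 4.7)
The plan is to apply Theorem~\ref{thm:bernsteinselfnormalized} to a suitable martingale difference sequence built from the Bernoulli noise, and to match the resulting bound to the definition of $\gamma_t(\delta)$. Define
\[
\varepsilon_{s+1} \defeq r_{s+1} - \mu(x_s^\transp\theta_*), \qquad S_t \defeq \sum_{s=1}^{t-1}\varepsilon_{s+1}x_s.
\]
By \eqref{eq:reward_model}, $\{\varepsilon_{s+1}\}$ is a $\{\mathcal{F}_{s+1}\}$-adapted martingale difference sequence, bounded by $1$ in absolute value, and with conditional variance $\sigma_s^2 = \mu(x_s^\transp\theta_*)(1-\mu(x_s^\transp\theta_*)) = \dot\mu(x_s^\transp\theta_*)$. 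Consequently the matrix $\mathbf{H}_t$ appearing in Theorem~\ref{thm:bernsteinselfnormalized} coincides exactly with $\mathbf{H}_t(\theta_*)$ defined in \eqref{eq:defht}, which is the key reason Bernstein-type concentration is the natural tool here.

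First I would exploit the first-order optimality condition for $\hat\theta_t$. Setting $\nabla_\theta\mathcal{L}_t^\lambda(\hat\theta_t)=0$ in \eqref{eq:defgt} yields $g_t(\hat\theta_t)=\sum_{s=1}^{t-1}r_{s+1}x_s$, so
\[
g_t(\theta_*) - g_t(\hat\theta_t) = \sum_{s=1}^{t-1}\bigl(\mu(x_s^\transp\theta_*)-r_{s+1}\bigr)x_s + \lambda\theta_* = -S_t + \lambda\theta_*.
\]
Then, by the triangle inequality in the $\mathbf{H}_t^{-1}(\theta_*)$-norm,
\[
\bigl\lVert g_t(\theta_*)-g_t(\hat\theta_t)\bigr\rVert_{\mathbf{H}_t^{-1}(\theta_*)} \leq \lVert S_t\rVert_{\mathbf{H}_t^{-1}(\theta_*)} + \lambda\lVert\theta_*\rVert_{\mathbf{H}_t^{-1}(\theta_*)}.
\]
The regularization term is controlled crudely via $\mathbf{H}_t(\theta_*)\succeq \lambda \mathbf{I}_d$ and Assumption~\ref{asm:param}, giving $\lambda\lVert\theta_*\rVert_{\mathbf{H}_t^{-1}(\theta_*)}\leq \sqrt{\lambda}\,S$.

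Next I would apply Theorem~\ref{thm:bernsteinselfnormalized} to bound $\lVert S_t\rVert_{\mathbf{H}_t^{-1}(\theta_*)}$ uniformly in $t$ on an event of probability at least $1-\delta$. To obtain the explicit form of $\gamma_t(\delta)$, the only remaining step is to upper bound $\det(\mathbf{H}_t(\theta_*))$ in terms of $t$ and $d$. Since $\dot\mu\leq L$, one has $\mathrm{tr}(\mathbf{H}_t(\theta_*))\leq d\lambda + L\sum_{s=1}^{t-1}\lVert x_s\rVert_2^2 \leq d\lambda + L(t-1)$, and the AM--GM inequality (determinant--trace inequality) then yields
\[
\det(\mathbf{H}_t(\theta_*))^{1/2}\lambda^{-d/2} \leq \left(1+\frac{Lt}{d\lambda}\right)^{d/2}.
\]
Combining this with the tail bound and the $d\log(2)/\sqrt{\lambda}$ term absorbed as $2^d$ inside the logarithm, the right-hand side matches $\sqrt{\lambda}(S+\tfrac{1}{2}) + (2/\sqrt{\lambda})\log\bigl(\tfrac{2^d}{\delta}(1+Lt/(d\lambda))^{d/2}\bigr) = \gamma_t(\delta)$, concluding that $\theta_*\in\mathcal{C}_t(\delta)$ simultaneously for all $t\geq 1$ on the event $E_\delta$.

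No step is genuinely hard here: the result is essentially a direct application of Theorem~\ref{thm:bernsteinselfnormalized}, the entire non-trivial work having been done in establishing that theorem. The only mild subtlety to watch is that the $\mathbf{H}_t$ appearing in the concentration result must be evaluated at the \emph{true} parameter $\theta_*$ (so that the conditional variances of the Bernoulli noise line up with $\dot\mu(x_s^\transp\theta_*)$); this is precisely why the confidence set is defined through $\mathbf{H}_t^{-1}(\theta)$ evaluated at the candidate $\theta$.
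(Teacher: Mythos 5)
Your proposal is correct and follows essentially the same route as the paper's proof: the first-order optimality condition for $\hat{\theta}_t$ gives $g_t(\hat{\theta}_t)-g_t(\theta_*)=S_t-\lambda\theta_*$, the triangle inequality with $\mbold{H}_t(\theta_*)\succeq\lambda\mbold{I}_d$ absorbs the regularization term as $\sqrt{\lambda}S$, and Theorem~\ref{thm:bernsteinselfnormalized} combined with the determinant--trace bound $\det(\mbold{H}_t(\theta_*))\leq(\lambda+Lt/d)^d$ yields exactly $\gamma_t(\delta)$. Your closing observation that $\mbold{H}_t$ must be evaluated at $\theta_*$ so the conditional variances line up with $\dot{\mu}(x_s^\transp\theta_*)$ is precisely the key point the paper relies on.
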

\begin{sketchofproof}
We show $\left\lVert g_t(\theta_*\!)-g_t(\hat{\theta}_t\!)\right\rVert_{\mbold{H}_t^{-1}(\theta_*\!)}\!\!\leq\! \gamma_t(\delta)$ with probability at least $1-\delta$. As $\nabla\mcal{L}_t^\lambda(\hat{\theta}_t) = 0$, we have
\begin{align*}
    g_t(\hat{\theta}_t)-g_t(\theta_*) = \sum_{s=1}^{t-1}\underbrace{r_{s+1} - \mu(\theta_*^\transp x_s)}_{\defeq \varepsilon_{s+1}}x_s - \lambda \theta_*\,.
\end{align*}
This equality is obtained by using the characterization of $\hat{\theta}_t$ given by the log-loss. By \eqref{eq:reward_model}, $\{\varepsilon_{s+1}\}_{s=1}^\infty$ are centered Bernoulli variables with parameter $\mu(x_s^\transp \theta_*)$, and variance $\sigma_s^2 = \mu(x_s^\transp \theta_*)(1-\mu(x_s^\transp \theta_*))=\dot{\mu}(x_s^T\theta_*)$. Theorem~\ref{thm:bernsteinselfnormalized} leads to the claimed result up to some simple upper-bounding. The formal proof is deferred to Section~\ref{sec:proofconcentration} in the supplementary materials.
\end{sketchofproof}

\paragraph{Illustration of confidence sets.}
We provide here some intuition on how this confidence set helps us improve the prediction error upper-bound. To do so, and for the ease of exposition, we will consider for the remaining of this subsection the case when $\hat{\theta}_t\in\Theta$. We back our intuition on a slightly degraded but more comprehensible version of the upper-bound on the prediction error:
$$
\Delta^{\rm{pred}}(x,\theta) \leq L \lVert x\rVert_{{\bf H}_t^{-1}(\theta)} \lVert \theta - \theta_*\rVert_{{\bf H}_t(\theta)}.
$$
The regret guarantees of our algorithms can still be recovered from this cruder upper-bound, up to some multiplicative constants (for the sake of completeness, technical details are deferred to Section~\ref{sec:proofconfidenceset} in the appendix). The natural counterpart of $\mcal{C}_t$  that allows for controlling the second part of this decomposition is a marginally inflated confidence set,
\begin{align*}
    \mcal{E}_t^{\text{NL}}(\delta) &\!\defeq\! \left\{\!\theta\in\Theta,\left\lVert \theta - \hat{\theta}_t\right\rVert_{\mbold{H}_t(\theta)}\leq  (1+2S) \gamma_t(\delta)\right\}\,.
\end{align*}
It is important to notice (see Figure \ref{fig:confidence_sets}) that $\mcal{E}_t^{\text{NL}}(\delta)$ effectively handles the local curvature of the sigmoid function, as the metric $\mbold{H}_t(\theta)$ is \emph{local} and depends on $\theta$. This results in a confidence set that is not an ellipsoid, and that does not penalize all estimators in the same ways.

Using the same tools as for GLM-UCB, such as the concentration result reminded in \eqref{eq:concentrationyasin}, a similar reasoning leads to the confidence set
\begin{align*}
    \mcal{E}_t^\text{L}(\delta)\defeq  \left\{\theta\in\Theta, \,  \left \lVert \theta - \hat{\theta}_t\right\rVert_{\mbold{V}_t} \leq \kappa \beta_t(\delta)\right\}\,,
\end{align*}
where $\beta_t(\delta)$ is a slowly increasing function of $t$ with similar scaling as $\gamma_t$. Using global bounds on $\dot\mu$ leads to the appearance of $\kappa$ in $\mcal{E}_t^{\rm L}(\delta)$, illustrated by the large difference of diameter between the blue and red sets in Figure~\ref{fig:confidence_sets}. This highlights the fact that the local metric $\mbold{H}_t(\theta)$ is much better-suited than $\mbold{V}_t$ to measure distances between parameters. The intuition laid out in this section underlies the formal improvements on the prediction error bounds we provide in the following.

\subsection{Prediction error bounds}
\label{sec:proofpredictionbounds}
We are now ready to derive our new prediction guarantees, inherited from Theorem~\ref{thm:bernsteinselfnormalized}. 

We give a first prediction bound obtained by \emph{degrading} the local information carried by estimators in $\mcal{C}_t(\delta)$. This guarantee is conditioned on the good event $E_\delta$ (introduced in Lemma \ref{lemma:confidenceset}), which occurs with probability at least $1-\delta$.

\begin{restatable}[]{lemma}{lemmapredictionboundone}
\label{lemma:lemmapredictionbound1}
On the event $E_\delta$, for all $t\geq 1$, any $\theta\in\mcal{C}_t(\delta)$ and $x\in\mcal{X}$:
\begin{align*}
    \Delta^{{\rm pred}}(x,\theta) &\leq L\sqrt{4+8S}\sqrt{\kappa}\gamma_t(\delta) \left\lVert x\right\rVert_{\mbold{V}_t^{-1}}.
\end{align*}
\end{restatable}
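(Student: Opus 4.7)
The plan is to decompose $\Delta^{\text{pred}}(x,\theta)$ through a first-order expansion of $\mu$, introduce an integrated Hessian as a bridge between $\theta$ and $\theta_*$ in the geometry of the score map $g_t$, and then leverage the generalized self-concordance of the sigmoid together with the uniform lower bound $\dot\mu \geq 1/\kappa$ to transfer the resulting estimate from the local Hessian $\mbold{H}_t$ to the design matrix $\mbold{V}_t$. First, by the fundamental theorem of calculus applied to $v \mapsto \mu(x^\transp\theta + v\,x^\transp(\theta_* - \theta))$ together with $\dot\mu \leq L$, one obtains $\Delta^{\text{pred}}(x,\theta) \leq L\,|x^\transp(\theta - \theta_*)|$, reducing the task to controlling $|x^\transp(\theta - \theta_*)|$.

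Applying the same integration componentwise inside $g_t$ gives the exact identity
\begin{align*}
g_t(\theta) - g_t(\theta_*) = \widetilde{\mbold{H}}_t\,(\theta - \theta_*), \qquad \widetilde{\mbold{H}}_t \defeq \int_0^1 \mbold{H}_t\bigl(\theta_* + v(\theta - \theta_*)\bigr)\,dv.
\end{align*}
Inverting and applying Cauchy--Schwarz in the $\widetilde{\mbold{H}}_t^{-1}$-inner product yields $|x^\transp(\theta - \theta_*)| \leq \|x\|_{\widetilde{\mbold{H}}_t^{-1}}\cdot \|g_t(\theta) - g_t(\theta_*)\|_{\widetilde{\mbold{H}}_t^{-1}}$. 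The polynomial consequence $\dot\mu(y+h) \geq \dot\mu(y)/(1+|h|)$ of $|\ddot\mu| \leq \dot\mu$, combined with $|x_s^\transp(\theta-\theta_*)| \leq 2S$ for $x_s \in \mcal{X}$ and $\theta,\theta_*\in\Theta$, integrates to the matrix bounds $\widetilde{\mbold{H}}_t \succeq \tfrac{1}{1+2S}\mbold{H}_t(\theta)$ and $\widetilde{\mbold{H}}_t \succeq \tfrac{1}{1+2S}\mbold{H}_t(\theta_*)$. Combining the first with $\mbold{H}_t(\theta) \succeq \mbold{V}_t/\kappa$ (from $\dot\mu \geq 1/\kappa$ and the matching $\kappa\lambda$ regularization of $\mbold{V}_t$) gives $\|x\|_{\widetilde{\mbold{H}}_t^{-1}} \leq \sqrt{(1+2S)\kappa}\,\|x\|_{\mbold{V}_t^{-1}}$. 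For the residual, the triangle inequality $\|g_t(\theta)-g_t(\theta_*)\|_{\widetilde{\mbold{H}}_t^{-1}} \leq \|g_t(\theta)-g_t(\hat\theta_t)\|_{\widetilde{\mbold{H}}_t^{-1}} + \|g_t(\hat\theta_t)-g_t(\theta_*)\|_{\widetilde{\mbold{H}}_t^{-1}}$, followed by pivoting to $\mbold{H}_t^{-1}(\theta)$ and $\mbold{H}_t^{-1}(\theta_*)$ via the same self-concordance bounds, lets $\theta \in \mcal{C}_t(\delta)$ and $\theta_* \in \mcal{C}_t(\delta)$ (valid on $E_\delta$) each contribute a factor $\gamma_t(\delta)$.

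The main obstacle is the precise bookkeeping of the $\sqrt{1+2S}$ factors: naive multiplication of the two Cauchy--Schwarz estimates yields a bound of order $L\cdot 2(1+2S)\sqrt\kappa\,\gamma_t(\delta)\|x\|_{\mbold{V}_t^{-1}}$, whereas the stated constant $\sqrt{4+8S} = 2\sqrt{1+2S}$ demands that the self-concordance factor be charged only once rather than doubled across the two norms. Recovering the sharper constant requires a more careful combination of the bounds, for instance by working directly with the symmetric quadratic identity $\|\theta - \theta_*\|_{\widetilde{\mbold{H}}_t}^2 = (\theta - \theta_*)^\transp(g_t(\theta) - g_t(\theta_*))$ and exploiting membership in $\mcal{C}_t(\delta)$ without intermediate pivoting to $\mbold{H}_t^{-1}$, so that the self-concordance factor $1+2S$ enters only through the transfer $\widetilde{\mbold{H}}_t^{-1} \preceq (1+2S)\kappa\,\mbold{V}_t^{-1}$ on the $x$-side.
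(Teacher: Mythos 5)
Your argument follows the same route as the paper's proof: the mean-value bound $\Delta^{\rm pred}(x,\theta)\leq L\vert x^\transp(\theta-\theta_*)\vert$, the integrated Hessian $\widetilde{\mbold{H}}_t=\mbold{G}_t(\theta_*,\theta)$ with the identity $g_t(\theta)-g_t(\theta_*)=\widetilde{\mbold{H}}_t(\theta-\theta_*)$, Cauchy--Schwarz in that metric, the self-concordance comparison $\widetilde{\mbold{H}}_t\succeq(1+2S)^{-1}\mbold{H}_t(\theta)$ (resp. $\mbold{H}_t(\theta_*)$), and a triangle inequality through $\hat\theta_t$ so that each piece is controlled by $\gamma_t(\delta)$ via membership in $\mcal{C}_t(\delta)$. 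All of these steps are sound and are exactly the paper's ingredients.

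The one genuine gap is the constant, and your proposed repair points in the wrong direction. You lose the extra $\sqrt{1+2S}$ by routing the $x$-norm through $\mbold{H}_t(\theta)$, i.e.\ $\widetilde{\mbold{H}}_t^{-1}\preceq(1+2S)\mbold{H}_t^{-1}(\theta)\preceq(1+2S)\kappa\,\mbold{V}_t^{-1}$. But the self-concordance factor is not needed on the $x$-side at all: the integrand of $\widetilde{\mbold{H}}_t$ satisfies $\dot\mu\bigl(vx_s^\transp\theta+(1-v)x_s^\transp\theta_*\bigr)\geq\kappa^{-1}$ directly, because the argument lies between $x_s^\transp\theta$ and $x_s^\transp\theta_*$ and $\dot\mu$ is unimodal, so its minimum over the segment is attained at an endpoint, which is covered by the definition of $\kappa$; together with the $\kappa\lambda$ regularization of $\mbold{V}_t$ this gives $\widetilde{\mbold{H}}_t\succeq\kappa^{-1}\mbold{V}_t$ and hence $\lVert x\rVert_{\widetilde{\mbold{H}}_t^{-1}}\leq\sqrt{\kappa}\lVert x\rVert_{\mbold{V}_t^{-1}}$ with no $(1+2S)$. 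The factor $\sqrt{1+2S}$ then enters exactly once, on the residual side, where it is unavoidable: $\mcal{C}_t(\delta)$ is defined through the $\mbold{H}_t^{-1}(\cdot)$ norms, so after the triangle inequality you must pivot the first term to $\mbold{H}_t^{-1}(\theta_*)$ and the second to $\mbold{H}_t^{-1}(\theta)$, each at cost $\sqrt{1+2S}$, yielding $\lVert g_t(\theta)-g_t(\theta_*)\rVert_{\widetilde{\mbold{H}}_t^{-1}}\leq 2\sqrt{1+2S}\,\gamma_t(\delta)$ and the stated constant $L\sqrt{4+8S}\sqrt{\kappa}$. Your suggested alternative --- charging $(1+2S)$ only on the $x$-side and using $\mcal{C}_t(\delta)$ ``without intermediate pivoting'' --- cannot work as stated, precisely because the confidence set gives no control in the $\widetilde{\mbold{H}}_t^{-1}$ norm without that pivot.
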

In term of scaling with $\kappa$, note that Lemma~\ref{lemma:lemmapredictionbound1} improves the prediction bounds of \citet{filippi2010parametric} by a $\sqrt{\kappa}$. It therefore matches their asymptotic argument, providing its first rigorous proof in finite-time and for the adaptive-design case. The proof is deferred to Section~\ref{sec:prooflemmapredictionbound1} in the supplementary materials.

A more careful treatment of $\mcal{C}_t(\delta)$ naturally leads to better prediction guarantees, laying the foundations to build \ourbandit{}. This is detailed by the following Lemma.

\begin{restatable}[]{lemma}{lemmapredictiontwo}
\label{lemma:predictionboundtwo}
On the event $E_\delta$, for all $t\geq 1$, any $\theta\in\mcal{C}_t(\delta)$ and any $x\in\mcal{X}$:
\begin{align*}
    \Delta^{{\rm pred}}(x,\theta) \leq (2+4S)\dot{\mu}(x^\transp\theta)\left\lVert x\right\rVert_{\mbold{H}_t^{-1}(\theta)}\gamma_t(\delta) + (4+8S)M\kappa\gamma_t^2(\delta)\left\lVert x\right\rVert_{\mbold{V}_t^{-1}}^2.
\end{align*}
\end{restatable}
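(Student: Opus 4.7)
My plan is to start from a second-order Taylor expansion of $\mu$ around $x^\transp\theta$, using the uniform bound $|\ddot\mu|\leq M$:
\[
|\mu(x^\transp\theta_*) - \mu(x^\transp\theta)| \leq \dot\mu(x^\transp\theta)|x^\transp(\theta_*-\theta)| + \tfrac{M}{2}(x^\transp(\theta_*-\theta))^2.
\]
For the first (linear) term, Cauchy--Schwarz in the local metric gives $|x^\transp(\theta_*-\theta)|\leq \lVert x\rVert_{\mbold{H}_t^{-1}(\theta)}\lVert \theta_*-\theta\rVert_{\mbold{H}_t(\theta)}$. For the second (quadratic) term, Cauchy--Schwarz in the $\mbold{V}_t$ metric combined with the semi-definite inequality $\mbold{V}_t\preceq\kappa\mbold{H}_t(\theta)$ (which is immediate from $\dot\mu(x_s^\transp\theta)\geq 1/\kappa$ on $(\mcal{X},\Theta)$) yields $(x^\transp(\theta_*-\theta))^2 \leq \kappa\lVert x\rVert_{\mbold{V}_t^{-1}}^2\lVert\theta_*-\theta\rVert_{\mbold{H}_t(\theta)}^2$. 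Both bounds therefore collapse onto the problem of controlling the local-metric distance $\lVert\theta_*-\theta\rVert_{\mbold{H}_t(\theta)}$ by a multiple of $\gamma_t(\delta)$.

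To bridge the confidence set $\mcal{C}_t(\delta)$ to that distance, I would use the integral identity
\[
g_t(\theta_*)-g_t(\theta)=\mbold{G}_t(\theta_*,\theta)(\theta_*-\theta), \qquad \mbold{G}_t(\theta_*,\theta)\defeq\int_0^1 \mbold{H}_t(\theta+v(\theta_*-\theta))\,dv,
\]
together with the generalized self-concordance property $|\ddot\mu|\leq\dot\mu$ recalled at the end of Section~2. The latter implies $\dot\mu(x_s^\transp\theta_v)\geq \dot\mu(x_s^\transp\theta)\,e^{-v|x_s^\transp(\theta_*-\theta)|}$, and integrating in $v\in[0,1]$ together with the elementary inequality $(1-e^{-a})/a\geq 1/(1+a)$ and $|x_s^\transp(\theta_*-\theta)|\leq 2S$ delivers the symmetric matrix sandwich
\[
\mbold{G}_t(\theta_*,\theta)\succeq \tfrac{1}{1+2S}\mbold{H}_t(\theta), \qquad \mbold{G}_t(\theta_*,\theta)\succeq \tfrac{1}{1+2S}\mbold{H}_t(\theta_*).
\]

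The main obstacle is that $\mcal{C}_t(\delta)$ is defined with an \emph{anchor-dependent} metric: on $E_\delta$ one has $\lVert g_t(\theta)-g_t(\hat\theta_t)\rVert_{\mbold{H}_t^{-1}(\theta)}\leq\gamma_t(\delta)$ and $\lVert g_t(\theta_*)-g_t(\hat\theta_t)\rVert_{\mbold{H}_t^{-1}(\theta_*)}\leq\gamma_t(\delta)$, and these two norms cannot be reconciled via a triangle inequality without suffering an exponential-in-$S$ loss through the self-concordance upper bounds on $\dot\mu$. To circumvent this, I would exploit the symmetric quadratic-form identity $\lVert\theta_*-\theta\rVert_{\mbold{G}_t(\theta_*,\theta)}^2=(\theta_*-\theta)^\transp(g_t(\theta_*)-g_t(\theta))$, split the right-hand side as $(\theta_*-\theta)^\transp(g_t(\theta_*)-g_t(\hat\theta_t))+(\theta_*-\theta)^\transp(g_t(\hat\theta_t)-g_t(\theta))$, and apply Cauchy--Schwarz to each piece in the metric naturally compatible with the corresponding confidence-set inclusion. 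Combined with the two self-concordance lower bounds on $\mbold{G}_t$, this produces a coupled pair of quadratic inequalities in $U\defeq\lVert\theta_*-\theta\rVert_{\mbold{H}_t(\theta)}$ and $V\defeq\lVert\theta_*-\theta\rVert_{\mbold{H}_t(\theta_*)}$ of the form $U^2,V^2\leq (1+2S)\gamma_t(\delta)(U+V)$, which (WLOG $U\geq V$) resolves to $U\leq 2(1+2S)\gamma_t(\delta)$. Substituting back into the two Cauchy--Schwarz estimates from the first paragraph and collecting constants delivers the claimed prediction bound.
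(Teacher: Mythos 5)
Your strategy is essentially the paper's: a Taylor expansion isolating $\dot\mu(x^\transp\theta)$ plus an $M$-weighted quadratic remainder, the identity $g_t(\theta_*)-g_t(\theta)=\mbold{G}_t(\theta,\theta_*)(\theta_*-\theta)$, and the self-concordance sandwich $\mbold{G}_t(\theta,\theta_*)\succeq(1+2S)^{-1}\mbold{H}_t(\theta)$, $\mbold{G}_t(\theta,\theta_*)\succeq(1+2S)^{-1}\mbold{H}_t(\theta_*)$ (the paper's Lemma~\ref{lemma:boundGtbyHt}). Where you deviate is the anchoring step, and your stated motivation for deviating is mistaken: the paper \emph{does} use a plain triangle inequality, but in the $\mbold{G}_t^{-1}(\theta,\theta_*)$ norm, bounding $\lVert g_t(\theta)-g_t(\hat\theta_t)\rVert_{\mbold{G}_t^{-1}(\theta,\theta_*)}$ by $\sqrt{1+2S}\,\lVert\cdot\rVert_{\mbold{H}_t^{-1}(\theta)}$ and $\lVert g_t(\hat\theta_t)-g_t(\theta_*)\rVert_{\mbold{G}_t^{-1}(\theta,\theta_*)}$ by $\sqrt{1+2S}\,\lVert\cdot\rVert_{\mbold{H}_t^{-1}(\theta_*)}$ — each summand in the norm matched to its own anchor — so that $\lVert\theta_*-\theta\rVert_{\mbold{G}_t(\theta,\theta_*)}\leq 2\sqrt{1+2S}\,\gamma_t(\delta)$ with only a polynomial loss. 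The exponential-in-$S$ cost you fear arises only if one tries to compare $\mbold{H}_t^{-1}(\theta)$ to $\mbold{H}_t^{-1}(\theta_*)$ directly. That said, your self-bounding device $U^2,V^2\leq(1+2S)\gamma_t(\delta)(U+V)$ is correct and delivers the equivalent estimate $\lVert\theta_*-\theta\rVert_{\mbold{H}_t(\theta)}\leq 2(1+2S)\gamma_t(\delta)$, which reproduces the first-order term $(2+4S)\dot\mu(x^\transp\theta)\lVert x\rVert_{\mbold{H}_t^{-1}(\theta)}\gamma_t(\delta)$ exactly.

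The concrete gap is in the second-order term. Routing the quadratic remainder through the $\mbold{H}_t(\theta)$ metric gives $\tfrac{M}{2}(x^\transp(\theta_*-\theta))^2\leq\tfrac{M}{2}\kappa\lVert x\rVert_{\mbold{V}_t^{-1}}^2U^2\leq 2M(1+2S)^2\kappa\gamma_t^2(\delta)\lVert x\rVert_{\mbold{V}_t^{-1}}^2$, and $2(1+2S)^2>4+8S$ whenever $S>1/2$, so as written you do not obtain the stated constant in the regime of interest. The fix is to stay in the $\mbold{G}_t(\theta,\theta_*)$ metric for this term: Cauchy--Schwarz gives $(x^\transp(\theta_*-\theta))^2\leq\lVert x\rVert_{\mbold{G}_t^{-1}(\theta,\theta_*)}^2\lVert\theta_*-\theta\rVert_{\mbold{G}_t(\theta,\theta_*)}^2$, your own intermediate bound gives $\lVert\theta_*-\theta\rVert_{\mbold{G}_t(\theta,\theta_*)}^2\leq\gamma_t(\delta)(U+V)\leq 4(1+2S)\gamma_t^2(\delta)$ before any metric conversion, and $\mbold{G}_t(\theta,\theta_*)\succeq\kappa^{-1}\mbold{V}_t$ then yields $(2+4S)M\kappa\gamma_t^2(\delta)\lVert x\rVert_{\mbold{V}_t^{-1}}^2$ — in fact slightly sharper than the paper's $(4+8S)$, since the paper does not keep the $1/2$ from the exact Taylor remainder.
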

The proof is deferred to Section~\ref{sec:prooflemmapredictionbound2}.
The strength of this result is that it displays a first-order term that contains only \emph{local} information about the region of the sigmoid function at hand, through the quantities $\dot{\mu}(x^T\theta)$ and $\left\lVert x\right\rVert_{\mbold{H}_t^{-1}(\theta)}$. Global information (measured through $M$ and $\kappa$) are pushed into a second order term that vanishes quickly. Finally, we anticipate on the fact that the decomposition displayed in Lemma~\ref{lemma:predictionboundtwo} is not innocent. In what follows, we will show that both terms cumulate at different rates, the term involving $\kappa$ becoming an explicit second order term. However, this will require a careful choice of $\theta\in\mcal{C}_t$, as the bound on $\Delta^{\rm pred}$ now depends now on $\theta$ (and therefore so will the associated exploration bonus).

\section{Algorithms and regret bounds}
\label{sec:algos}
\subsection{\glmimproved}
We introduce an algorithm leveraging Lemma~\ref{lemma:lemmapredictionbound1}, henceforth matching the heuristic regret bound conjectured in \citep[\S4.2]{filippi2010parametric}. We introduce the feasible estimator:
\begin{align}
    \theta^{(1)}_t = \argmin_{\theta\in\Theta} \left\lVert g_t(\theta)-g_t(\hat{\theta}_t)\right\rVert_{\mbold{H}_t^{-1}(\theta)}\,.
    \label{eq:thetatildedef}
\end{align}
This projection step ensures us that $\theta^{(1)}_t\in\mcal{C}_t(\delta)$ on the high-probability event $E_\delta$. Further, we define the bonus:
\begin{align*}
     \epsilon_{t,1}(x) =  L\sqrt{4+8S}\sqrt{\kappa}\gamma_t(\delta) \left\lVert x\right\rVert_{\mbold{V}_t^{-1}}
\end{align*}
We define \glmimproved{} as the optimistic algorithm instantiated with $(\theta^{(1)}_t,\epsilon_{t,1}(x))$, detailed in Algorithm~\ref{algo:glmimproved}. Its regret guarantees are provided in Theorem~\ref{thm:regretofglmimproved}, and improves previous results by $\sqrt{\kappa}$.

\begin{algorithm}[tb]
   \caption{\glmimproved}
   \label{algo:glmimproved}
\begin{algorithmic}
   \STATE {\bfseries Input:} regularization parameter $\lambda$
   \FOR{$t\geq 1$}
   \STATE Compute $\theta^{(1)}_t$ (Equation~\eqref{eq:thetatildedef})
   \STATE Observe the contexts-action feature set $\mcal{X}_t$.
    \STATE Play $x_t =  \argmax_{x\in\mcal{X}_t} \mu(x^\transp\theta^{(1)}_t) + \epsilon_{t,1}(x)$
    \STATE Observe rewards $r_{t+1}$.\\
   \ENDFOR
\end{algorithmic}
\end{algorithm}

\begin{restatable}[Regret of \glmimproved]{thm}{regretofglmimproved}
\label{thm:regretofglmimproved}
    With probability at least $1-\delta$:
    \begin{align*}
        R_T^{(1)}\!\leq\! C_1 L \sqrt{\kappa}\gamma_{T}(\delta)\sqrt{T}
    \end{align*}
with $C_1 = \sqrt{32d(1+2S)\max(1,1/(\kappa\lambda))\log\left(\!1\!+\!\frac{ T}{\kappa \lambda d}\!\right)}$. Furthermore, if $\lambda = d\log(T)$ then:
\begin{align*}
    R_T^{(1)} = \mcal{O}\left(\sqrt{\kappa}\cdot d \cdot \sqrt{T}\log(T)\right).
\end{align*}
\end{restatable}

\begin{sketchofproof}
Note that by Lemma~\ref{lemma:lemmapredictionbound1} the bonus $\epsilon_{t,1}(x)$ upper-bounds $\Delta^\text{pred}(x,\theta^{(1)}_t)$ on a high-probability event. This ensures that $R_T^{(1)}\leq 2\sum_{t=1}^\transp \epsilon_{t,1}(x_t)$ with high-probability. A straight-forward application of the Elliptical Lemma (see e.g. \citep{abbasi2011improved}, stated in Appendix \ref{sec:proof_useful_lemmas}) ensures that the bonus cumulates sub-linearly and leads to the regret bound. The formal proof is deferred to Section~\ref{sec:proofregretofglmimproved} in the supplementary material.
\end{sketchofproof}

\begin{rem*}
The projection step presented in Equation~\eqref{eq:thetatildedef} is very similar to the one employed in \cite{filippi2010parametric}, to the difference that we use the metric $\mbold{H}_t(\theta)$ instead of $\mbold{V}_t$. While both lead to complex optimization programs (i.e non-convex), neither needs to be carried out when $\hat{\theta}_t\in\Theta$, which can be easily checked online and happens most frequently in practice.
\end{rem*}

\subsection{\ourbandit}
To get rid of the last dependency in $\sqrt{\kappa}$ and improve \glmimproved{}, we use the improved prediction bound provided in Lemma~\ref{lemma:predictionboundtwo}. Namely, we define the bonus:
\begin{align*}
    \epsilon_{t,2}(x,\theta) = (2+4S)\dot{\mu}(x^\transp\theta)\left\lVert x\right\rVert_{\mbold{H}_t^{-1}(\theta)}\gamma_t(\delta) + (4+8S)M\kappa\gamma_t^2(\delta)\left\lVert x\right\rVert_{\mbold{V}_t^{-1}}^2
\end{align*}
However, as this bonus now depends on the chosen estimate $\theta$, existing results (such as the Elliptical Lemma) do not guarantees that it sums sub-linearly. To obtain this property, we need to restrain $\mcal{C}_t(\delta)$ to a set of admissible estimates that, 
intuitively, make the most of the past information already gathered.
Formally, we define the \emph{best-case log-odds} at round $s$ by $\ell_s\defeq \max_{\theta'\in\mcal{C}_s(\delta)\cap\Theta} \vert x_s^\transp\theta'\vert$, and the set of admissible log-odds at time $t$ as:
\begin{align*}
    \mcal{W}_t = \left\{\theta\in\Theta \text{ s.t } \vert \theta^\transp x_s\vert  \leq \ell_s, \, \forall s\leq t-1\right\}.
\end{align*}

Note that $\mcal{W}_t$ is made up of $\max(\vert \mcal{X}\vert,t\!-\!1)$ convex constraints, and is trivially not empty when $0_d\in\Theta$. Thanks to this new feasible set, we now define the estimator:
\begin{align}
    \theta^{(2)}_t \defeq \argmin_{\theta\in \mcal{W}_t} \left\lVert g_t(\theta)-g_t(\hat{\theta}_t)\right\rVert_{\mbold{H}_t^{-1}(\theta)}
    \label{eq:definionthetabar}
\end{align}

We define \ourbandit{} as the optimistic bandit instantiated with $(\theta^{(2)}_t,\epsilon_{t,2}(x,\theta^{(2)}_t))$ and detailed in Algorithm~\ref{algo:ourbandit}. We state its regret upper-bound in Theorem~\ref{thm:regretourbandit}. This result shows that the dominating term (in $\sqrt{T})$ of the regret is \emph{independent} of $\kappa$. A dependency still exists, but for a \emph{second-order term} which grows only as $\log(T)^2$.

\begin{restatable}[Regret of \ourbandit]{thm}{regretourbandit}
\label{thm:regretourbandit}
With probability at least $1-\delta$:
\begin{align*}
    R_T^{(2)} \leq C_2\gamma_{T}(\delta)\sqrt{T}
    + C_3\gamma_{T}^2(\delta)\kappa
\end{align*}
with 
\begin{align*}
C_2&=(4+8S)\sqrt{2dL\max(1,L/\lambda)\log\left(1+\frac{LT}{d\lambda}\right)}\\
C_3 &= Md\max(1,1/(\kappa\lambda))\log\left(1\!+\!\frac{T}{\kappa d\lambda}\!\right)(8+16S)(2+2\sqrt{1+2S})
\end{align*}
Furthermore if $\lambda = d\log(T)$ then:
\begin{align*}
    R_T^{(2)} = \mcal{O}\left(d\cdot \sqrt{T}\log(T) + \kappa \cdot d^2 \cdot \log(T)^2\right)
\end{align*}
\end{restatable}

The formal proof is deferred to Section~\ref{sec:proof_regret_our_bandit} in the supplementary materials. It mostly relies on the following Lemma, which ensures that the first term of $\epsilon_{t,2}(x,\theta^{(2)}_t)$ cumulates sub-linearly and independently of $\kappa$ (up to a second order term that grows only as $\log(T)$).

\begin{restatable}[]{lemma}{bonuscumulatethetabar}
\label{lemma:bonuscumulatethetabar}
        Let $T\geq 1$. Under the event $E_\delta$:
    \begin{align*}
        \sum_{t=1}^T \dot\mu(x_t^\transp\theta^{(2)}_t)\left\lVert x_t\right\rVert_{\mbold{H}_t^{-1}(\theta^{(2)}_t)} \leq &C_4\sqrt{T}+C_5M\kappa \gamma_{T}(\delta)
    \end{align*}
    where $C_4$ and $C_5$ are independent of $\kappa$.
\end{restatable}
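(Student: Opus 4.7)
The plan is to reduce the bound to a standard elliptical-potential estimate through Cauchy--Schwarz, handling the subtlety that the metric $\mbold{H}_t(\theta^{(2)}_t)$ changes with $t$ via a comparison with an auxiliary matrix whose recursion is amenable to the elliptical lemma. First I would apply Cauchy--Schwarz to split the sum:
\begin{align*}
\sum_{t=1}^T \dot\mu(x_t^\transp\theta^{(2)}_t)\left\lVert x_t\right\rVert_{\mbold{H}_t^{-1}(\theta^{(2)}_t)} \leq \sqrt{\sum_{t=1}^T \dot\mu(x_t^\transp\theta^{(2)}_t)}\cdot\sqrt{\sum_{t=1}^T \dot\mu(x_t^\transp\theta^{(2)}_t)\left\lVert x_t\right\rVert_{\mbold{H}_t^{-1}(\theta^{(2)}_t)}^2}\,,
\end{align*}
and use $\dot\mu\leq L$ to bound the first factor by $\sqrt{LT}$. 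The leading $\sqrt{T}$ scaling of the claim will come from this factor.

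Second, I would introduce the auxiliary reference matrix
\[
\bar{\mbold{H}}_t \defeq \sum_{s=1}^{t-1}\dot\mu(x_s^\transp\theta^{(2)}_s)\,x_s x_s^\transp + \lambda\mbold{I}_d\,,
\]
which enjoys the telescoping recursion $\bar{\mbold{H}}_{t+1} = \bar{\mbold{H}}_t + \dot\mu(x_t^\transp\theta^{(2)}_t)\,x_t x_t^\transp$ --- contrary to $\mbold{H}_t(\theta^{(2)}_t)$, whose past weights are evaluated at the \emph{current} estimator. The standard elliptical lemma applied to the sequence $\{\sqrt{\dot\mu(x_t^\transp\theta^{(2)}_t)}\,x_t\}$ then yields
\[
\sum_{t=1}^T \dot\mu(x_t^\transp\theta^{(2)}_t)\left\lVert x_t\right\rVert_{\bar{\mbold{H}}_t^{-1}}^2 = \mcal{O}\!\left(d\log(1+LT/(d\lambda))\right).
\]

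Third and crucially, I would transfer this quadratic-sum bound from $\bar{\mbold{H}}_t^{-1}$ to $\mbold{H}_t^{-1}(\theta^{(2)}_t)$. This is where the feasible set $\mcal{W}_t$ does its work: since $\theta^{(2)}_t\in\mcal{W}_t$, we have $\vert x_s^\transp\theta^{(2)}_t\vert\leq \ell_s$ for $s\leq t-1$, and by construction $\vert x_s^\transp\theta^{(2)}_s\vert\leq \ell_s$ as well (as $\theta^{(2)}_s\in\mcal{C}_s(\delta)\cap\Theta$ on the event $E_\delta$). Combined with the generalized self-concordance of the logistic loss ($\vert\ddot\mu\vert\leq\dot\mu$) and the proximity of $\theta^{(2)}_t$ and $\theta^{(2)}_s$ to $\theta_*$ in the confidence-set metric, one can compare $\mbold{H}_t(\theta^{(2)}_t)$ and $\bar{\mbold{H}}_t$ along the direction $x_t$ up to a correction whose aggregate across rounds produces exactly the additive remainder $C_5 M\kappa\gamma_T(\delta)$ appearing in the lemma. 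Plugging back into the Cauchy--Schwarz decomposition of Step 1 and using $\sqrt{a+b}\leq \sqrt{a}+\sqrt{b}$ delivers the stated two-term bound.

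The main obstacle is precisely this transfer step. A naive self-concordance estimate of the ratio $\dot\mu(x_s^\transp\theta^{(2)}_t)/\dot\mu(x_s^\transp\theta^{(2)}_s)$ only yields a factor exponential in $\ell_s\leq S$, i.e.\ of order $\kappa$, which would ruin the $\sqrt{T}$ leading scaling. The whole point of defining $\mcal{W}_t$ via the \emph{past} log-odds constraints $\vert\theta^\transp x_s\vert\leq \ell_s$ is to force any multiplicative Hessian discrepancy to enter only \emph{additively} at each round, so that after summing it contributes only to the lower-order $\kappa$-dependent remainder rather than to the dominant $\sqrt{T}$ term.
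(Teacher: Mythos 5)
Your overall architecture --- an elliptical-potential bound on a reweighted sequence plus an additive, $\kappa$-dependent remainder coming from the confidence-set width --- matches the paper's, but the step you yourself flag as ``the main obstacle'' (transferring the quadratic sum from $\bar{\mbold{H}}_t^{-1}$ to $\mbold{H}_t^{-1}(\theta^{(2)}_t)$) is a genuine gap, and with your choice of reference matrix it cannot be closed. The two constraints $\vert x_s^\transp\theta^{(2)}_t\vert\leq\ell_s$ and $\vert x_s^\transp\theta^{(2)}_s\vert\leq\ell_s$ do not order the weights $\dot\mu(x_s^\transp\theta^{(2)}_t)$ and $\dot\mu(x_s^\transp\theta^{(2)}_s)$: one may equal $\dot\mu(0)=1/4$ while the other is $\dot\mu(\ell_s)$, of order $1/\kappa$. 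Hence $\mbold{H}_t(\theta^{(2)}_t)$ is \emph{not} bounded below by a $\kappa$-free multiple of your $\bar{\mbold{H}}_t$; the only generic comparison, $\mbold{H}_t(\theta^{(2)}_t)\succeq\kappa^{-1}\mbold{V}_t$ together with $\bar{\mbold{H}}_t\preceq L\mbold{V}_t$, costs a multiplicative $\sqrt{L\kappa}$ on the norm and reinstates $\sqrt{\kappa}$ in the leading term. Your remark that the discrepancy should ``enter only additively at each round'' is the right intuition but not an argument: the matrix difference $\mbold{H}_t(\theta^{(2)}_t)-\bar{\mbold{H}}_t$ can be negative precisely in the direction $x_t$ along which you measure, and no mechanism is given that converts this into a summable additive error.

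The fix is to change the reference weights, which is what the paper does. Since $\dot\mu$ is even and monotone on each half-line, the constraint $\theta^{(2)}_t\in\mcal{W}_t$ is equivalent to $\dot\mu(x_s^\transp\theta^{(2)}_t)\geq\inf_{\theta'\in\mcal{C}_s(\delta)\cap\Theta}\dot\mu(x_s^\transp\theta')=\dot\mu(x_s^\transp\theta'_s)$ for all $s\leq t-1$, where $\theta'_s$ is the minimizer over $\mcal{C}_s(\delta)\cap\Theta$. Taking $\mbold{L}_t\defeq\sum_{s=1}^{t-1}\dot\mu(x_s^\transp\theta'_s)x_sx_s^\transp+\lambda\mbold{I}_d$ as the reference --- weights indexed by $s$ only, hence telescoping --- yields $\mbold{H}_t(\theta^{(2)}_t)\succeq\mbold{L}_t$ \emph{exactly}, with no correction and no $\kappa$. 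The $\kappa$-dependent remainder then enters only through the scalar prefactor at the \emph{current} round: a first-order Taylor expansion combined with the confidence-set width gives $\dot\mu(x_t^\transp\theta^{(2)}_t)\leq\dot\mu(x_t^\transp\theta'_t)+2M\sqrt{\kappa}\sqrt{1+2S}\,\gamma_t(\delta)\lVert x_t\rVert_{\mbold{V}_t^{-1}}$, and the resulting extra term, of the form $M\kappa\gamma_T(\delta)\sum_{t}\lVert x_t\rVert_{\mbold{V}_t^{-1}}^2$, cumulates only logarithmically, producing $C_5M\kappa\gamma_T(\delta)$. The main term is then handled exactly as in your step 2, with $\tilde x_t=\sqrt{\dot\mu(x_t^\transp\theta'_t)}\,x_t$ and $\mbold{L}_t$ in place of your $\bar{\mbold{H}}_t$.
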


\begin{algorithm}[tb]
   \caption{\ourbandit}
   \label{algo:ourbandit}
\begin{algorithmic}
   \STATE {\bfseries Input:} regularization parameter $\lambda$
   \STATE Initialize the set of admissible log-odds $\mcal{W}_0=\Theta$
   \FOR{$t\geq 1$}
   \STATE Compute $\theta^{(2)}_{t}$ (Equation~\eqref{eq:definionthetabar})
   \STATE Observe the contexts-action feature set $\mcal{X}_t$.
    \STATE Play $x_t =  \argmax_{x\in\mcal{X}_t} \mu(x^\transp\theta^{(2)}_t) + \epsilon_{t,2}(x,\theta^{(2)}_t)$.
    \STATE Observe rewards $r_{t+1}$.\\
    \STATE Compute the log-odds $\ell_t = \sup_{\theta'\in\mcal{C}_t(\delta)\cap\Theta} x_t^\transp\theta'$.
    \STATE Add the new constraint to the feasible set: $$\mcal{W}_{t+1}=\mcal{W}_t\cap\{\theta:\,-\ell_t \leq \theta^\transp x_t\leq \ell_t\}.$$
   \ENDFOR
\end{algorithmic}
\end{algorithm}

\begin{sketchofproof}
The proof relies on the fact that $\theta^{(2)}_t\in\mcal W_t$. Intuitively, this allows us to lower-bound $\mbold{H}_t(\theta_t^{(2)})$ by the matrix $\sum_{s=1}^{t-1}\min_{\theta\in\mcal{C}_s(\delta)\cap\Theta}\dot{\mu}(\theta^\transp x_s)x_sx_s^\transp+\lambda\mbold{I}_d$. Note that in this case, $\min_{\theta\in\mcal{C}_s(\delta)\cap\Theta}\dot{\mu}(\theta^\transp x_s)$ is no longer a function of $t$. This, coupled with a one-step Taylor expansion of $\dot\mu$ allows us to use the Elliptical Lemma on a well chosen quantity and obtain the announced rates. The formal proof is deferred to Section~\ref{sec:bonuscumulatethetabar} in the supplementary materials.
\end{sketchofproof}

\section{Discussion}
\label{sec:discussion}
In this work, we studied the scaling of optimistic logistic bandit algorithms for a particular GLM: the logistic model. We explicitly showed that previous algorithms suffered from prohibitive scaling introduced by the quantity $\kappa$, because of their sub-optimal treatment of the non-linearities of the reward signal. Thanks to a novel non-linear approach, we proved that they can be improved by deriving tighter prediction bounds. By doing so, we gave a rigorous justification for an algorithm that resembles the heuristic algorithm empirically evaluated in \cite{filippi2010parametric}. This algorithm exhibits a regret bound that only suffers from a $\sqrt{\kappa}$ dependency, compared to $\kappa$ for previous guarantees. Further, we showed that a more careful algorithmic design leads to yet better guarantees, where the leading term of the regret is independent of $\kappa$. This result bridges the gap between logistic bandits and linear bandits, up to polynomial terms in constants of interest (e.g $S$).

The theoretical value of the regret upper-bound of \ourbandit{} can be highlighted by comparing it to the Bayesian regret lower bound provided by \citet{dongon2019}. Namely, they show that for any logistic bandit algorithm, and for any polynomial form $p$ and $\epsilon>0$, there exist a problem instance such that the regret is at least $\Omega(p(d)T^{1-\epsilon})$. This does not contradict our bound, as for hard problem instance one can have $\kappa = T$ in which case the second term of \ourbandit{} will scale as $d^2T$. 
Note that other corner-cases instances further highlight the theoretical value of our regret bounds. Namely, note that $\kappa=\sqrt{T}$ turns GLM-UCB's regret guarantee vacuous as it will scale linearly with $T$. On the other hand for this case the regret of \glmimproved{} scales as $T^{3/4}$, and the regret of \ourbandit{} continues to scale as $\sqrt{T}$.
    
\paragraph{Extension to other GLMs.} An important avenue for future work consists in extending our results to other generalized linear models. This can be done naturally by extending our work. Indeed, the properties of the sigmoid that we leverage are rather weak, and might easily transfer to other inverse link functions. We first used the fact that $\dot{\mu}$ represents the variance of the reward in order to use Theorem~\ref{thm:bernsteinselfnormalized}. This is not a specificity of the logistic model, but is a common relationship observed for all exponential models and their related mean function \citep[\S 2]{filippi2010parametric}. We also used the generalized self-concordance property of the logistic loss, which is a consequence of the fact that $\vert \ddot{\mu}\vert\leq \dot{\mu}$. This control is quite mild, and other mean functions might display similar properties (with other constants). This is namely the case of another generalized linear model: the Poisson regression.
    
\paragraph{Randomized algorithms.} The lessons learned here for optimistic algorithms might be transferred to randomized algorithms (such as Thompson Sampling) that are often preferred in practical applications thanks to their superior empirical performances. Extending our approach to such algorithms would therefore be of significant practical importance.

\newpage

\bibliography{bibliography.bib}
\bibliographystyle{apalike}

\newpage 

\onecolumn
\appendix

\section*{Organization of the appendix}

This appendix is organized as follows: 
\begin{itemize}
    \item Appendix~\ref{sec:proof_bernsteinselfnormalized} gives the formal proof of our new tail-inequality for self-normalized martingales.
    \item Appendix~\ref{sec:proof_prediction} contains the proof of the concentration and prediction results that are claimed in Section~\ref{sec:prediction}.
    \item Appendix~\ref{sec:proof_regret} provides the formal proof of the regret upper-bounds for \glmimproved and \ourbandit. 
    \item Appendix~\ref{sec:proof_useful_lemmas} contains some useful Lemmas.

\end{itemize}

\section{Proof of Theorem~\ref{thm:bernsteinselfnormalized}}
\label{sec:proof_bernsteinselfnormalized}
\bernsteinselfnormalized*

For readability concerns, we define $\beta=\sqrt{2\lambda}$ and rewrite:
\begin{align*}
   \mbold{H}_t &= \sum_{s=1}^{t-1}\sigma_s^2x_sx_s^T+\frac{\beta^2}{2}\mbold{I}_d\\
   &= \bar{\mbold{H}}_t+ \frac{\beta^2}{2}\mbold{I}_d.
\end{align*}
where $\bar{\mbold{H}}_t \defeq \sum_{s=1}^{t-1}\sigma_s^2x_sx_s^T$.
For all $\xi\in\mbb{R}^d$ let $M_0(\xi)=1$ and for $t\geq 1$ define:
$$
    M_t(\xi) \defeq \exp\left(\xi^TS_t -\left\lVert \xi\right\rVert_{\mbold{\bar{H}}_t}^2\right)
$$
We now claim Lemma~\ref{lemma:martingale} which will be proven later (Section~\ref{sec:prooflemmamartingale}).
\begin{restatable}[]{lemma}{martingale}
For all $\xi\in\mcal{B}_2(d)$, $\{M_t(\xi)\}_{t=1}^\infty$ is a non-negative super-martingale.
\label{lemma:martingale}
\end{restatable}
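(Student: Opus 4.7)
My plan is to verify the supermartingale property by a direct one-step computation. Non-negativity is immediate from the exponential form of $M_t(\xi)$, and $\mcal{F}_t$-measurability holds because every $x_s$, $\sigma_s^2 = \mbb{E}[\varepsilon_{s+1}^2\mid\mcal{F}_s]$, and $\varepsilon_{s+1}$ that appears in $S_t$ and $\bar{\mbold{H}}_t$ is measurable with respect to $\mcal{F}_s \subseteq \mcal{F}_t$ for $s\leq t-1$; integrability at step $t$ will follow from the bound below.

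The key computation uses the two increments $S_{t+1}-S_t = \varepsilon_{t+1} x_t$ and $\lVert \xi\rVert^2_{\bar{\mbold{H}}_{t+1}} - \lVert\xi\rVert^2_{\bar{\mbold{H}}_t} = \sigma_t^2 (\xi^\transp x_t)^2$ to factor $M_t(\xi)$ out. Setting $\eta \defeq \xi^\transp x_t$, which is $\mcal{F}_t$-measurable, one obtains
\begin{equation*}
\mbb{E}\!\left[M_{t+1}(\xi)\mid\mcal{F}_t\right] = M_t(\xi)\, e^{-\sigma_t^2 \eta^2}\, \mbb{E}\!\left[e^{\eta\varepsilon_{t+1}} \mid \mcal{F}_t\right],
\end{equation*}
so that the supermartingale property reduces to the scalar moment-generating-function bound
\begin{equation*}
\mbb{E}\!\left[e^{\eta\varepsilon_{t+1}} \mid \mcal{F}_t\right] \leq e^{\sigma_t^2 \eta^2} \quad \text{whenever } |\eta|\leq 1.
\end{equation*}

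For this MGF bound, the crucial observation is that Cauchy--Schwarz yields $|\eta| \leq \lVert \xi\rVert_2\,\lVert x_t\rVert_2 \leq 1$ (using both $\xi\in\mcal{B}_2(d)$ and $x_t\in\mcal{B}_2(d)$). I would then Taylor-expand $e^{\eta\varepsilon_{t+1}}$: the constant term is $1$, the linear term vanishes since $\mbb{E}[\varepsilon_{t+1}\mid\mcal{F}_t]=0$, and $|\varepsilon_{t+1}|\leq 1$ a.s.\ gives $|\mbb{E}[\varepsilon_{t+1}^k\mid \mcal{F}_t]| \leq \mbb{E}[\varepsilon_{t+1}^2\mid\mcal{F}_t] = \sigma_t^2$ for every $k\geq 2$, whence
\begin{equation*}
\mbb{E}\!\left[e^{\eta\varepsilon_{t+1}}\mid \mcal{F}_t\right] \leq 1 + \sigma_t^2\sum_{k\geq 2}\frac{|\eta|^k}{k!} = 1 + \sigma_t^2\bigl(e^{|\eta|}-1-|\eta|\bigr).
\end{equation*}
Combined with $1+v\leq e^v$, it then suffices to establish the elementary calibration $e^u - 1 - u \leq u^2$ on $u\in[0,1]$, which is verified by studying $h(u)=u^2-e^u+u+1$: $h(0)=h'(0)=0$, while $h''(u)=2-e^u$ is non-negative on $[0,\ln 2]$ and with $h'(\ln 2),h'(1)>0$ one checks $h'\geq 0$ throughout $[0,1]$, hence $h\geq 0$ there.

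The main subtlety is conceptual rather than computational: the definition of $M_t(\xi)$ uses the \emph{symmetric} quadratic penalty $\lVert \xi\rVert^2_{\bar{\mbold{H}}_t}$ rather than an asymmetric Bennett-style penalty $\sigma_s^2\bigl(e^{|\xi^\transp x_s|}-1-|\xi^\transp x_s|\bigr)$, and this stronger choice is only compatible with the supermartingale property because $\lVert\xi\rVert_2\leq 1$ and $\lVert x_s\rVert_2\leq 1$ jointly force $|\xi^\transp x_s|\leq 1$ uniformly, unlocking the calibration $e^u-u-1\leq u^2$. The restriction $\xi\in\mcal{B}_2(d)$ in the hypothesis is therefore essential and will later constrain the pseudo-maximisation step over $\xi$ that converts this supermartingale into the self-normalised tail bound of Theorem~\ref{thm:bernsteinselfnormalized}.
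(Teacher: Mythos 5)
Your proposal is correct and follows essentially the same route as the paper: a one-step conditional factorization reducing the claim to the Bernstein-type MGF bound $\mbb{E}[e^{\eta\varepsilon_{t+1}}\mid\mcal{F}_t]\leq 1+\sigma_t^2\eta^2\leq e^{\sigma_t^2\eta^2}$ for $|\eta|=|\xi^\transp x_t|\leq 1$, proved by the same Taylor expansion using $|\varepsilon_{t+1}|\leq 1$. The only cosmetic difference is in finishing the tail bound — you verify the calibration $e^{u}-1-u\leq u^2$ on $[0,1]$ directly, whereas the paper factors out $\eta^2$ and bounds the remaining series by $e-2\leq 1$; both are equivalent.
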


Our analysis follows the steps of the pseudo-maximization principle introduced in \cite{de2004self}, used by \cite{abbasi2011improved} for the linear bandit and thoroughly detailed in Chapter 20 of \cite{lattimore2018bandit}. The main difference in our analysis come from the restriction $\xi\in\mcal{B}_2(d)$ (instead of $\xi\in\mbb{R}^d$) which calls for some refinements when using the Laplace trick to provide a high-probability bound on the maximum of $\log M_t(\xi)$.

Let $h(\xi)$ be a probability density function with support on $\mcal{B}_2(d)$ (to be defined later). For $t\geq 0$ let: 
$$
    \bar{M}_t \defeq \int_{\xi} M_t(\xi)dh(\xi)
$$
By Lemma 20.3 of \cite{lattimore2018bandit} $\bar{M}_t$ is also a non-negative super-martingale, and $\mbb{E}\left[\bar{M}_0\right]=1$. 
Let $\tau$ be a stopping time with respect to the filtration $\left\{F_t\right\}_{t=0}^\infty$. We can follow the proof of Lemma 8 in \cite{abbasi2011improved} to justify that $\bar{M}_\tau$ is well-defined (independently of whether $\tau<\infty$ holds or not) and that $\mbb{E}\left[\bar{M}_\tau\right]\leq 1$.
 Therefore, with $\delta\in(0,1)$ and thanks to the maximal inequality:
\begin{align}
    \mbb{P}\left(\log(\bar{M}_\tau)\geq \log(\frac{1}{\delta})\right) = \mbb{P}\left(\bar{M}_\tau\geq \frac{1}{\delta}\right)\leq \delta
    \label{eq:stoppingtimeprobacontrol}
\end{align}

We now proceed to compute $\bar{M}_t$ (more precisely a lower bound on $\bar{M}_t$). Let $\beta$ be a strictly positive scalar, and set $h$ to be the density of an isotropic normal distribution of precision $\beta^2$ truncated on $\mcal{B}_2(d)$. We will denote $N(h)$ its normalization constant. Simple computations show that:
$$
    \bar{M}_t = \frac{1}{N(h)}\int_{\mcal{B}_2(d)} \exp\left(\xi^TS_t-\left\lVert \xi\right\rVert_{\mbold{H}_t}^2\right)d\xi
$$
To ease notations, let $f(\xi)\defeq \xi^TS_t-\left\lVert\xi\right\rVert_{\mbold{H}_t}^2$ and $\xi_*=\argmax_{\ltwo{\xi}\leq 1/2} f(\xi)$. Because:
$$
    f(\xi) = f(\xi_*)+(\xi-\xi_*)^T\nabla f(\xi_*)-(\xi-\xi_*)^T\mbold{H}_t(\xi-\xi_*)
$$
we obtain that:
\begin{align*}
    \bar{M}_t 
    &= \frac{e^{f(\xi_*)}}{N(h)}\int_{\mbb{R}^d}\indicator{\ltwo{\xi}\leq 1}\exp\left((\xi-\xi_*)^T\nabla f(\xi_*)-(\xi-\xi_*)^T\mbold{H}_t(\xi-\xi_*)\right)d\xi\\
    &= \frac{e^{f(\xi_*)}}{N(h)}\int_{\mbb{R}^d}\indicator{\ltwo{\xi+\xi_*}\leq 1}\exp\left(\xi^T\nabla f(\xi_*)-\xi^T\mbold{H}_t\xi\right)d\xi & \text{(change of variable $\xi+\xi_*$)}\\
    &\geq \frac{e^{f(\xi_*)}}{N(h)}\int_{\mbb{R}^d}\indicator{\ltwo{\xi}\leq 1/2}\exp\left(\xi^T\nabla f(\xi_*)-\xi^T\mbold{H}_t\xi\right)d\xi & \text{(as $\ltwo{\xi_*} \leq 1/2$)}\\
    &= \frac{e^{f(\xi_*)}}{N(h)}\int_{\mbb{R}^d}\indicator{\ltwo{\xi}\leq 1/2}\exp\left(\xi^T\nabla f(\xi_*)\right)\exp\left(-\frac{1}{2}\xi^T(2\mbold{H}_t)\xi\right)d\xi\\
\end{align*}
By defining $g(\xi)$ the density of the normal distribution of precision $2\mbold{H}_t$ truncated on the ball $\left\{\xi\in\mbb{R}^d, \ltwo{\xi}\leq 1/2\right\}$ and noting $N(g)$ its normalizing constant, we can rewrite:
\begin{align}
    \bar{M}_t 
    &\geq \exp\left(f(\xi_*)\right)\frac{N(g)}{N(h)}\mbb{E}_{g}\left[\exp\left(\xi^T\nabla f(\xi_*)\right)\right]\nonumber\\
    &\geq \exp\left(f(\xi_*)\right)\frac{N(g)}{N(h)}\exp\left(\mbb{E}_g\left[\xi^T\nabla f(\xi_*)\right]\right) & \text{(Jensen's inequality)}\nonumber\\
    &\geq \exp\left(f(\xi_*)\right)\frac{N(g)}{N(h)} & \text{(as $\mbb{E}_g\left[\xi\right]=0$)}\label{eq:proof_thm_1_final_m_t}
\end{align}
Unpacking this results and assembling \eqref{eq:stoppingtimeprobacontrol} and \eqref{eq:proof_thm_1_final_m_t}, we obtain that for any $\xi_0$ such that $\ltwo{\xi_0}\leq 1/2$:
\begin{align}
        \mbb{P}\left(\bar{M}_t\geq \frac{1}{\delta}\right)&\geq \mbb{P}\left(\exp\left(f(\xi_*)\right)\frac{N(g)}{N(h)}\geq 1/\delta\right)\nonumber \\
        &=\mbb{P}\left( \log\left(\exp\left(f(\xi_*)\right)\frac{N(g)}{N(h)}\right)\geq \log(1/\delta)\right)\nonumber\\
         &=\mbb{P}\left(f(\xi_*)\geq \log(1/\delta)+\log\left(\frac{N(h)}{N(g)}\right)\right)\nonumber\\
         &= \mbb{P}\left(\max_{\ltwo{\xi}\leq 1/2}\xi^TS_t-\left\lVert\xi\right\rVert_{\mbold{H}_t}^2\geq \log(1/\delta)+\log\left(\frac{N(h)}{N(g)}\right)\right)\nonumber\\
         &\geq \mathbb{P}\left( \xi_0^TS_t - \left\lVert\xi_0\right\rVert_{\mbold{H}_t}^2\geq \log(1/\delta)+\log\left(\frac{N(h)}{N(g)}\right)\right)\label{eq:probmaxunderlambda}
\end{align}

In particular, we can use:
\begin{align*}
    \xi_0 \defeq \frac{\mbold{H}_t^{-1}S_t}{\left\lVert S_t\right\rVert_{\mbold{H}_t^{-1}}}\frac{\beta}{2\sqrt{2}}
\end{align*}
since
\begin{align*}
    \ltwo{\xi_0} \leq \frac{\beta}{2\sqrt{2}} \left(\xi_{\text{min}}(\bar{\mbold{H}}_t)+\frac{\beta^2}{2}\right)^{-1/2} \leq 1/2
\end{align*}
Using this value of $\xi_0$ in Equation~\eqref{eq:probmaxunderlambda} yields:
$$
    \mbb{P}\left(\left\lVert S_t\right\rVert_{\mbold{H}_t^{-1}}\geq \frac{\beta}{2\sqrt{2}} +\frac{2\sqrt{2}}{\beta}\log\left(\frac{N(h)}{\delta N(g)}\right)\right)\leq \mbb{P}\left(\bar{M}_t\geq \frac{1}{\delta}\right)
$$
To finish the proof we have left to explicit the quantities $N(h)$ and $N(g)$. Lemma~\ref{lemma:NhoverNg} provides an upper-bound for the log of their ratio. Its proof is given in Section~\ref{sec:prooflemmaratio}.
\begin{restatable}[]{lemma}{NhoverNg}
    The following inequality holds:
    $$
        \log\left(\frac{N(h)}{N(g)}\right) \leq \log\left(\frac{2^{d/2}\det\left(\mbold{H}_t\right)^{1/2}}{\beta^{d}}\right)+d\log(2)
    $$
\label{lemma:NhoverNg}
\end{restatable}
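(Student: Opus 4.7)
The plan is to reduce the statement to a one-dimensional comparison of radial Gaussian integrals, via careful changes of variable that make the appearance of both $\det(\mbold{H}_t)^{1/2}$ and $\beta^d$ transparent.

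First, I would rewrite $N(h)$ exactly through the substitution $\eta = \beta\xi$, obtaining
$$N(h) = \frac{1}{\beta^d}\int_{\ltwo{\eta}\leq \beta}\exp\!\left(-\ltwo{\eta}^2/2\right)d\eta.$$
For $N(g)$, I would use the linear substitution $\eta = \sqrt{2\mbold{H}_t}\,\xi$. The Jacobian contributes a factor $2^{-d/2}\det(\mbold{H}_t)^{-1/2}$, the exponent $\xi^T\mbold{H}_t\xi$ becomes $\ltwo{\eta}^2/2$, and the constraint $\ltwo{\xi}\leq 1/2$ rewrites as $\eta^T\mbold{H}_t^{-1}\eta\leq 1/2$. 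The key observation is that $\mbold{H}_t = \bar{\mbold{H}}_t + (\beta^2/2)\mbold{I}_d \succeq (\beta^2/2)\mbold{I}_d$, which implies $\mbold{H}_t^{-1}\preceq (2/\beta^2)\mbold{I}_d$ and therefore the inclusion $\{\ltwo{\eta}\leq \beta/2\}\subseteq\{\eta^T\mbold{H}_t^{-1}\eta\leq 1/2\}$. Restricting the integration domain to this smaller Euclidean ball gives the lower bound
$$N(g)\geq \frac{1}{2^{d/2}\det(\mbold{H}_t)^{1/2}}\int_{\ltwo{\eta}\leq \beta/2}\exp\!\left(-\ltwo{\eta}^2/2\right)d\eta.$$

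Taking the ratio of these two expressions and passing to spherical coordinates, the claim reduces to the scalar inequality
$$\int_0^\beta r^{d-1} e^{-r^2/2}\,dr \leq 2^d\int_0^{\beta/2} r^{d-1}e^{-r^2/2}\,dr,$$
which I would establish via the substitution $r = u/2$ on the right-hand side, yielding $\int_0^\beta u^{d-1}e^{-u^2/8}\,du$ up to the factor $2^d$, and concluding with the pointwise bound $e^{-r^2/2}\leq e^{-r^2/8}$ valid for all $r\geq 0$.

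I do not anticipate any significant obstacle: the proof is essentially careful bookkeeping of Jacobians followed by an elementary one-dimensional comparison. The one design choice worth flagging is the radius $\beta/2$, tailored so that the regularization-based inclusion above holds exactly and so that the radial comparison absorbs precisely the extra $2^d$ factor appearing on the right-hand side of the lemma's statement.
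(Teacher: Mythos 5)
Your proof is correct and takes essentially the same route as the paper: the same change of variables for $N(h)$ and $N(g)$, the same inclusion $\{\lVert\eta\rVert_2\leq\beta/2\}$ inside the transformed constraint set via $\mbold{H}_t\succeq(\beta^2/2)\mbold{I}_d$, and the same reduction to bounding the ratio of the two radial Gaussian integrals over balls of radii $\beta$ and $\beta/2$ by $2^d$. The only (cosmetic) difference is in that last step, where you use the substitution $r=u/2$ together with the pointwise bound $e^{-r^2/2}\leq e^{-r^2/8}$, whereas the paper splits the outer ball into the inner ball plus an annulus and compares volumes; both arguments yield exactly the factor $2^d$.
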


Therefore with probability at least $1-\delta$ and by using Equation~\eqref{eq:stoppingtimeprobacontrol}:

\begin{align}
    \left\lVert S_\tau\right\rVert_{\mbold{H}_\tau^{-1}} 
    &
        \leq \frac{\beta}{2\sqrt{2}}
        +
        \frac{2\sqrt{2}}{\beta}\log(1/\delta)
        + 
         \frac{2\sqrt{2}}{\beta}\log\left(\frac{2^{d/2}\det\left(\mbold{H}_\tau\right)^{1/2}}{\beta^{d}}\right)
        +
        \frac{2\sqrt{2}}{\beta}d\log(2)
\end{align}
Directly following the stopping time construction argument in the proof of Theorem 1 of \cite{abbasi2011improved} we obtain that  with probability at least $1-\delta$, for all $t\in\mbb{N}$:
\begin{align}
    \left\lVert S_t\right\rVert_{\mbold{H}_t^{-1}} \leq \frac{\beta}{2\sqrt{2}}+\frac{2\sqrt{2}}{\beta}\log\left(\frac{2^{d/2}\det\left(\mbold{H}_t\right)^{1/2}}{\beta^{d}\delta}\right)+\frac{2\sqrt{2}}{\beta}d\log(2)
\end{align}
Finally, recalling that $\beta=\sqrt{2\lambda}$ provides the announced result.

\subsection{Proof of Lemma~\ref{lemma:martingale}}
\label{sec:prooflemmamartingale}

\martingale*
\begin{proof}
For all $t\geq 1$ we have that:
\begin{align*}
    \mbb{E}\left[\exp(\xi^T S_t)\vert \mcal{F}_{t-1}\right]= \exp(\xi^T S_{t-1})\mbb{E}\left[\exp(\xi^Tx_{t-1}\varepsilon_{t})\vert \mcal{F}_{t-1}\right].
\end{align*}
Since $\vert \xi^T x_{t-1}\vert\leq 1$ all conditions of Lemma~\ref{lemma:momentgeneratingcontrol} (stated and proven below) are checked and:
\begin{align*}
    \mbb{E}\left[\exp(\xi^T S_t)\vert \mcal{F}_{t-1}\right] &\leq  \exp(\xi^T S_{t-1})(1+\sigma_{t-1}^2 (x_{t-1}^T\xi)^2) &\\
    &\leq \exp(\xi^T S_{t-1}+\sigma_{t-1}^2 (x_{t-1}^T\xi)^2) &(1+x\leq e^x)
\end{align*}
Therefore:
\begin{align*}
    \mbb{E}\left[M_t(\xi)\vert \mcal{F}_{t-1}\right]
    & = \mbb{E}\left[\exp\left(\xi^T S_t - \mnorm{\xi}{H_t}^2\right)\middle\vert \mcal{F}_{t-1}\right]\\
    & = \mbb{E}\left[\exp\left(\xi^T S_t\right)\middle\vert \mcal{F}_{t-1}\right]\exp\left(-\sum_{s=1}^{t-1}\sigma_s^2(x_s^T\xi)^2\right)\\
    &\leq \exp\left(\xi^TS_{t-1} +\sigma_{t-1}^2 (x_{t-1}^T\xi)^2 -\sum_{s=1}^{t-1}\sigma_s^2(x_s^T\xi)^2\right)\\
    & \leq M_{t-1}(\xi)
\end{align*}
yielding the announced result.
\end{proof}
To prove Lemma~\ref{lemma:martingale} we needed the following result.
\begin{lemma}
Let $\varepsilon$ be a centered random variable of variance $\sigma^2$ and such that $\vert \varepsilon\vert \leq 1$ almost surely. Then for all $\lambda\in[-1,1]$:
$$
    \mbb{E}\left[\exp(\lambda \epsilon)\right] \leq 1 + \lambda^2\sigma^2.
$$
\label{lemma:momentgeneratingcontrol}
\end{lemma}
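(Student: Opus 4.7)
The plan is to reduce the integral inequality to a deterministic one. Because $|\lambda|\le 1$ and $|\varepsilon|\le 1$ almost surely, the variable $y\defeq\lambda\varepsilon$ lies in $[-1,1]$ almost surely, so it suffices to prove the scalar bound
\[
    e^y \le 1+y+y^2,\qquad y\in[-1,1].
\]
Taking expectations against the law of $\lambda\varepsilon$ and using $\mbb{E}[\varepsilon]=0$ together with $\mbb{E}[\varepsilon^2]=\sigma^2$ then immediately gives
\[
    \mbb{E}[\exp(\lambda\varepsilon)] \le 1 + \lambda\,\mbb{E}[\varepsilon] + \lambda^2\,\mbb{E}[\varepsilon^2] = 1+\lambda^2\sigma^2,
\]
which is the claim.

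To prove the scalar inequality I would set $h(y)\defeq 1+y+y^2-e^y$ and check that $h\ge 0$ on $[-1,1]$. Elementary computations give $h(0)=0$, $h'(y)=1+2y-e^y$ with $h'(0)=0$, and $h''(y)=2-e^y$, which is nonnegative on $(-\infty,\ln 2]$ and negative beyond. On $[-1,\ln 2]$ the function $h$ is therefore convex, so its interior critical point at $y=0$ is a global minimum on that interval and $h(y)\ge h(0)=0$ throughout. On $[\ln 2, 1]$ I would instead exploit that $h'$ is concave (since $h'''(y)=-e^y<0$) with positive endpoint values $h'(\ln 2)=2\ln 2-1>0$ and $h'(1)=3-e>0$; concavity of $h'$ then forces $h'\ge \min(h'(\ln 2),h'(1))>0$ on the whole sub-interval, so $h$ is strictly increasing there, and since $h(\ln 2)\ge 0$ one concludes $h\ge 0$ on all of $[-1,1]$.

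The main obstacle is precisely the piece $[\ln 2, 1]$, where $h$ itself loses convexity and the clean "convex function with interior critical point" argument breaks. Pivoting one derivative up to $h'$, which remains concave on the whole real line, and combining this with positivity at the two endpoints is what circumvents the difficulty. An alternative derivation would go through the power series $e^y-1-y=\sum_{k\ge 2}y^k/k!$: for $y\in[0,1]$ each term satisfies $y^k/k!\le y^2/k!$, summing to at most $(e-2)y^2\le y^2$, while for $y\in[-1,0]$ an alternating-series estimate gives $|e^y-1-y|\le y^2/2$. Either route works, but the derivative-based argument above is the most compact and would be my first choice.
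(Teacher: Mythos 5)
Your proof is correct, but it takes a different route from the paper's. You first establish the deterministic inequality $e^y \leq 1+y+y^2$ for $y\in[-1,1]$ (via a two-regime calculus argument on $h(y)=1+y+y^2-e^y$, using convexity of $h$ up to $\ln 2$ and concavity of $h'$ beyond), and only then take expectations, using $\mbb{E}[\varepsilon]=0$ and $\mbb{E}[\varepsilon^2]=\sigma^2$. The paper instead works directly at the level of expectations: it expands $\mbb{E}[\exp(\lambda\varepsilon)]=1+\sum_{k\geq 2}\frac{\lambda^k}{k!}\mbb{E}[\varepsilon^{k-2}\varepsilon^2]$, bounds each moment by $\sigma^2$ using $|\varepsilon|\leq 1$, and sums the tail of the exponential series to get the constant $e-2<1$ — essentially the "alternative derivation" you sketch at the end, carried out in expectation rather than pointwise. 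The series argument is shorter and incidentally yields the slightly sharper bound $1+(e-2)\lambda^2\sigma^2$, whereas your pointwise inequality gives exactly the stated constant $1$; on the other hand, your reduction to a scalar inequality avoids any manipulation of infinite sums of moments and cleanly isolates where the hypotheses $|\lambda|\leq 1$ and $|\varepsilon|\leq 1$ enter (only to guarantee $y=\lambda\varepsilon\in[-1,1]$). Both arguments are complete and correct.
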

\begin{proof}
A decomposition of the exponential gives:
$$
\begin{aligned}
     \mbb{E}\left[\exp(\lambda \epsilon)\right] &= 1 + \sum_{k=2}^{\infty} \frac{\lambda^k}{k!}\mbb{E}\left[\epsilon^{k-2}\epsilon^2\right] &\\
     &\leq 1+ \lambda^2\sigma^2\sum_{k=2}^{\infty} \frac{\vert\lambda\vert^{k-2}}{k!} &(\vert \varepsilon\vert \leq 1)\\
     & \leq 1+\lambda^2\sigma^2(e-2)\leq 1+\lambda^2\sigma^2 &(\vert \lambda\vert\leq 1)
\end{aligned}
$$
\end{proof}

\subsection{Proof of Lemma~\ref{lemma:NhoverNg}}
\label{sec:prooflemmaratio}
\NhoverNg*
By definition of $N(h)$ and thanks to a change of variable: 
\begin{align*}
    N(h) &= \int_{\mbb{R}^d} \mathds{1}\left[\lVert \xi\rVert_2 \leq 1\right]\exp\left(-\frac{1}{2}\beta^2\lVert \xi\rVert_2^2\right)d\xi\\
    &=  \beta^{-d}\int_{\mbb{R}^d} \mathds{1}\left[\lVert \xi\rVert_2 \leq \beta\right]\exp\left(-\frac{1}{2}\lVert \xi\rVert_2^2\right)d\xi
\end{align*}
Also by a change of variable:
\begin{align*}
    N(g) &=  \int_{\mbb{R}^d} \mathds{1}\left[\lVert \xi\rVert_2 \leq 1/2\right]\exp\left(-\frac{1}{2}\xi^T(2\mbold{H}_t)\xi\right)d\xi \\
     &= \det(\mbold{H}_t)^{-1/2}2^{-d/2} \int_{\mbb{R}^d} \mathds{1}\left[\lVert 2^{-1/2}\mbold{H}_t^{-1/2}\xi\rVert_2 \leq 1/2\right]\exp\left(-\frac{1}{2}\lVert \xi\rVert_2^2\right)\\
     &= \det(\mbold{H}_t)^{-1/2}2^{-d/2} \int_{\mbb{R}^d} \mathds{1}\left[\left\lVert \left(\bar{\mbold{H}}_t+\frac{\beta^2}{2}\mbold{I}_d\right)^{-1/2}\xi\right\rVert_2 \leq 1/\sqrt{2}\right]\exp\left(-\frac{1}{2}\lVert \xi\rVert_2^2\right)\\
     &\geq \det(\mbold{H}_t)^{-1/2}2^{-d/2} \int_{\mbb{R}^d} \mathds{1}\left[\lVert\xi\rVert_2 \leq \beta/2\right]\exp\left(-\frac{1}{2}\lVert \xi\rVert_2^2\right)\\    
\end{align*}
We obtain the following upper-bound on the ratio $N(h)/N(g)$:
\begin{align}
    \frac{N(h)}{N(g)} \leq \beta^{-d}\det(\mbold{H}_t)^{1/2}\,2^{d/2}\,\frac{\int_{\mbb{R}^d} \mathds{1}\left[\lVert \xi\rVert_2 \leq \beta\right]\exp\left(-\frac{1}{2}\lVert \xi\rVert_2^2\right)d\xi}{\int_{\mbb{R}^d} \mathds{1}\left[\lVert\xi\rVert_2 \leq \beta/2\right]\exp\left(-\frac{1}{2}\lVert \xi\rVert_2^2\right)}
    \label{eq:NhNginterm}
\end{align}
Note that:
\begin{align*}
    \frac{\int_{\mbb{R}^d} \mathds{1}\left[\lVert \xi\rVert_2 \leq \beta\right]\exp\left(-\frac{1}{2}\lVert \xi\rVert_2^2\right)d\xi}{\int_{\mbb{R}^d} \mathds{1}\left[\lVert\xi\rVert_2 \leq \beta/2\right]\exp\left(-\frac{1}{2}\lVert \xi\rVert_2^2\right)} &= 1 + \frac{\int_{\mbb{R}^d} \mathds{1}\left[\beta/2\leq \lVert \xi\rVert_2 \leq \beta\right]\exp\left(-\frac{1}{2}\lVert \xi\rVert_2^2\right)d\xi}{\int_{\mbb{R}^d} \mathds{1}\left[\lVert\xi\rVert_2 \leq \beta/2\right]\exp\left(-\frac{1}{2}\lVert \xi\rVert_2^2\right)}\\
    &\leq 1 + \frac{\exp\left(-\frac{1}{8}\beta^2\right)}{\exp\left(-\frac{1}{8}\beta^2\right)} \cdot \frac{\int_{\mbb{R}^d} \mathds{1}\left[\beta/2\leq \lVert \xi\rVert_2 \leq \beta\right]d\xi}{\int_{\mbb{R}^d} \mathds{1}\left[\lVert\xi\rVert_2 \leq \beta/2\right]}\\
    &= 1+ \frac{\mcal{V}_d(\beta)-\mcal{V}_d(\beta/2)}{\mcal{V}_d(\beta/2)}\\
    &= 2^d
\end{align*}
where $\mcal{V}_d(\beta)\propto \beta^d$ denotes the volume of the $d$-dimensional ball of radius $\beta$. Plugging this result in Equation~\eqref{eq:NhNginterm} and taking the logarithm yields the announced result:
\begin{align*}
    \log\left(\frac{N(h)}{N(g)}\right) \leq \log\left(\frac{2^{d/2}\det(\mbold{H}_t)^{1/2}}{\beta^d}\right) + d\log(2)
\end{align*}

\section{Proofs of prediction and concentration results}
\label{sec:proof_prediction}

For all this section, we will use the following notations:
\begin{align*}
    \alpha(x,\theta_1,\theta_2) &\defeq \int_{v=0}^1 \dot{\mu}(vx^\transp \theta_2+(1-v)x^\transp\theta_1)dv >0 \\
    \mbold{G}_t(\theta_1,\theta_2) &\defeq \sum_{s=1}^{t-1} \alpha(x,\theta_1,\theta_2)x_sx_s^\transp + \lambda\mbold{I}_d \\
    \mbold{H}_t(\theta_1) &\defeq \sum_{s=1}^{t-1}\dot{\mu}(x^\transp \theta_1)x_sx_s^\transp + \lambda \mbold{I}_d\\
    \mbold{V}_t &\defeq \sum_{s=1}^{t-1}x_sx_s^\transp + \kappa\lambda \mbold{I}_d
\end{align*}
where $\theta_1,\theta_2$ and $x$ are vectors in $\mbb{R}^d$ and $\lambda$ is a strictly positive scalar. We will extensively use that fact that $\forall \theta_1,\theta_2\in\Theta$ we have $\mbold{G}_t(\theta_1,\theta_2)\geq \kappa^{-1}\mbold{V}_t$ and $\mbold{H}_t(\theta_1)\geq \kappa^{-1}\mbold{V}_t$.

The quantities $\alpha(x,\theta_1,\theta_2)$ and $\mbold{G}_t(\theta_1,\theta_2)$ naturally arise when studying GLMs. Indeed, note that for all $x\in\mbb{R}^d$ and $\theta\in\mbb{R}^d$, the following equality holds:
    \begin{align}
        \mu(x^\transp \theta_1)-\mu(x^\transp \theta_2) = \alpha(x,\theta_2,\theta_1)x^\transp (\theta_1-\theta_2)
        \label{eq:mvt}
    \end{align}
This result is classical (see \cite{filippi2010parametric}) and can be obtained by a straight-forward application of the mean-value theorem. It notably allows us to link $\theta_1-\theta_2$ with $g_t(\theta_1)-g_t(\theta_2)$. Namely, it is straightforward that:
\begin{align*}
    g_t(\theta_1)-g_t(\theta_2) &= \sum_{s=1}^{t-1}\alpha(x_s,\theta_2,\theta_1) x_sx_s^\transp(\theta_1-\theta_2) +\lambda\theta_1 - \lambda \theta_2\\
    &= \mbold{G}_t(\theta_2,\theta_1)(\theta_1-\theta_2)
\end{align*}
Because $\mbold{G}_t(\theta_1,\theta_2)\succ \mbold{0}_{d\times d}$ this yields:
\begin{align}
    \left \lVert\theta_1-\theta_2\right\rVert_{\mbold{G}_t(\theta_2,\theta_1)} = \left \lVert g_t(\theta_1)-g_t(\theta_2)\right\rVert_{\mbold{G}_t^{-1}(\theta_2,\theta_1)}
    \label{eq:dThetatodgt}
\end{align}

\subsection{Proof of Lemma~\ref{lemma:confidenceset}}
\label{sec:proofconcentration}

\lemmaconfidenceset*

Note that, for any $t\geq 1$:
\begin{align}
    \theta_*\in\mcal{C}_t \Longleftrightarrow \left\lVert g_t(\hat{\theta}_t) - g_t(\theta_*)\right\rVert_{\mbold{H}_t^{-1}(\theta_*)} \leq \gamma_t(\delta)
    \label{eq:equivalenceconfset}
\end{align}
To prove Lemma~\ref{lemma:confidenceset} we therefore need to ensure that the r.h.s happens  for all $t\geq 1$ with probability at least $1-\delta$. This is the object of the following Lemma, of which Lemma~\ref{lemma:confidenceset} is a direct corollary.

\begin{restatable}[]{lemma}{StHtbounded}
Let $\delta\in(0,1]$. With probability at least $1-\delta$:
\begin{align*}
    \forall t\geq 1, \quad \left\lVert g_t(\hat{\theta}_t) - g_t(\theta_*)\right\rVert_{\mbold{H}_t^{-1}(\theta_*)} \leq \gamma_t(\delta)
\end{align*}
\label{lemma:StHtbounded}
\end{restatable}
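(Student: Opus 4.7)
The plan is to reduce the statement to a direct application of Theorem~\ref{thm:bernsteinselfnormalized} applied to a noise sequence built from the Bernoulli rewards, and then to absorb the regularization term via the triangle inequality and to bound the log-determinant in the concentration constant by a log-volume estimate.

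First, I would use the first-order optimality condition $\nabla_\theta \mcal{L}_t^\lambda(\hat\theta_t)=0$ together with the decomposition in \eqref{eq:defgt} to write
\begin{align*}
    g_t(\hat\theta_t) - g_t(\theta_*) = \sum_{s=1}^{t-1}\bigl(r_{s+1}-\mu(x_s^\transp\theta_*)\bigr)x_s \;-\; \lambda\theta_*
    \;=\; S_t - \lambda\theta_*,
\end{align*}
where $\varepsilon_{s+1}\defeq r_{s+1}-\mu(x_s^\transp\theta_*)$ is, by the reward model \eqref{eq:reward_model}, a centered Bernoulli increment that is bounded by $1$ and whose conditional variance equals $\sigma_s^2=\mu(x_s^\transp\theta_*)(1-\mu(x_s^\transp\theta_*))=\dot\mu(x_s^\transp\theta_*)$. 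With this variance, the Bernstein-weighted matrix from Theorem~\ref{thm:bernsteinselfnormalized} coincides exactly with $\mbold{H}_t(\theta_*)$ as defined in \eqref{eq:defht}, so the process $(x_s,\varepsilon_{s+1})$ fits the hypotheses of the theorem verbatim.

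Next, by the triangle inequality in the $\mbold{H}_t^{-1}(\theta_*)$-norm,
\begin{align*}
    \bigl\lVert g_t(\hat\theta_t)-g_t(\theta_*)\bigr\rVert_{\mbold{H}_t^{-1}(\theta_*)}
    \leq \lVert S_t\rVert_{\mbold{H}_t^{-1}(\theta_*)} + \lambda\lVert\theta_*\rVert_{\mbold{H}_t^{-1}(\theta_*)}.
\end{align*}
Since $\mbold{H}_t(\theta_*)\succeq \lambda\mbold{I}_d$, the regularization term is bounded by $\sqrt{\lambda}\,\ltwo{\theta_*}\leq \sqrt{\lambda}\,S$ by Assumption~\ref{asm:param}. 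Theorem~\ref{thm:bernsteinselfnormalized}, applied uniformly in $t$, then gives that with probability at least $1-\delta$ and for all $t\geq 1$,
\begin{align*}
    \lVert S_t\rVert_{\mbold{H}_t^{-1}(\theta_*)} \leq \frac{\sqrt{\lambda}}{2} + \frac{2}{\sqrt{\lambda}}\log\!\left(\frac{\det(\mbold{H}_t(\theta_*))^{1/2}\lambda^{-d/2}}{\delta}\right) + \frac{2}{\sqrt{\lambda}}d\log 2.
\end{align*}

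Finally, I would bound the log-determinant in a dimension-free way using AM--GM and the trace bound: since $\dot\mu\leq L$ and $\ltwo{x_s}\leq 1$,
\begin{align*}
    \det(\mbold{H}_t(\theta_*))\;\leq\;\Bigl(\tfrac{1}{d}\mathrm{tr}(\mbold{H}_t(\theta_*))\Bigr)^d\;\leq\;\Bigl(\lambda + \tfrac{L(t-1)}{d}\Bigr)^d,
\end{align*}
so $\det(\mbold{H}_t(\theta_*))^{1/2}\lambda^{-d/2} \leq (1+Lt/(d\lambda))^{d/2}$. Combining everything and collecting the $d\log 2$ term into the $2^d$ factor inside the logarithm yields exactly the expression $\gamma_t(\delta)$, which together with the triangle inequality gives the claim. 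The only mild subtlety is ensuring that the uniform-in-$t$ event from Theorem~\ref{thm:bernsteinselfnormalized} is used (so that a single high-probability event suffices for all $t$), and that the determinant bound depends only on the worst-case upper bound $L$ on $\dot\mu$ rather than on $\theta_*$; both are straightforward, so no genuine obstacle arises in this proof.
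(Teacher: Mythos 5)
Your proposal is correct and follows essentially the same route as the paper's own proof: the same optimality-condition decomposition $g_t(\hat\theta_t)-g_t(\theta_*)=S_t-\lambda\theta_*$, the same identification of $\sigma_s^2=\dot\mu(x_s^\transp\theta_*)$ so that Theorem~\ref{thm:bernsteinselfnormalized} applies with $\mbold{H}_t(\theta_*)$, the same triangle-inequality absorption of the $\sqrt{\lambda}S$ term, and the same determinant--trace bound (the paper invokes Lemma~\ref{lemma:determinant_trace_inequality}, which is exactly your AM--GM argument). No gaps.
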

\begin{proof}
Recall that $\hat{\theta}_t$ is the unique maximizer of the log-likelihood:
\begin{align*}
    \mcal{L}_t^\lambda(\theta) &\defeq \sum_{s=1}^{t-1} \Big[r_{s+1}\log\mu(x_s^\transp \theta)+(1-r_{s+1})\log(1-\mu(x_s^\transp \theta))\Big] - \frac{\lambda}{2}\ltwo{\theta}^2
\end{align*}
and therefore $\hat{\theta}_t$ is a critical point of $\mcal{L}_t^\lambda(\theta)$. Solving for $\nabla_\theta \mcal{L}_t^\lambda = 0$ and using the fact that $\dot{\mu}=\mu(1-\mu)$ we obtain:
\begin{align*}
    \sum_{s=1}^{t-1}\mu(\hat{\theta}_t^\transp x_s)x_s +\lambda\hat{\theta}_t= \sum_{s=1}^{t-1}r_{s+1}x_s
\end{align*}
This result, combined with the definition of $g_t(\theta_*) = \sum_{s=1}^{t-1}\mu(x_s^\transp \theta_*)x_s+\lambda\theta_*$ yields:
$$
\begin{aligned}
    g_t(\hat{\theta}_t) - g_t(\theta_*) &= \sum_{s=1}^{t-1} \varepsilon_{s+1}x_s - \lambda \theta_*\\
    &= S_t - \lambda\theta_*
\end{aligned}
$$
where we denoted $\varepsilon_{s+1}\defeq r_{s+1}-\mu(x_s^\transp \theta_*)$ for all $s\geq 1$ and $S_t \defeq \sum_{s=1}^{t-1}\varepsilon_{s+1}x_s$ for all $t\geq 1$. Simple linear algebra implies that:
\begin{align}
    \left\lVert g_t(\hat{\theta}_t) - g_t(\theta_*)\right\rVert_{\mbold{H}_t^{-1}(\theta_*)} \leq   \left\lVert S_t\right\rVert_{\mbold{H}_t^{-1}(\theta_*)} + \sqrt{\lambda} S
    \label{eq:firstboundgt}
\end{align}
Note that by Equation~\eqref{eq:reward_model}, $\{\varepsilon_t\}_{t=1}^\infty$ is a martingale difference sequence adapted to $\mcal{F}$ and almost surely bounded by 1. Also, note that for all $s\geq 1$:
$$
    \dot{\mu}(x_s^\transp \theta_*)=\mu(x_s^\transp \theta_*)(1-\mu(x_s^\transp \theta_*)) = \mbb{E}\left[\varepsilon_{s+1}^2\vert \mcal{F}_t\right]\rdefeq \sigma_s^2 
$$
and thus $\mbold{H}_t(\theta_*)=\sum_{s=1}^{t-1}\sigma_s^2x_sx_s^\transp  + \lambda\mbold{I}_d$. All the conditions of Theorem~\ref{thm:bernsteinselfnormalized} are checked and therefore:
\begin{align}
  1-\delta &\leq \mbb{P}\left(\forall t\geq 1, \left\lVert S_t \right\rVert_{\mbold{H}_t^{-1}(\theta_*)}\leq \frac{\sqrt{\lambda}}{2} +  \frac{2}{\sqrt{\lambda}}\log\left(\frac{\det(\mbold{H}_t(\theta_*))^{1/2}\lambda^{-d/2}}{\delta}\right)+ \frac{2d}{\sqrt{\lambda}}\log(2)\right)\nonumber\\
  &\leq \mbb{P}\left(\forall t\geq 1, \left\lVert S_t \right\rVert_{\mbold{H}_t^{-1}(\theta_*)}\leq \frac{\sqrt{\lambda}}{2} +  \frac{2}{\sqrt{\lambda}}\log\left(\frac{\left(\lambda +L t/d\right)^{d/2}\lambda^{-d/2}}{\delta}\right)+ \frac{2d}{\sqrt{\lambda}}\log(2)\right)\nonumber\\
  &\leq \mbb{P}\left(\forall t\geq 1, \left\lVert S_t \right\rVert_{\mbold{H}_t^{-1}(\theta_*)}\leq \frac{\sqrt{\lambda}}{2} +  \frac{2}{\sqrt{\lambda}}\log\left(\frac{\left(1 +\frac{ L t}{\lambda d}\right)^{d/2}}{\delta}\right)+ \frac{2d}{\sqrt{\lambda}}\log(2)\right)\nonumber
  \\
  &= \mbb{P}\left(\forall t\geq 1, \left\lVert S_t \right\rVert_{\mbold{H}_t^{-1}(\theta_*)}\leq \gamma_t(\delta)-\sqrt{\lambda}S \right)\label{eq:defEdelta}
\end{align}
where we used that:
\begin{align*}
    \det\left(\mbold{H}_t(\theta_*)\right) \leq  L^{d}\det\left(\sum_{s=1}^{t-1}x_sx_s^\transp  + \frac{\lambda}{ L}\mbold{I}_d\right)\leq  L^d \left(\frac{\lambda}{ L} + \frac{T}{d}\right)^d\leq \left(\lambda + \frac{ Lt}{d}\right)^d
\end{align*}
thanks to Lemma~\ref{lemma:determinant_trace_inequality}.
Assembling Equation~\eqref{eq:firstboundgt} with Equation~\eqref{eq:defEdelta} yields:
\begin{align*}
    \mbb{P}\left(\forall t\geq 1, ~\left\lVert g_t(\hat{\theta}_t) - g_t(\theta_*)\right\rVert_{\mbold{H}_t^{-1}(\theta_*)} \leq \gamma_t(\delta)\right)
    &\geq \mbb{P}\left( \forall t\geq 1, ~\left\lVert S_t\right\rVert_{\mbold{H}_t^{-1}(\theta_*)} + \sqrt{\lambda} S\leq \gamma_t(\delta)\right)\\
    &\geq 1-\delta
\end{align*}
hence the announced result.
\end{proof}
\begin{rem*}
In the following sections we will often use the rewriting of $E_\delta$ inherited from Equation~\eqref{eq:equivalenceconfset}:
\begin{align*}
    E_\delta = \left\{ \forall t\geq 1, ~~ \left\lVert g_t(\hat{\theta}_t) - g_t(\theta_*)\right\rVert_{\mbold{H}_t^{-1}(\theta_*)} \leq \gamma_t(\delta) \right\}
\end{align*}
\end{rem*}
\subsection{Key self-concordant results}
We start this section by claiming and proving Lemma~\ref{lemma:self_concordance}, which uses the generalized self-concordance property of the log-loss and will be later used to derive useful lower-bounds on the function $\alpha(\cdot)$.
\begin{lemma}[Self-concordance control]
\label{lemma:self_concordance}
For any $z_1,z_2\in\mathbb{R}$, we have the following inequality:
$$
    \dot{\mu}(z_1)\frac{1-\exp(-\vert z_1-z_2\vert)}{\vert z_1-z_2\vert} \leq \int_{0}^1 \dot{\mu}(z_1+v(z_2-z_1))dv \leq  \dot{\mu}(z_1)\frac{\exp(\vert z_1-z_2\vert)-1}{\vert z_1-z_2\vert}
$$
Furthermore:
$$
    \int_{0}^1 \dot{\mu}(z_1+v(z_2-z_1))dv \geq \dot{\mu}(z_1)(1+\vert z_1-z_2\vert)^{-1}
$$
\end{lemma}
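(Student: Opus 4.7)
The plan is to exploit the generalized self-concordance property stated just before the lemma, namely that $|\ddot\mu| \leq \dot\mu$. Setting $\phi(z) \defeq \log \dot\mu(z)$, we have $\phi'(z) = \ddot\mu(z)/\dot\mu(z)$, so $|\phi'| \leq 1$ pointwise. Hence $\phi$ is $1$-Lipschitz, which after exponentiation yields the two-sided pointwise bound
\begin{align*}
    \dot\mu(z_1)\,e^{-|z - z_1|} \;\leq\; \dot\mu(z) \;\leq\; \dot\mu(z_1)\,e^{|z - z_1|}
\end{align*}
valid for all $z, z_1 \in \mathbb{R}$.

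Apply this bound along the segment $z = z_1 + v(z_2 - z_1)$, so that $|z - z_1| = v\,|z_2 - z_1|$. Integrating over $v \in [0,1]$ and computing the resulting elementary exponential integrals yields
\begin{align*}
    \dot\mu(z_1)\,\frac{1 - e^{-|z_1 - z_2|}}{|z_1 - z_2|} \;\leq\; \int_0^1 \dot\mu(z_1 + v(z_2 - z_1))\,dv \;\leq\; \dot\mu(z_1)\,\frac{e^{|z_1 - z_2|} - 1}{|z_1 - z_2|},
\end{align*}
which is exactly the first claim. (The degenerate case $z_1 = z_2$ is recovered by continuity.)

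For the second claim it suffices to compare the lower bound obtained above against $\dot\mu(z_1)(1 + |z_1 - z_2|)^{-1}$. Letting $u \defeq |z_1 - z_2| \geq 0$, this reduces to the elementary inequality $1 + u \leq e^u$, since that is equivalent to $(1+u)e^{-u} \leq 1$, hence to $\tfrac{u}{1+u} \leq 1 - e^{-u}$, hence to $(1+u)^{-1} \leq (1 - e^{-u})/u$. None of these steps presents an obstacle; the only real content is the self-concordance input $|\ddot\mu|\leq \dot\mu$, from which everything flows by taking logarithms and integrating.
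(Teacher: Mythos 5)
Your proof is correct and follows essentially the same route as the paper: both derive the pointwise bound $\dot\mu(z_1)e^{-|z-z_1|}\leq\dot\mu(z)\leq\dot\mu(z_1)e^{|z-z_1|}$ from $|\ddot\mu|\leq\dot\mu$ by integrating $\tfrac{d}{dz}\log\dot\mu$, then integrate along the segment and finish the second claim with $1+u\leq e^u$. The only cosmetic difference is that you handle both orderings of $z_1,z_2$ at once via the Lipschitz formulation, whereas the paper treats the cases $z_2\geq z_1$ and $z_2\leq z_1$ separately.
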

\begin{proof}
    The proof is based on the generalized self-concordance property of the logistic loss, which is detailed and exploited in other works on the logistic regression \citep{bach2010self}. This part of the analysis relies on similar properties of the logistic function. Indeed, a short computation shows that for any $z\in\mbb{R}^d$, one has $\vert \ddot{\mu}(z)\vert \leq \dot{\mu}(z)$. Therefore, using the fact that $\dot{\mu}(z)>0$ for all $z$ in any compact of $\mbb{R}$, one has that for all $z\geq z_1$:
    $$
        -(z-z_1) \leq \int_{z_1}^{z}\frac{d}{dv}\log\dot{\mu}(v)dv  \leq z-z_1
    $$
which in turns gives us that:
$$
    \dot{\mu}(z_1)\exp(-(z-z_1)) \leq \dot{\mu}(z) \leq \dot{\mu}(z_1)\exp(z-z_1)
$$
Assuming that $z_2\geq z_1$, setting $z=z_1+v(z_2-z_1)>z_1$ for $v\in[0,1]$ and integrating over $v$ gives:
$$
    \dot{\mu}(z_1) \frac{1-\exp(-(z_2-z_1))}{z_2-z_1} \leq \int_{0}^1 \dot{\mu}(z_1+v(z_2-z_1))dv \leq \dot{\mu}(z_1)\frac{\exp(z_2-z_1)-1}{z_2-z_1}
$$
Repeating this operation for $z\leq z_1$ and $z_2\leq z_1$ leads to:
$$
    \dot{\mu}(z_1) \frac{\exp(z_2-z_1)-1}{z_2-z_1} \leq \int_{0}^1 \dot{\mu}(z_1+v(z_2-z_1))dv \leq \dot{\mu}(z_1)\frac{1-\exp(z_1-z_2)}{z_1-z_2}
$$
Combining the last two equations gives the first result. Note that if $x\geq 0$ we have $e^{-x}\leq (1+x)^{-1}$, and therefore $(1-e^{-x})/x \geq (1+x)^{-1}$. Applying this inequality to the l.h.s of the first result provides the second statement of the lemma.
\end{proof}

We now state Lemma~\ref{lemma:boundGtbyHt} which allows to provide a control of $\mbold{G}_t(\theta_1,\theta_2)$ by $\mbold{H}_t(\theta_1)$ and $\mbold{H}_t(\theta_2)$. 

\begin{restatable}[]{lemma}{boundGtbyHt}
For all $\theta_1,\theta_2\in\Theta$ the following inequalities hold:
\begin{align*}
    \mbold{G}_t(\theta_1,\theta_2) &\geq (1+2S)^{-1}\mbold{H}_t(\theta_1)\\
    \mbold{G}_t(\theta_1,\theta_2) &\geq (1+2S)^{-1}\mbold{H}_t(\theta_2)
\end{align*}
\label{lemma:boundGtbyHt}
\end{restatable}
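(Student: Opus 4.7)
The plan is to derive both inequalities by a pointwise lower bound on the integrand $\alpha(x_s,\theta_1,\theta_2)$, leveraging the second (tighter, non-exponential) part of Lemma~\ref{lemma:self_concordance} combined with the compactness of $\Theta$ and the arm-set.

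First I would apply Lemma~\ref{lemma:self_concordance} with $z_1 = x_s^\transp \theta_1$ and $z_2 = x_s^\transp \theta_2$ to obtain
\[
\alpha(x_s,\theta_1,\theta_2) \;=\; \int_0^1 \dot\mu\bigl(z_1 + v(z_2 - z_1)\bigr)\,dv \;\geq\; \frac{\dot\mu(x_s^\transp\theta_1)}{1 + |x_s^\transp(\theta_2-\theta_1)|}.
\]
Then, using Cauchy--Schwarz together with Assumptions~\ref{asm:param} and~\ref{asm:arm.set}, I would bound $|x_s^\transp(\theta_2-\theta_1)| \leq \ltwo{x_s}\,\ltwo{\theta_2-\theta_1} \leq 2S$, which yields the uniform pointwise inequality $\alpha(x_s,\theta_1,\theta_2) \geq (1+2S)^{-1}\dot\mu(x_s^\transp\theta_1)$.

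Summing over $s=1,\dots,t-1$ against the rank-one matrices $x_s x_s^\transp \succeq 0$ gives
\[
\sum_{s=1}^{t-1}\alpha(x_s,\theta_1,\theta_2)\,x_s x_s^\transp \;\succeq\; (1+2S)^{-1}\sum_{s=1}^{t-1}\dot\mu(x_s^\transp\theta_1)\,x_s x_s^\transp.
\]
Adding $\lambda \mbold{I}_d$ on the left and noting that $\lambda \mbold{I}_d \succeq (1+2S)^{-1}\lambda \mbold{I}_d$ (since $(1+2S)^{-1} \leq 1$) recovers the first claim $\mbold{G}_t(\theta_1,\theta_2) \succeq (1+2S)^{-1}\mbold{H}_t(\theta_1)$.

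For the second inequality I would simply observe that $\alpha$ is symmetric in its last two arguments: the change of variables $u = 1-v$ in the defining integral gives $\alpha(x,\theta_1,\theta_2) = \alpha(x,\theta_2,\theta_1)$, hence $\mbold{G}_t(\theta_1,\theta_2) = \mbold{G}_t(\theta_2,\theta_1)$, and the previous argument applied with the roles of $\theta_1$ and $\theta_2$ swapped delivers the bound against $\mbold{H}_t(\theta_2)$. There is no real obstacle here; the only delicate step is choosing the second (rather than the first) inequality of Lemma~\ref{lemma:self_concordance}, since the exponential form $(\exp(|z_2-z_1|)-1)/|z_2-z_1|$ would produce a constant scaling as $e^{2S}/(2S)$ rather than $(1+2S)^{-1}$, which would reintroduce an exponential dependence in $S$ incompatible with the downstream regret analysis.
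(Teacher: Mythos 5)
Your proposal is correct and follows essentially the same route as the paper's proof: the pointwise lower bound $\alpha(x_s,\theta_1,\theta_2)\geq (1+2S)^{-1}\dot\mu(x_s^\transp\theta_1)$ via the second inequality of Lemma~\ref{lemma:self_concordance} plus Cauchy--Schwarz, summing against the rank-one matrices, absorbing $\lambda\mbold{I}_d$ using $(1+2S)^{-1}\leq 1$, and invoking the symmetry of $\alpha$ in $\theta_1,\theta_2$ for the second bound. No gaps.
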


\begin{proof}
The proof relies on the self-concordance property of the log-loss, which comes with the fact $\vert \ddot{\mu}\vert \leq \dot{\mu}$. As detailed in Lemma~\ref{lemma:self_concordance} this allows us to derive an exponential-control lower bound on $\dot{\mu}$. Let $x\in\mcal{B}_2(d)$. By applying Lemma~\ref{lemma:self_concordance} with $z_1=x^\transp \theta_1$ and $z_2=x^\transp (\theta_2-\theta_1)$ we obtain that:
\begin{align*}
        \alpha(x,\theta_1,\theta_2) &\geq \left(1+\left\vert x^\transp (\theta_1-\theta_2)\right\vert\right)^{-1} \dot{\mu}(x^\transp \theta_1) & \\
        &\geq \left(1+\ltwo{x}\ltwo{\theta_1-\theta_2}\right)^{-1}\dot{\mu}(x^\transp \theta_1) &\text{(Cauchy-Schwartz)}\\
        &\geq \left(1+2S\right)^{-1}\dot{\mu}(x^\transp \theta_1) & \text{($\theta_1,\theta_2\in\Theta, x\in\mcal{B}_2(d)$)}
\end{align*}
Using this lower bound:
\begin{align*}
    \mbold{G}_t(\theta_1,\theta_2) &= \sum_{s=1}^{t-1} \alpha(x_s,\theta_1,\theta_2)x_sx_s^\transp  + \lambda\mbold{I}_d
    \\ &\succeq (1+2S)^{-1}\sum_{s=1}^{t-1}\dot{\mu}(x_s^\transp \theta_1)x_sx_s^\transp  + \lambda\mbold{I}_d &(x_s\in\mcal{X}\subseteq\mcal{B}_2(d), \theta_1,\theta_2\in\Theta)\\
    &= (1+2S)^{-1}\left(\sum_{s=1}^{t-1}\dot{\mu}(x_s^\transp \theta_1)x_sx_s^\transp  + (1+2S)\lambda \mbold{I}_d\right)\\
    &\succeq (1+2S)^{-1}\left(\sum_{s=1}^{t-1}\dot{\mu}(x_s^\transp \theta_1)x_sx_s^\transp  + \lambda \mbold{I}_d\right) & (S>0)\\
    &=(1+2S)^{-1}\mbold{H}_t(\theta_1)
\end{align*}
which yields the first inequality. Realizing (through a change of variable for instance) that $\theta_1$ and $\theta_2$ have symmetric roles in the definition of $\alpha(x,\theta_1,\theta_2)$ directly yields the second inequality.
\end{proof}

\subsection{Proof of claims in Section~\ref{sec:newconfidenceset}}
\label{sec:proofconfidenceset}

This section focuses on giving a rigorous proof for the different "degraded" confidence sets that we introduce for visualization purposes. For a rigorous proof, we need to discard the assumption that $\hat\theta_t\in\Theta$. To this end we introduce the "projections":
\begin{align*}
    \theta_t^{\rm L} &\defeq \argmin_{\theta\in\Theta} \left\lVert g_t(\theta)-g_t(\hat\theta_t)\right\rVert_{\mbold{V}_t^{-1}},\\
    \theta_t^{\rm NL} &\defeq \argmin_{\theta\in\Theta} \left\lVert g_t(\theta)-g_t(\hat\theta_t)\right\rVert_{\mbold{H}_t^{-1}(\theta)}.
\end{align*}
Note that when $\hat\theta_t\in\Theta$, both $\theta_t^{\rm L}$ and $\theta_t^{\rm NL}$ are equal to $\hat\theta_t$.
We properly define $\mcal{E}_t^{\rm NL}(\delta)$ using the estimator $\theta_t^{\rm NL}$:
\begin{align*}
    \mcal{E}_t^{\rm NL}(\delta) &\defeq \left\{\theta\in\Theta, \left\lVert \theta - \theta_t^{\rm NL}\right\rVert_{\mbold{H}_t(\theta)} \leq (2+4S)\gamma_t(\delta)\right\}.
\end{align*}
When $\hat\theta_t\in\Theta$ we can save a factor 2 in the width of the set, hence formally matching the definition we gave in the main text. 

\begin{restatable}[]{lemma}{elltnl}
With probability at least $1-\delta$:
\begin{align*}
    \forall t\geq1,~~ \theta_*\in \mcal{E}_t^{\rm NL}(\delta)
\end{align*}
\label{lemma:elltnl}
\end{restatable}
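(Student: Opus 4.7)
The plan is to operate entirely on the good event $E_\delta$ of Lemma~\ref{lemma:confidenceset}, which has probability at least $1-\delta$ and yields $\|g_t(\theta_*)-g_t(\hat\theta_t)\|_{\mbold{H}_t^{-1}(\theta_*)}\leq \gamma_t(\delta)$ for every $t\geq 1$. The target inequality $\|\theta_*-\theta_t^{\rm NL}\|_{\mbold{H}_t(\theta_*)}\leq (2+4S)\gamma_t(\delta)$ cannot be read off directly: we only control distances in the $g_t$-image, not in parameter space. The bridge will be identity~\eqref{eq:dThetatodgt}, which turns $\theta$-distances in the $\mbold{G}_t$-norm into $g_t$-distances in the $\mbold{G}_t^{-1}$-norm, together with Lemma~\ref{lemma:boundGtbyHt}, which lets us swap the $\mbold{G}_t$-metric for the (easier to control) $\mbold{H}_t$-metric at a uniform multiplicative price $\sqrt{1+2S}$.

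Concretely, the first step is to observe that since $\theta_*\in\Theta$, the minimization defining $\theta_t^{\rm NL}$ gives
\[
\left\lVert g_t(\theta_t^{\rm NL})-g_t(\hat\theta_t)\right\rVert_{\mbold{H}_t^{-1}(\theta_t^{\rm NL})}\;\leq\; \left\lVert g_t(\theta_*)-g_t(\hat\theta_t)\right\rVert_{\mbold{H}_t^{-1}(\theta_*)}\;\leq\;\gamma_t(\delta).
\]
Then, using~\eqref{eq:dThetatodgt} with $(\theta_1,\theta_2)=(\theta_t^{\rm NL},\theta_*)$ and applying the triangle inequality through $g_t(\hat\theta_t)$ yields
\[
\left\lVert \theta_*-\theta_t^{\rm NL}\right\rVert_{\mbold{G}_t(\theta_t^{\rm NL},\theta_*)} \leq \left\lVert g_t(\theta_*)-g_t(\hat\theta_t)\right\rVert_{\mbold{G}_t^{-1}(\theta_t^{\rm NL},\theta_*)} + \left\lVert g_t(\hat\theta_t)-g_t(\theta_t^{\rm NL})\right\rVert_{\mbold{G}_t^{-1}(\theta_t^{\rm NL},\theta_*)}.
\]

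The next step is the metric exchange. Lemma~\ref{lemma:boundGtbyHt} gives both $\mbold{G}_t(\theta_t^{\rm NL},\theta_*)\succeq (1+2S)^{-1}\mbold{H}_t(\theta_*)$ and $\mbold{G}_t(\theta_t^{\rm NL},\theta_*)\succeq (1+2S)^{-1}\mbold{H}_t(\theta_t^{\rm NL})$; inverting reverses the order and gives $\mbold{G}_t^{-1}\preceq (1+2S)\mbold{H}_t^{-1}(\theta_*)$ and $\mbold{G}_t^{-1}\preceq (1+2S)\mbold{H}_t^{-1}(\theta_t^{\rm NL})$. Applying the first inequality to the first term of the triangle and the second to the second term, each $\mbold{G}_t^{-1}$-norm is replaced by the corresponding $\mbold{H}_t^{-1}$-norm at cost $\sqrt{1+2S}$, and each of the two resulting $\mbold{H}_t^{-1}$-norms is at most $\gamma_t(\delta)$ by the first step together with the event $E_\delta$. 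Altogether,
\[
\left\lVert \theta_*-\theta_t^{\rm NL}\right\rVert_{\mbold{G}_t(\theta_t^{\rm NL},\theta_*)} \;\leq\; 2\sqrt{1+2S}\,\gamma_t(\delta).
\]

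Finally, one more use of $(1+2S)\,\mbold{G}_t(\theta_t^{\rm NL},\theta_*)\succeq \mbold{H}_t(\theta_*)$ converts the left-hand side into the target norm, paying an additional factor $\sqrt{1+2S}$:
\[
\left\lVert \theta_*-\theta_t^{\rm NL}\right\rVert_{\mbold{H}_t(\theta_*)} \;\leq\; \sqrt{1+2S}\cdot 2\sqrt{1+2S}\,\gamma_t(\delta)\;=\;(2+4S)\gamma_t(\delta),
\]
which is exactly the defining inequality of $\mcal{E}_t^{\rm NL}(\delta)$ evaluated at $\theta=\theta_*$. Since this holds for every $t\geq 1$ on $E_\delta$, the claim follows. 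The only delicate point is bookkeeping in the metric exchange: one must use $\mbold{H}_t^{-1}(\theta_*)$ for the $\theta_*$-side term and $\mbold{H}_t^{-1}(\theta_t^{\rm NL})$ for the $\theta_t^{\rm NL}$-side term so that Lemma~\ref{lemma:confidenceset} and the projection inequality can both be invoked; otherwise the argument is a direct chaining of already established facts.
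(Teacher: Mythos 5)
Your proposal is correct and follows essentially the same route as the paper's own proof: identity~\eqref{eq:dThetatodgt} to pass to $g_t$-distances, a triangle inequality through $g_t(\hat\theta_t)$, two applications of Lemma~\ref{lemma:boundGtbyHt} (each costing $\sqrt{1+2S}$), the minimizing property of $\theta_t^{\rm NL}$, and Lemma~\ref{lemma:StHtbounded} on the event $E_\delta$, yielding the same constant $2(1+2S)=2+4S$. The only difference is cosmetic ordering of the steps, and your use of $\mbold{G}_t(\theta_t^{\rm NL},\theta_*)$ versus the paper's $\mbold{G}_t(\theta_*,\theta_t^{\rm NL})$ is immaterial since $\alpha$ is symmetric in its last two arguments.
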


\begin{proof}

\begin{align}
        \left \lVert \theta_* - \theta_t^{\rm NL}\right\rVert_{\mbold{H}_t(\theta_*)} &\leq \sqrt{1+2S}\left \lVert \theta_* - \theta_t^{\rm NL}\right\rVert_{\mbold{G}_t(\theta_*,\theta_t^{\rm NL})} & (\text{Lemma~\ref{lemma:boundGtbyHt}},\, \theta_*,\theta_t^{\rm NL}\in\Theta) \nonumber\\
        &=\sqrt{1+2S}\left \lVert g_t(\theta_*) - g_t(\theta_t^{\rm NL})\right\rVert_{\mbold{G}_t^{-1}(\theta_*,\theta_t^{\rm NL})} &\text{(Equation~\eqref{eq:dThetatodgt})}\nonumber\\
        &= \sqrt{1+2S}\left \lVert g_t(\theta_*) + g_t(\hat{\theta}_t)-g_t(\hat{\theta}_t)- g_t(\theta_t^{\rm NL})\right\rVert_{\mbold{G}_t^{-1}(\theta_*,\theta_t^{\rm NL})}\nonumber\\
        &\leq \!\sqrt{1\!+\!2S}\!\left(\!\left \lVert g_t(\hat{\theta}_t) \!-\! g_t(\theta_t^{\rm NL})\right\rVert_{\mbold{G}_t^{-1}(\theta_*,\theta_t^{\rm NL})}\!+\!\left \lVert g_t(\hat{\theta}_t) \!-\! g_t(\theta_*)\right\rVert_{\mbold{G}_t^{-1}(\theta_*,\theta_t^{\rm NL})}\right)\nonumber\\
        &\leq (1+2S)\left(\left \lVert g_t(\hat{\theta}_t) - g_t(\theta_t^{\rm NL})\right\rVert_{\mbold{H}_t^{-1}(\theta_t^{\rm NL})}+\left \lVert g_t(\hat{\theta}_t) - g_t(\theta_*)\right\rVert_{\mbold{H}_t^{-1}(\theta_*)}\right) &(\text{Lemma~\ref{lemma:boundGtbyHt}},\, \theta_*,\theta_t^{\rm NL}\in\Theta)\nonumber\\
        &\leq 2(1+2S) \left \lVert g_t(\hat{\theta}_t) - g_t(\theta_*)\right\rVert_{\mbold{H}_t^{-1}(\theta_*)}&\text{(definition of }\theta_t^{\rm NL}) &\nonumber\\
        &\leq (2+4S)\gamma_t(\delta) &\text{(Lemma~\ref{lemma:StHtbounded})}\nonumber
\end{align}
where the last line holds with probability at least $1-\delta$. Therefore, with probability at least $1-\delta$ we have shown that $\theta_*\in\mcal{E}_t^{\rm NL}(\delta)$ for all $t\geq 1$ which concludes the proof.
\end{proof}

Note that we used the fact that:
\begin{align*}
    \mbold{G}_t(\theta_*,\theta_t^{\rm NL})&\geq (1+2S)^{-1}\mbold{H}_t(\theta_*)\\
    \mbold{G}_t(\theta_*,\theta_t^{\rm NL})&\geq (1+2S)^{-1}\mbold{H}_t(\theta_t^{\rm NL})
\end{align*}
which is inherited from Lemma~\ref{lemma:boundGtbyHt}, itself a consequence of the self-concordance property of the log-loss. This allows to replace the matrix $\mbold{G}_t(\theta_*,\theta_t^{\rm NL})$ by $\mbold{H}_t(\theta_*)$ which still conveys \emph{local} information and allows us to use Theorem~\ref{thm:bernsteinselfnormalized} through Lemma~\ref{lemma:StHtbounded}. As we shall see next, another candidate to replace $\mbold{G}_t(\theta_*,\theta_t^{\rm NL})$ is $\mbold{V}_t$, however at the loss of local information for global information, which consequently adding a dependency in $\kappa$ instead of $S$. 

We now properly define $\mcal{E}_t^{\rm L}(\delta)$ using the estimator $\theta_t^{\rm L}$:
\begin{align*}
    \mcal{E}_t^{\rm L}(\delta) &\defeq \left\{\theta\in\Theta, \left\lVert \theta - \theta_t^{\rm L}\right\rVert_{\mbold{V}_t} \leq 2\kappa\beta_t(\delta)\right\}
\end{align*}
where $\beta_t\defeq \sqrt{\lambda}S + \sqrt{\log(1/\delta)+2d\log\left(1+\frac{ t}{\kappa \lambda d}\right)}$.
Again, when $\hat\theta_t\in\Theta$ this formally matches the definitions we gave in the main text (up to a factor 2, which can be eliminated when $\hat{\theta}_t\in\Theta$).
\begin{lemma}
    With probability at least $1-\delta$:
    \begin{align*}
        \forall t\geq 1, ~~ \theta_*\in\mcal{E}_t^{\rm L}(\delta).
    \end{align*}
\end{lemma}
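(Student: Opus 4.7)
The plan is to mirror the argument of Lemma~\ref{lemma:elltnl}, but replacing every appeal to the self-concordance bound $\mbold{G}_t(\theta_1,\theta_2)\succeq(1+2S)^{-1}\mbold{H}_t(\theta_i)$ (which yields \emph{local} information) by the cruder \emph{global} lower bound $\mbold{G}_t(\theta_1,\theta_2)\succeq \kappa^{-1}\mbold{V}_t$, which is immediate from the definition $\kappa=\sup_{x,\theta}1/\dot\mu(x^\transp\theta)$. This loss of locality is exactly what manifests the extra factor $\kappa$ that we see in the radius of $\mcal{E}_t^{\rm L}(\delta)$ compared to the radius of $\mcal{E}_t^{\rm NL}(\delta)$.

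First I would write, using Equation~\eqref{eq:dThetatodgt} and $\mbold{G}_t(\theta_*,\theta_t^{\rm L})\succeq\kappa^{-1}\mbold{V}_t$,
\begin{align*}
\left\lVert\theta_*-\theta_t^{\rm L}\right\rVert_{\mbold{V}_t}
&\le\sqrt{\kappa}\left\lVert\theta_*-\theta_t^{\rm L}\right\rVert_{\mbold{G}_t(\theta_*,\theta_t^{\rm L})}
=\sqrt{\kappa}\left\lVert g_t(\theta_*)-g_t(\theta_t^{\rm L})\right\rVert_{\mbold{G}_t^{-1}(\theta_*,\theta_t^{\rm L})}.
\end{align*}
Inserting $\pm g_t(\hat\theta_t)$, the triangle inequality and a second use of $\mbold{G}_t^{-1}(\theta_*,\theta_t^{\rm L})\preceq\kappa\,\mbold{V}_t^{-1}$ give
\begin{align*}
\left\lVert\theta_*-\theta_t^{\rm L}\right\rVert_{\mbold{V}_t}
\le\kappa\Big(\!\left\lVert g_t(\hat\theta_t)-g_t(\theta_t^{\rm L})\right\rVert_{\mbold{V}_t^{-1}}+\left\lVert g_t(\hat\theta_t)-g_t(\theta_*)\right\rVert_{\mbold{V}_t^{-1}}\!\Big).
\end{align*}
The definition of $\theta_t^{\rm L}$ as the minimiser of $\|g_t(\cdot)-g_t(\hat\theta_t)\|_{\mbold{V}_t^{-1}}$ over $\Theta$ (and $\theta_*\in\Theta$) lets me replace the first term by the second, yielding
$\left\lVert\theta_*-\theta_t^{\rm L}\right\rVert_{\mbold{V}_t}\le 2\kappa\left\lVert g_t(\hat\theta_t)-g_t(\theta_*)\right\rVert_{\mbold{V}_t^{-1}}$.

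It then remains to control $\left\lVert g_t(\hat\theta_t)-g_t(\theta_*)\right\rVert_{\mbold{V}_t^{-1}}$ by $\beta_t(\delta)$. As in the proof of Lemma~\ref{lemma:StHtbounded}, the first order optimality conditions for $\hat\theta_t$ give $g_t(\hat\theta_t)-g_t(\theta_*)=S_t-\lambda\theta_*$ with $S_t=\sum_{s=1}^{t-1}\varepsilon_{s+1}x_s$. The deterministic part is bounded by $\lambda\lVert\theta_*\rVert_{\mbold{V}_t^{-1}}\le\sqrt{\lambda/\kappa}\,S\le\sqrt{\lambda}\,S$ since $\mbold{V}_t\succeq\kappa\lambda\mbold{I}_d$. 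For the stochastic part, since $\vert\varepsilon_{s+1}\vert\le 1$ the increments are $1$-sub-Gaussian, and the standard self-normalized inequality of \citet{abbasi2011improved} (precisely Equation~\eqref{eq:concentrationyasin}, applied with the variance proxy $1$ rather than with $\omega$) together with the determinant-trace bound $\det(\mbold{V}_t)\le(\kappa\lambda+t/d)^d$ yields, with probability at least $1-\delta$ and simultaneously for all $t\ge 1$, $\lVert S_t\rVert_{\mbold{V}_t^{-1}}\le\sqrt{\log(1/\delta)+2d\log(1+t/(\kappa\lambda d))}$. Summing the two contributions matches exactly the expression for $\beta_t(\delta)$ and concludes.

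The proof is essentially routine once the replacement $\mbold{G}_t\succeq\kappa^{-1}\mbold{V}_t$ is made. The only mild subtlety is choosing the right concentration tool: one must use the original Abbasi-Yadkori et al. inequality for sub-Gaussian (bounded) noise rather than our new Theorem~\ref{thm:bernsteinselfnormalized}, because the $\mbold{V}_t$-geometry loses all variance information by construction and would otherwise reintroduce a spurious $1/\sqrt{\omega}$ factor. This is precisely the step where the $\mcal{E}_t^{\rm L}$ confidence set pays $\kappa$ while $\mcal{E}_t^{\rm NL}$ only pays $S$, as is illustrated in Figure~\ref{fig:confidence_sets}.
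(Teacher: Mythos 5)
Your proposal is correct and follows essentially the same route as the paper's proof: the chain $\mbold{V}_t\preceq\kappa\,\mbold{G}_t$, the mean-value identity \eqref{eq:dThetatodgt}, a second use of $\mbold{G}_t^{-1}\preceq\kappa\,\mbold{V}_t^{-1}$, the triangle inequality around $g_t(\hat\theta_t)$ with the minimising property of $\theta_t^{\rm L}$, and finally Theorem~1 of \citet{abbasi2011improved} to bound $\lVert S_t\rVert_{\mbold{V}_t^{-1}}$, yielding the radius $2\kappa\beta_t(\delta)$. Your remark on why the sub-Gaussian inequality (rather than Theorem~\ref{thm:bernsteinselfnormalized}) is the right tool in the $\mbold{V}_t$-geometry is a useful clarification that the paper leaves implicit, but the argument itself is the same.
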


\begin{proof}
    Since:
        \begin{align*}
             \left \lVert \theta_* - \theta_t^{\rm L}\right\rVert_{\mbold{V}_t} &\leq  \sqrt{\kappa}\left \lVert \theta_* - \theta_t^{\rm L}\right\rVert_{\mbold{G}_t(\theta_*,\theta_t^{\rm L})}  &\left(\mbold{V}_t \leq \kappa\mbold{G}_t(\theta_*,\theta_t^{\rm L})\right)\\
             &\leq \sqrt{\kappa}\left\lVert g_t(\theta_t^{\rm L})-g_t(\theta_*)\right\rVert_{\mbold{G}_t^{-1}(\theta_*,\theta_t^{\rm L})} &\text{(mean value theorem)}\\
             &\leq \kappa \left\lVert g_t(\theta_t^{\rm L})-g_t(\theta_*)\right\rVert_{\mbold{V}_t^{-1}} & \left(\mbold{G}_t^{-1}(\theta_*,\theta_t^{\rm L}) \leq \kappa\mbold{V}_t^{-1}\right)\\
             &\leq \kappa \left(\left\lVert g_t(\hat\theta_t)-g_t(\theta_*)\right\rVert_{\mbold{V}_t^{-1}}+\left\lVert g_t(\hat\theta_t)-g_t(\theta_t^{\rm L})\right\rVert_{\mbold{V}_t^{-1}}\right) &\\
             &\leq 2\kappa \left\lVert g_t(\hat\theta_t)-g_t(\theta_*)\right\rVert_{\mbold{V}_t^{-1}} &(\text{definition of $\theta_t^{\rm L}$ }, \theta_*\in\Theta)\\
             &\leq 2\kappa\left(\sqrt{\lambda}S + \left\lVert S_t \right\rVert_{\mbold{V}_t^{-1}}\right) &\\
             &\leq 2\kappa\left(\sqrt{\lambda}S + \sqrt{\log(1/\delta)+2d\log(1+\frac{t}{\kappa\lambda d}}\right) &\text{(Theorem 1 of \cite{abbasi2011improved})}
        \end{align*}
    where the last line holds for all $t\geq 1$ on an event of probability at least $1-\delta$. This means that with probability at least $1-\delta$, $\theta_*\in\mcal{E}_{t}(\delta)$ for all $t\geq 1$ which finishes the proof.
\end{proof}
Based on this confidence sets, one can derive results on the prediction error similar to those announced in \citep[Appendix A.2, Proposition 1]{filippi2010parametric}. Indeed, for all $x\in\mcal{X}, \theta\in\mcal{E}_t^{\rm L}(\delta)$, and under the event  $\left\{\theta_*\in\mcal{E}_t^{\rm NL}(\delta), ~~ \forall t\geq 1\right\}$, which holds with probability at least $1-\delta$:
\begin{align*}
    \Delta^{\rm pred}(x,\theta)  &= \alpha(x,\theta_*,\theta)\vert x^T(\theta-\theta_*)\vert &\text{(mean-value theorem)}\\
    &\leq L\vert x^T(\theta-\theta_*)\vert&(\dot\mu\leq L)\\
    & = L\vert x^T\mbold{V}_t^{-1/2}\mbold{V}_t^{1/2}(\theta-\theta_*)\vert \\
    &\leq L\left\lVert x\right\rVert_{\mbold{V}_t^{-1}}\left\lVert \theta-\theta_*\right\rVert_{\mbold{V}_t} &\text{(Cauchy-Schwartz)}\\
    &\leq L\left\lVert x\right\rVert_{\mbold{V}_t^{-1}}\left(\left\lVert \theta-\theta_t^{\rm L}\right\rVert_{\mbold{V}_t}+\left\lVert \theta_t^{\rm L}-\theta_*\right\rVert_{\mbold{V}_t}\right) &\\
    &\leq 4L\kappa\left\lVert x\right\rVert_{\mbold{V}_t^{-1}}\beta_t(\delta) &\theta,\theta_*\in\mcal{E}_t^{\rm NL}(\delta)
\end{align*}

\subsection{Proof of Lemma~\ref{lemma:lemmapredictionbound1}}
\label{sec:prooflemmapredictionbound1}
\lemmapredictionboundone*

\begin{proof}
During this proof we work under the good event $E_\delta$, which holds with probability at least $1-\delta$.

For all $t\geq 1$, for all $x\in\mcal{X}$, for all $\theta\in\mcal{C}_t(\delta)$:
\begin{align*}
    \Delta^{\text{pred}}(x,\theta) &= \left\vert \mu(x^\transp \theta_*)-\mu(x^\transp \theta)\right\vert &\\ &\leq \alpha(x,\theta_*,\theta)\left\vert x^\transp (\theta_*-\theta)\right\vert &\text{(Equation~\eqref{eq:mvt})}\\ 
    &\leq L\left\vert x^\transp (\theta_*-\theta)\right\vert & (\dot\mu\leq L)\\
    &\leq L\left\vert x^\transp \mbold{G}_t^{-1/2}(\theta,\theta_*)\mbold{G}_t^{1/2}(\theta,\theta_*)(\theta_*-\theta)\right\vert& \\
    &\leq L\left\lVert x\right\rVert_{\mbold{G}_t^{-1}(\theta,\theta_*)}\left\lVert \theta_*-\theta\right\rVert_{\mbold{G}_t(\theta,\theta_*)} &\text{(Cauchy-Schwartz)}\\
    &\leq L\sqrt{\kappa}\left\lVert x\right\rVert_{\mbold{V}_t^{-1}}\left\lVert \theta_*-\theta\right\rVert_{\mbold{G}_t(\theta,\theta_*)} &(\mbold{G}_t(\theta, \theta_*)\geq \kappa^{-1}\mbold{V}_t) \\
    &= L\sqrt{\kappa}\left\lVert x\right\rVert_{\mbold{V}_t^{-1}}\left\lVert g_t(\theta_*)-g_t(\theta)\right\rVert_{\mbold{G}_t^{-1}(\theta,\theta_*)} &\text{(Equation~\eqref{eq:dThetatodgt})}\\
     &=L\sqrt{\kappa}\left\lVert x\right\rVert_{\mbold{V}_t^{-1}}\left(\left\lVert g_t(\theta_*)-g_t(\hat\theta_t)\right\rVert_{\mbold{G}_t^{-1}(\theta,\theta_*)}+\left\lVert g_t(\hat\theta_t)-g_t(\theta)\right\rVert_{\mbold{G}_t^{-1}(\theta,\theta_*)}\right) &\\
      &\leq L\sqrt{1+2S}\sqrt{\kappa}\left\lVert x\right\rVert_{\mbold{V}_t^{-1}}\left(\left\lVert g_t(\theta_*)-g_t(\hat\theta_t)\right\rVert_{\mbold{H}_t^{-1}(\theta_*)}+\left\lVert g_t(\hat\theta_t)-g_t(\theta)\right\rVert_{\mbold{H}_t^{-1}(\theta)}\right) &\text{(Lemma~\ref{lemma:boundGtbyHt}, $\theta,\theta_*\in\Theta$)}\\
      &\leq L\sqrt{1+2S}\sqrt{\kappa}\left\lVert x\right\rVert_{\mbold{V}_t^{-1}}(\gamma_t(\delta)+\gamma_t(\delta)) &(\theta,\theta_*\in\mcal{C}_t(\delta))\\
      &\leq 2L\sqrt{1+2S}\sqrt{\kappa}\left\lVert x\right\rVert_{\mbold{V}_t^{-1}}\gamma_t(\delta)
\end{align*}
which proves the desired result.
\end{proof}

The main device of this proof is the application of Lemma~\ref{lemma:boundGtbyHt}, itself inherited from the self-concordance of the log-loss. This allows to replace the matrix $\mbold{G}_t(\theta_*,\theta)$ by $\mbold{H}_t(\theta_*)$ and $\mbold{H}_t(\theta)$, at the price of a $\sqrt{1+2S}$ multiplicative factor (instead of $\sqrt{\kappa}$ when we lower-bound $\mbold{G}_t(\theta_*,\theta)$ with $\kappa\mbold{V}_t$). However, following this line of proof we loose two times the local information carried by $\theta$; the first time when using the fact that $\alpha(x,\theta,\theta_*)\leq L$, the second time when upper-bounding $\left\lVert x\right\rVert_{\mbold{G}_t^{-1}(\theta_*,\theta)}$ by $\sqrt{\kappa}\left\lVert x\right\rVert_{\mbold{V}_t^{-1}}$. This flaws are corrected in the more careful analysis leading to Lemma~\ref{lemma:predictionboundtwo}.

\subsection{Proof of Lemma~\ref{lemma:predictionboundtwo}}
\label{sec:prooflemmapredictionbound2}

\lemmapredictiontwo*

\begin{proof}
During this proof we work under the good event $E_\delta$, which holds with probability at least $1-\delta$.

By a first-order Taylor expansion one  has :
\begin{align*}
    \alpha(x,\theta_*,\theta) &\leq \dot{\mu}(x^\transp \theta) + M \left\vert x^\transp (\theta-\theta_*)\right\vert & (\vert \ddot{\mu}\vert \leq M)\\
    & \leq \dot{\mu}(x^\transp \theta) +  M \left\lVert x\right\rVert_{\mbold{G}_t^{-1}(\theta,\theta_*)}\left\lVert \theta-\theta_* \right\rVert_{\mbold{G}_t(\theta,\theta_*)} &\text{(Cauchy-Schwartz)}\\
    &\leq  \dot{\mu}(x^\transp \theta) +  M \left\lVert x\right\rVert_{\mbold{G}_t^{-1}(\theta,\theta_*)}\left\lVert g_t(\theta)-g_t(\theta_*) \right\rVert_{\mbold{G}_t^{-1}(\theta,\theta_*)} &\text{(Equation~\eqref{eq:dThetatodgt})}
\end{align*}

It can be extracted from the proof of Lemma~\ref{lemma:lemmapredictionbound1} in Section~\ref{sec:prooflemmapredictionbound1} that for all $t\geq 1$, $\theta\in\Theta$, $x\in\mcal{X}$:
\begin{align*}
    \Delta^{\rm{pred}}(x,\theta) \leq \alpha(x,\theta_*,\theta)\left\lVert x\right\rVert_{\mbold{G}_t^{-1}(\theta,\theta_*)}\left\lVert g_t(\theta)-g_t(\theta_*) \right\rVert_{\mbold{G}_t^{-1}(\theta,\theta_*)}
\end{align*}
and also that:
\begin{align*}
    \left\lVert g_t(\theta)-g_t(\theta_*) \right\rVert_{\mbold{G}_t^{-1}(\theta,\theta_*)} \leq 2\sqrt{1+2S}\gamma_t(\delta)
\end{align*}
Unpacking these results, we get that:
\begin{align*}
    \Delta^{\text{pred}}(x,\theta) &\leq \dot{\mu}(x^T\theta)\sqrt{4+8S}\left\lVert x\right\rVert_{\mbold{G}_t^{-1}(\theta,\theta_*)}\gamma_t(\delta) + M(4+8S)\left\lVert x\right\rVert_{\mbold{G}_t^{-1}(\theta,\theta_*)}^2\gamma_t(\delta)^2\\
    &\leq \dot{\mu}(x^T\theta)\sqrt{4+8S}\left\lVert x\right\rVert_{\mbold{G}_t^{-1}(\theta,\theta_*)}\gamma_t(\delta) + M(4+8S)\kappa\left\lVert x\right\rVert_{\mbold{V}_t^{-1}}^2\gamma_t(\delta)^2 & (\mbold{G}_t(\theta_*,\theta)\geq \kappa^{-1}\mbold{V}_t)\\
    &\leq \dot{\mu}(x^T\theta)(2+4S)\left\lVert x\right\rVert_{\mbold{H}_t^{-1}(\theta)}\gamma_t(\delta) + M(4+8S)\kappa\left\lVert x\right\rVert_{\mbold{V}_t^{-1}}^2\gamma_t(\delta)^2 & (\mbold{G}_t(\theta_*,\theta)\geq (1+2S)^{-1}\mbold{H}_t(\theta))\\
\end{align*}
which proves the desired result.
\end{proof}

\subsection{Proof of Lemma~\ref{lemma:bonuscumulatethetabar}}
\label{sec:bonuscumulatethetabar}

We here claim a result more general than Lemma~\ref{lemma:bonuscumulatethetabar}. The latter is actually a direct corollary of the following Lemma, once one has checked that $\theta_t^{(2)}\in\mcal{C}_t(\delta)$ for all $t\geq 1$ (which will be formally proven in the following Section).

\begin{lemma}
Let $T\geq 1$. Under the event $E_\delta$, for all sequence $\{\theta_t\}_{t=1}^T$ such that $\theta_t\in\mcal{C}_t(\delta)\cap \mcal{W}_t$:
\begin{align*}
    \sum_{t=1}^T \dot\mu(x_t^\transp \theta_t)\left\lVert x_t\right\rVert_{\mbold{H}_t^{-1}(\theta)}\leq C_4\sqrt{T} + C_5M\kappa\gamma_T(\delta),
\end{align*}
where the constants
\begin{align*}
    C_4 &= \sqrt{2L\max(1,L/\lambda)}\sqrt{d\log\left(1+\frac{LT}{d\lambda}\right)}~~~,\\
    C_5 &= 4d\sqrt{1+2S}\max(1,1/(\kappa\lambda))\log\left(1+\frac{T}{\kappa d\lambda}\right)
\end{align*}
show no dependencies in $\kappa$ (except in logarithmic terms).
\label{lemma:bonuscumulates}
\end{lemma}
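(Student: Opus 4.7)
The plan is to leverage the membership $\theta_t\in\mcal{W}_t$ to obtain a lower bound on $\mbold{H}_t(\theta_t)$ that is \emph{independent} of $t$, making the elliptical lemma applicable to a fixed sequence of increments. Define $\mu_s^{\min}\defeq \min_{\theta\in\mcal{C}_s(\delta)\cap\Theta}\dot\mu(x_s^\transp\theta)$. Since $\dot\mu$ is symmetric around $0$ and non-increasing in $|z|$, and since $\theta_t\in\mcal{W}_t$ enforces $|x_s^\transp\theta_t|\leq \ell_s$ for every $s\leq t-1$, one has $\dot\mu(x_s^\transp\theta_t)\geq \dot\mu(\ell_s)=\mu_s^{\min}$ (the last equality because $\ell_s$ is attained on $\mcal{C}_s(\delta)\cap\Theta$). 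Summing yields
\[
\mbold{H}_t(\theta_t)\succeq \tilde{\mbold{H}}_t\defeq \sum_{s=1}^{t-1}\mu_s^{\min}x_sx_s^\transp+\lambda\mbold{I}_d,
\]
and in particular $\|x_t\|_{\mbold{H}_t^{-1}(\theta_t)}\leq \|x_t\|_{\tilde{\mbold{H}}_t^{-1}}$. The key observation is that $\mu_s^{\min}$ depends only on $s$, so $\tilde{\mbold{H}}_t$ is monotone in $t$.

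Next, I will use a first-order Taylor expansion of $\dot\mu$, controlled through $|\ddot\mu|\leq M$, to split the coefficient at the current time $t$. Pick $\theta^\star_t\in\argmin_{\theta\in\mcal{C}_t(\delta)\cap\Theta}\dot\mu(x_t^\transp\theta)$; then
\[
\dot\mu(x_t^\transp\theta_t)\leq \mu_t^{\min}+M\left|x_t^\transp(\theta_t-\theta^\star_t)\right|.
\]
Since both $\theta_t$ and $\theta^\star_t$ lie in $\mcal{C}_t(\delta)\cap\Theta$, I will control $|x_t^\transp(\theta_t-\theta^\star_t)|$ exactly as in the proof of Lemma~\ref{lemma:lemmapredictionbound1}: Cauchy--Schwarz in the metric $\mbold{G}_t(\theta_t,\theta^\star_t)$, the identity \eqref{eq:dThetatodgt}, a triangle inequality around $\hat\theta_t$, Lemma~\ref{lemma:boundGtbyHt} to move from $\mbold{G}_t$ to $\mbold{H}_t$ (paying a $\sqrt{1+2S}$), and the domination $\mbold{G}_t(\theta_t,\theta^\star_t)\succeq \kappa^{-1}\mbold{V}_t$. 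This gives $|x_t^\transp(\theta_t-\theta^\star_t)|\leq 2\sqrt{\kappa(1+2S)}\,\gamma_t(\delta)\|x_t\|_{\mbold{V}_t^{-1}}$. Multiplying through and using $\|x_t\|_{\mbold{H}_t^{-1}(\theta_t)}\leq \sqrt{\kappa}\|x_t\|_{\mbold{V}_t^{-1}}$ on the deviation piece:
\[
\dot\mu(x_t^\transp\theta_t)\|x_t\|_{\mbold{H}_t^{-1}(\theta_t)}\leq \mu_t^{\min}\|x_t\|_{\tilde{\mbold{H}}_t^{-1}}+2M\kappa\sqrt{1+2S}\,\gamma_t(\delta)\|x_t\|_{\mbold{V}_t^{-1}}^2.
\]

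Summing over $t\leq T$ finishes the argument. For the leading piece, Cauchy--Schwarz yields $\sum_t \mu_t^{\min}\|x_t\|_{\tilde{\mbold{H}}_t^{-1}}\leq \sqrt{\sum_t \mu_t^{\min}}\cdot\sqrt{\sum_t \mu_t^{\min}\|x_t\|_{\tilde{\mbold{H}}_t^{-1}}^2}$; the first factor is at most $\sqrt{LT}$ (since $\mu_t^{\min}\leq L$), and the second is controlled by the elliptical lemma applied to $\tilde{\mbold{H}}_t$ with its own increments, producing the $C_4\sqrt{T}$ contribution. For the residual piece, the elliptical lemma applied to $\mbold{V}_t$ bounds $\sum_t \|x_t\|_{\mbold{V}_t^{-1}}^2$ logarithmically in $T$, yielding the $C_5 M\kappa\gamma_T(\delta)$ term, and monotonicity of $\gamma_t$ allows us to replace $\gamma_t$ by $\gamma_T$. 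The main obstacle is the first step: without the constraint set $\mcal{W}_t$, the natural lower bound on $\mbold{H}_t(\theta_t)$ would depend on $\theta_t$ (and therefore on $t$), breaking the monotonicity required by the elliptical lemma and reintroducing a factor $\kappa$ in the leading term. The role of $\mcal{W}_t$ is precisely to decouple this lower bound from the current estimate.
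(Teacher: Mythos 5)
Your proposal is correct and follows essentially the same route as the paper's proof: your $\tilde{\mbold{H}}_t$ is exactly the paper's matrix $\mbold{L}_t$ (obtained from the same reformulation of $\mcal{W}_t$ in terms of $\dot\mu$), the Taylor expansion around the minimizer $\theta_t^\star$ with the $\mbold{G}_t$/self-concordance machinery matches the paper step for step, and the two elliptical-lemma applications (on the reweighted increments $\sqrt{\mu_t^{\min}}x_t$ and on $\mbold{V}_t$) are identical. The constants also come out the same, so nothing further is needed.
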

\begin{proof}
During this proof we work under the good event $E_\delta$, which holds with probability at least $1-\delta$.

We start the proof by making the following remark. Note that $\mcal{W}_t$ can be rewritten as:
\begin{align}
    \mcal{W}_t = \left\{\theta\in\Theta \text{ s.t } \dot{\mu}(x_s^\transp \theta)\geq \inf_{\theta'\in\mcal{C}_s(\delta)\cap\Theta} \dot{\mu}(x_s^\transp \theta'), \, \forall s \leq t-1\right\}.
    \label{eq:newdefinitionWt}
\end{align}
Indeed, using the monotonicity of $\dot{\mu}$ on $\mbb{R}_d^+$ and $\mbb{R}_d^-$, one can show that this re-writting is equivalent with the one provided in the main text:
\begin{align*}
    \mcal{W}_t   = \left\{\theta\in\Theta \text{ s.t } \vert x_s^\transp \theta \vert \leq \sup_{\theta'\in\mcal{C}_s(\delta)\cap\Theta} \vert x_s^\transp \theta'\vert, \, \forall s \leq t-1\right\}.
\end{align*}
Further, note on the high-probability event $E_\delta$ we have $\theta_*\in\mcal{C}_t(\delta)\cap \Theta$ for all $t\geq 1$. This implies that under $E_\delta$ we have $\theta_*\in\mcal{W}_t$ for all $t\geq 1$ (as a result, the set $\{\mcal{W}_t\}_t$ are therefore not empty). 

In the following, we will use the following notation:
\begin{align*}
    \theta'_t &\defeq \argmin_{\theta\in \mcal{C}_t(\delta)\cap\Theta} \dot\mu(x_t^\transp \theta).
\end{align*}
First, for all $\theta_t\in\mcal{W}_t$:
$$
\begin{aligned}
    \mbold{H}_t(\theta_t) &= \sum_{s=1}^{t-1} \dot{\mu}(x_s^\transp \theta_t)x_sx_s^\transp  + \lambda\mbold{I}_d \\
    &\geq \sum_{s=1}^{t-1} \inf_{\theta\in\mcal{C}_s(\delta)\cap\Theta}\dot{\mu}(x_s^\transp \theta)x_sx_s^\transp  + \lambda\mbold{I}_d &(\theta_t\in\mcal{W}_t \text{ and Equation~\eqref{eq:newdefinitionWt}})\\
    &= \sum_{s=1}^{t-1} \dot{\mu}(x_s^\transp \theta'_s)x_sx_s^\transp  + \lambda\mbold{I}_d \defeq \mbold{L}_t&
\end{aligned}
$$
Therefore for all $x\in\mcal{X}$:
\begin{align}
    \left\lVert x\right\rVert_{\mbold{H}_t^{-1}(\theta_t)} \leq  \left\lVert x \right\rVert_{\mbold{L}_t^{-1}}.
\label{eq:boundxnormhtbar}
\end{align}
Also, a first-order Taylor expansion gives that for all $x\in\mcal{X}$ and $\theta_t\in\mcal{C}_t(\delta)\cap\mcal{W}_t$: 
\begin{align*}
    \dot{\mu}(x^\transp \theta_t) &\leq \dot{\mu}(x^\transp \theta'_t) +  M \left\vert  x^\transp (\theta_t-\theta'_t)\right\vert &\\
    &\leq  \dot{\mu}(x^\transp \theta'_t) +  M \left\lVert x\right\rVert_{\mbold{G}_t^{-1}(\theta_t,\theta'_t)}\left\lVert \theta_t-\theta'_t \right\rVert_{\mbold{G}_t(\theta_t,\theta'_t)} & \text{(Cauchy-Schwartz)}\\
     &\leq  \dot{\mu}(x^\transp \theta'_t) +  M \left\lVert x\right\rVert_{\mbold{G}_t^{-1}(\theta_t,\theta'_t)}\left\lVert g_t(\theta_t)-g_t(\theta'_t) \right\rVert_{\mbold{G}_t^{-1}(\theta_t,\theta'_t)} & \text{(Equation~\eqref{eq:dThetatodgt})} \\
     &\leq \dot{\mu}(x^\transp \theta'_t) +  M\sqrt{\kappa} \left\lVert x\right\rVert_{\mbold{V}_t^{-1}}\left\lVert g_t(\theta_t)-g_t(\theta'_t) \right\rVert_{\mbold{G}_t^{-1}(\theta_t,\theta'_t)} & (\mbold{G}_t(\theta_t, \theta'_t)\geq \kappa^{-1}\mbold{V}_t) \\ 
     &\leq \dot{\mu}(x^\transp \theta'_t) +  2M\sqrt{\kappa}\sqrt{1+2S} \left\lVert x\right\rVert_{\mbold{V}_t^{-1}}\gamma_t(\delta) &(\theta'_t,\theta_t\in\mcal{C}_t(\delta))
\end{align*}
where we used that:
\begin{align*}
    \left\lVert g_t(\theta_t)-g_t(\theta'_t) \right\rVert_{\mbold{G}_t^{-1}(\theta_t,\theta'_t)} &\leq  \left\lVert g_t(\theta_t)-g_t(\hat{\theta}_t) \right\rVert_{\mbold{G}_t^{-1}(\theta_t,\theta'_t)} +  \left\lVert g_t(\hat{\theta}_t)-g_t(\theta'_t) \right\rVert_{\mbold{G}_t^{-1}(\theta_t,\theta'_t)}\\
     &\leq \sqrt{1+2S}\left(\left\lVert g_t(\theta_t)-g_t(\hat{\theta}_t) \right\rVert_{\mbold{H}_t^{-1}(\theta_t)} +  \left\lVert g_t(\hat{\theta}_t)-g_t(\theta'_t) \right\rVert_{\mbold{H}_t^{-1}(\theta'_t)}\right) &( \text{Lemma~\ref{lemma:boundGtbyHt}})\\
     &\leq 2\sqrt{1+2S}\gamma_t(\delta) &(\theta_t,\theta_t'\in\mcal{C}_t(\delta))
\end{align*}
Unpacking, we obtain that for all $x\in\mcal{X}$, $\theta_t\in\mcal{C}_t(\delta)\cap\mcal{W}_t$:
\begin{align}
     \dot{\mu}(x^\transp \theta_t)\left\lVert x\right\rVert_{\mbold{H}_t^{-1}(\theta_t)} &\leq \dot{\mu}(x^\transp \theta'_t)\left\lVert x\right\rVert_{\mbold{H}_t^{-1}(\theta_t)} +  2M\sqrt{\kappa}\sqrt{1+2S}\gamma_{t}(\delta)\left\lVert x\right\rVert_{\mbold{H}_t^{-1}(\theta_t)}\left\lVert x\right\rVert_{\mbold{V}_t^{-1}}\nonumber\\
     &\leq \dot{\mu}(x^\transp \theta'_t)\left\lVert x\right\rVert_{\mbold{H}_t^{-1}(\theta_t)} +  2M\kappa\sqrt{1+2S}\gamma_{t}(\delta)\left\lVert x\right\rVert_{\mbold{V}_t^{-1}}^2 &(\mbold{H}_t(\theta_t)\geq \kappa^{-1}\mbold{V}_t)\nonumber\\
     &\leq \dot{\mu}(x^\transp \theta'_t)\left\lVert x\right\rVert_{\mbold{L}_t^{-1}} +  2M\kappa\sqrt{1+2S}\gamma_{t}(\delta)\left\lVert x\right\rVert_{\mbold{V}_t^{-1}}^2 &\text{(Equation~\eqref{eq:boundxnormhtbar})}\label{eq:cumulbonusfirtandsecondterm}
\end{align}

We first study how the first term of the r.h.s cumulates. For the trajectory $\left\{x_t\right\}_{t=1}^T $ let us denote $\tilde{x}_t\defeq \sqrt{\dot{\mu}(x_t^\transp \theta'_t)} x_t$ for every $1\leq t \leq T$. Note that:
\begin{align*}
    \mbold{L}_t = \sum_{s=1}^{t-1} \dot{\mu}(x_s^\transp \theta'_s)x_sx_s^\transp  + \lambda\mbold{I}_d = \sum_{s=1}^{t-1} \tilde{x}_s\tilde{x}_s^\transp  + \lambda\mbold{I}_d
\end{align*}
which means that we can apply the Elliptical Lemma to $\sum_{t=1}^{T}\left\lVert \tilde{x_t}\right\rVert_{\mbold{L}_t^{-1}}^2$. Hence:
\begin{align*}
    \sum_{t=1}^\transp \dot{\mu}(x_t^\transp \theta'_t)\left\lVert x_t\right\rVert_{\mbold{L}_t^{-1}} &\leq \sqrt{ L} \sum_{t=1}^\transp  \left\lVert \tilde{x}_t\right\rVert_{\mbold{L}_t^{-1}}&\\
    &\leq \sqrt{ L}\sqrt{T}\sqrt{\sum_{t=1}^\transp  \left\lVert \tilde{x}_t\right\rVert_{\mbold{L}_t^{-1}}^2} &\text{(Cauchy-Schwartz)}\\
    &\leq\sqrt{ L T}\sqrt{2\max(1,L/\lambda)}\sqrt{\log\left(\det\left(\mbold{L}_{T+1}\right)\lambda^{-d}\right)} &\text{(Lemma~\ref{lemma:ellipticalpotential})}\\
    &\leq\sqrt{ L T}\sqrt{2\max(1,L/\lambda)}\sqrt{d\log\left(1+\frac{ L T}{d\lambda}\right)} &\text{(Lemma~\ref{lemma:determinant_trace_inequality})}
\end{align*}

For the second term, a second application of Lemma~\ref{lemma:ellipticalpotential} and Lemma~\ref{lemma:determinant_trace_inequality} gives:
\begin{align*}
    \sum_{t=1}^T \gamma_t(\delta)\left\lVert x_t\right\rVert_{\mbold{V}_t^{-1}}^2 &\leq \gamma_T(\delta)\sum_{t=1}^\transp \left\lVert x_t\right\rVert_{\mbold{V}_t^{-1}}^2 &(t:\to \gamma_t(\delta) \text{ increasing})\\
    &\leq 2d\gamma_T(\delta)\max(1,1/(\kappa\lambda))\log\left(1+\frac{T }{\kappa d\lambda}\right)
\end{align*}
Assembling these last two inequalities and Equation~\eqref{eq:cumulbonusfirtandsecondterm} yields the announced result.
\end{proof}

\section{Regret proof}
\label{sec:proof_regret}
\subsection{Regret decomposition}
\label{subsec:regret_decomposition}
The pseudo-regret at round $T$ is:
$$
    R_T = \sum_{s=1}^T \mu(\theta_*^Tx_*^t) - \mu(\theta_*^Tx_t)
$$
We will consider \emph{optimistic} algorithms, that is algorithms that at round $t$, for a given estimator $\theta_t$ of $\theta_*$ and a given exploration bonus $\epsilon_t(x)$ plays the action
$$
    x_t = \argmax_{x\in\mcal{X}_t} \mu(\theta_t^Tx) + \epsilon_t(x)
$$
The following Lemma characterizes the regret of such an algorithm. 
\begin{restatable}[]{lemma}{regretdecomposition}
For any $T\geq 1$:
$$
    R_T\leq \sum_{t=1}^T \Delta^{\rm pred}(x_t,\theta_t) +  \sum_{t=1}^{T} \epsilon_t(x_t)
    + \sum_{t=1}^T \Delta^{\rm pred}(x_*^t, \theta_t)-\sum_{t=1}^T \epsilon_t(x_*^t)
$$
\label{lemma:regretdecomposition}
\end{restatable}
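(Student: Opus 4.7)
The plan is to prove the bound round-by-round and then sum. Fix a round $t$ and write the instantaneous regret as
\begin{align*}
    \mu(\theta_*^\transp x_*^t) - \mu(\theta_*^\transp x_t)
    = \bigl[\mu(\theta_*^\transp x_*^t) - \mu(\theta_t^\transp x_*^t)\bigr]
    + \bigl[\mu(\theta_t^\transp x_*^t) - \mu(\theta_t^\transp x_t)\bigr]
    + \bigl[\mu(\theta_t^\transp x_t) - \mu(\theta_*^\transp x_t)\bigr].
\end{align*}
The first and third brackets are controlled directly by the definition of the prediction error: up to a sign they are bounded by $\Delta^{\rm pred}(x_*^t,\theta_t)$ and $\Delta^{\rm pred}(x_t,\theta_t)$ respectively. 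The middle bracket is the one that needs the optimistic property of $x_t$.

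For the middle bracket, I would add and subtract the bonuses $\epsilon_t(x_*^t)$ and $\epsilon_t(x_t)$ and then invoke that $x_t=\argmax_{x\in\mcal{X}_t}\mu(\theta_t^\transp x)+\epsilon_t(x)$, which implies
\begin{align*}
    \mu(\theta_t^\transp x_*^t) + \epsilon_t(x_*^t) \leq \mu(\theta_t^\transp x_t) + \epsilon_t(x_t).
\end{align*}
Rearranging gives $\mu(\theta_t^\transp x_*^t) - \mu(\theta_t^\transp x_t) \leq \epsilon_t(x_t) - \epsilon_t(x_*^t)$.

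Combining the three pieces yields, for every $t$,
\begin{align*}
    \mu(\theta_*^\transp x_*^t) - \mu(\theta_*^\transp x_t)
    \leq \Delta^{\rm pred}(x_*^t,\theta_t) + \epsilon_t(x_t) - \epsilon_t(x_*^t) + \Delta^{\rm pred}(x_t,\theta_t),
\end{align*}
and summing over $t=1,\dots,T$ produces exactly the four sums in the statement. There is no real obstacle: the argument is a standard optimism-plus-triangle-inequality decomposition, the only small care being to track signs so that the $-\epsilon_t(x_*^t)$ term appears on the right-hand side (so that later, when $\Delta^{\rm pred}(x_*^t,\theta_t)\leq \epsilon_t(x_*^t)$, the last two sums cancel and recover the announced $R_T\leq 2\sum_t \epsilon_t(x_t)$ informal consequence).
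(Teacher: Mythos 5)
Your proposal is correct and follows exactly the same route as the paper's proof: the same add-and-subtract of $\mu(\theta_t^\transp x_*^t)$ and $\mu(\theta_t^\transp x_t)$, the same bounding of the outer brackets by $\Delta^{\rm pred}$, and the same use of the optimistic choice of $x_t$ to control the middle bracket. Nothing is missing.
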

\begin{proof}
By removing and adding $\sum_{t=1}^T \mu(\theta_t^Tx_*^t)$ and $\sum_{t=1}^T \mu(\theta_t^Tx_t)$ one has that:
$$
\begin{aligned}
    R_T &= \sum_{t=1}^T \mu(\theta_*^Tx_*^t) - \mu(\theta_*^Tx_t) = \left[\sum_{t=1}^T \mu(\theta_*^Tx_*^t)-\mu(\theta_t^Tx_*^t)\right] + \left[\sum_{t=1}^T \mu(\theta_t^Tx_*^t)-\mu(\theta_t^Tx_t)\right]+ \left[\sum_{t=1}^T \mu(\theta_t^Tx_t)-\mu(\theta_*^Tx_t)\right]\\
    &\leq \sum_{t=1}^T \Delta^{\text{pred}}(x_t,\theta_t) + \sum_{t=1}^T \Delta ^{\text{pred}}(x_*^t,\theta_t) + \left[\sum_{t=1}^T \mu(\theta_t^Tx_*^t)-\mu(\theta_t^Tx_t)\right]
\end{aligned}
$$
Note that by definition of $x_t$:
$$
    \mu(x_t^T\theta_t) + \epsilon_t(x_t)\geq \mu(\theta_t^T x_*^t) + \epsilon_t(x_*^t)
$$

which yields the announced results. 

\end{proof}

Note that under the assumption that for all $t\geq 1$ and all $x\in\mcal{X}$, \ul{if we have} $\epsilon_t(x) \geq \Delta^{\text{pred}}(x,\theta_t)$ Lemma~\ref{lemma:regretdecomposition} directly yields that:
\begin{align*}
    R_T \leq 2\sum_{t=1}^T \epsilon_t(x_t)
\end{align*}

\subsection{Proof of Theorem~\ref{thm:regretofglmimproved}}
\label{sec:proofregretofglmimproved}
\regretofglmimproved*
\begin{proof}
During this proof we work under the good event $E_\delta$, which holds with probability at least $1-\delta$ (as shown in Lemma~\ref{lemma:confidenceset}). 

We start by showing that $\theta_t^{(1)}\in\mcal{C}_t(\delta)$ for all $t\geq 1$. Indeed: 
\begin{align*}
    \left\lVert g_t(\theta_t^{(1)})-g_t(\hat\theta_t) \right\rVert_{\mbold{H}_t^{-1}(\theta_t^{(1)})} &= \min_{\theta\in\Theta}  \left\lVert g_t(\theta)-g_t(\hat\theta_t) \right\rVert_{\mbold{H}_t^{-1}(\theta)} &\text{(definition of $\theta_t^{(1)}$)} \\
    &\leq  \left\lVert g_t(\theta_*)-g_t(\hat\theta_t) \right\rVert_{\mbold{H}_t^{-1}(\theta_*)} &(\theta_*\in\Theta)\\
    &\leq \gamma_t(\delta) &\text{(Lemma~\ref{lemma:StHtbounded}, $E_\delta$ holds.)}
\end{align*}
which proves the desired result.

As $\theta_t^{(1)}\in\mcal{C}_t$ we have from Lemma~\ref{lemma:lemmapredictionbound1} that for all $t\geq 1$ and $x\in\mcal{X}$:
\begin{align*}
    \Delta^{\text{pred}}(x,\theta_t^{(1)}) &\leq L\sqrt{4+8S}\sqrt{\kappa}\gamma_t(\delta)\left\lVert x\right\rVert_{\mbold{V}_t^{-1}}\\
    &= \epsilon_{t,1}(x)
\end{align*}
and therefore by Lemma~\ref{lemma:regretdecomposition} we have that:
\begin{align*}
    R_T &\leq 2\sum_{t=1}^T \epsilon_{t,1}(x_t)&\\
    &= 4L\sqrt{1+2S}\sqrt{\kappa}\sum_{t=1}^T \gamma_{t}(\delta)\left\lVert x_t\right\rVert_{\mbold{V}_t^{-1}} \\
    &\leq 4L\sqrt{1+2S}\sqrt{\kappa}\gamma_{T}(\delta)\sum_{t=1}^T \left\lVert x_t\right\rVert_{\mbold{V}_t^{-1}}&\left(\gamma_{t}(\delta)\leq \gamma_{T}(\delta)\right)\\
    &\leq 4L\sqrt{1+2S}\sqrt{\kappa}\gamma_{T}(\delta)\sqrt{T}\sqrt{\sum_{t=1}^T\left\lVert x_t\right\rVert_{\mbold{V}_t^{-1}}^2} &\text{(Cauchy-Schwartz)} \\
    &\leq 4L\sqrt{1+2S}\sqrt{\kappa}\gamma_{T}(\delta)\sqrt{T}\sqrt{2\max(1,1/(\kappa\lambda))\log\left(\det(\mbold{V}_{T+1})(\kappa\lambda)^{-d}\right)} &\text{(Lemma~\ref{lemma:ellipticalpotential})}\\
    &\leq L\gamma_{T}(\delta)\sqrt{\kappa}\sqrt{T}\sqrt{32d(1+2S)\max(1,(\kappa\lambda))\log\left(1+\frac{T}{\kappa\lambda d}\right)} &\text{(Lemma~\ref{lemma:determinant_trace_inequality})}
\end{align*}
which concludes the proof of the first statement. The second is immediate when realizing that when $\lambda=d\log(T)$ then $\gamma_T(\delta) = \mcal{O}(d^{1/2}\cdot\log(T)^{1/2})$. 
\end{proof}

\subsection{Proof of Theorem~\ref{thm:regretourbandit}}
\regretourbandit*
\label{sec:proof_regret_our_bandit}

\begin{proof}
During this proof we work under the good event $E_\delta$, which holds with probability at least $1-\delta$ (as shown in Lemma~\ref{lemma:confidenceset}). 

We start this proof by showing that $\theta_t^{(2)}\in\mcal{C}_t(\delta)$. Note that under $E_\delta$, we have $\theta_*\in\bigcap_{s=1}^{t-1} \mcal{C}_t(\delta)$ for all $t\geq 1$ and therefore $\theta_*\in\mcal{W}_t$. Further: 
\begin{align*}
    \left\lVert g_t(\theta_t^{(2)})-g_t(\hat\theta_t) \right\rVert_{\mbold{H}_t^{-1}(\theta_t^{(2)})} &= \min_{\theta\in\mcal{W}_t}  \left\lVert g_t(\theta)-g_t(\hat\theta_t) \right\rVert_{\mbold{H}_t^{-1}(\theta)} &\text{(definition of $\theta_t^{(2)}$)} \\
    &\leq  \left\lVert g_t(\theta_*)-g_t(\hat\theta_t) \right\rVert_{\mbold{H}_t^{-1}(\theta_*)} &(\theta_*\in\mcal{W}_t)\\
    &\leq \gamma_t(\delta) &\text{(Lemma~\ref{lemma:StHtbounded}, $E_\delta$ holds.)}
\end{align*}
which proves the desired result.

Therefore $\theta_t^{(2)}\in\mcal{C}_t(\delta)\cap\mcal{W}_t$ and we can apply Lemma~\ref{lemma:predictionboundtwo}. From this we know that:
\begin{align*}
    \Delta^{\text{pred}}(x,\theta_t^{(2)})&\leq (2+4S)\dot{\mu}(x^\transp\theta)\left\lVert x\right\rVert_{\mbold{H}_t^{-1}(\theta)}\gamma_t(\delta) + (4+8S)M\kappa\gamma_t^2(\delta)\left\lVert x\right\rVert_{\mbold{V}_t^{-1}}^2\\
    &= \epsilon_{t,2}(x)
\end{align*}
and therefore thanks to Lemma~\ref{lemma:regretdecomposition} we have:
\begin{align}
    R_T &\leq 2\sum_{t=1}^T \epsilon_{t,2}(x_t)\nonumber\\
    &\leq (4+8S)\sum_{t=1}^T \dot{\mu}(\bar{\theta}_t^Tx_t)\left\lVert x_t\right\rVert_{\mbold{H}_t^{-1}(\bar{\theta}_t)}\gamma_t(\delta) +(8+16S)M\kappa\sum_{t=1}^T\gamma_{t}^2(\delta)\left\lVert x_t\right\rVert_{\mbold{V}_t^{-1}}^2\nonumber\\
    &\leq (4+8S)\gamma_{T}(\delta)\sum_{t=1}^T \dot{\mu}(\bar{\theta}_t^Tx_t)\left\lVert x_t\right\rVert_{\mbold{H}_t^{-1}(\bar{\theta}_t)} + (8+16S)M\kappa\gamma_{T}^2(\delta)\sum_{t=1}^T\left\lVert x_t\right\rVert_{\mbold{V}_t^{-1}}^2\label{eq:regretthetabarfulldecomposition}
\end{align}

Note that:
\begin{align*}
    \sum_{t=1}^T\left\lVert x\right\rVert_{\mbold{V}_{t}^{-1}}^2 &\leq 2\max(1,1/(\kappa\lambda))\log\left(\det(\mbold{V}_{T+1})(\lambda\kappa)^{-d}\right) & \text{(Lemma~\ref{lemma:ellipticalpotential})}\\
    &\leq 2d\max(1,1/(\kappa\lambda))\log\left(1+\frac{T}{\kappa\lambda d}\right) &\text{(Lemma~\ref{lemma:determinant_trace_inequality})}
\end{align*}
and according to Lemma~\ref{lemma:bonuscumulatethetabar} we have:
\begin{align*}
    \sum_{t=1}^T \mu(x_t^T\bar{\theta}_t)\left\lVert x_t\right\rVert_{\mbold{H}_t^{-1}(\bar{\theta}_t)} \leq &\sqrt{ L T}\sqrt{2\max(1,L/\lambda)}\sqrt{d\log\left(1+\frac{ L T}{d\lambda}\right)}\\
        &+4dM\kappa\gamma_{T}(\delta)\sqrt{1+2S}\max(1,1/(\kappa\lambda))\log\left(1+\frac{T }{d\kappa\lambda}\right)
\end{align*}
Assembling these last two inequalities with Equation~\eqref{eq:regretthetabarfulldecomposition} gives:
\begin{align*}
    R_T \leq &(4+8S)\gamma_{T}(\delta)\sqrt{ L T}\sqrt{2d\max(1,L/\lambda)\log\left(1+\frac{ LT}{d\lambda}\right)} \\
    &+ M\kappa d(8+16S)\gamma_{T}^2(\delta)\max(1,1/(\kappa\lambda))\log\left(1+\frac{T }{d\kappa\lambda}\right)(2+2\sqrt{1+2S})
\end{align*}
which concludes the proof of the first statement. The second is immediate when realizing that when $\lambda=d\log(T)$ then $\gamma_T(\delta) = \mcal{O}(d^{1/2}\cdot \log(T)^{1/2})$.
\end{proof}

\section{Useful lemmas}
\label{sec:proof_useful_lemmas}

The following Lemma is a version of the Elliptical Potential Lemma and can be extracted from Lemma 11 in \cite{abbasi2011improved}. We remind its statement and its proof here for the sake of completeness.

\begin{lemma}[Elliptical potential]
    Let $\{x_s\}_{s=1}^\infty$ a sequence in $\mbb{R}^d$ such that $\ltwo{x_s}\leq X$ for all $s\in\mbb{N}$, and  let $\lambda$ be a non-negative scalar. For $t\geq 1$ define $\mbold{V}_t \defeq \sum_{s=1}^{t-1} x_sx_s^T+\lambda\mbold{I}_d$. The following inequality holds:
    $$
        \sum_{t=1}^{T} \left\lVert x_t\right\rVert_{\mbold{V}_t^{-1}}^2 \leq 2\max(1,\frac{X^2}{\lambda})\log\frac{\det(\mbold{V}_{T+1})}{\lambda^d}
   $$
\label{lemma:ellipticalpotential}
\end{lemma}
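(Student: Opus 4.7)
The plan is to reduce the sum of squared norms to a log-determinant via the standard matrix-determinant-lemma identity, and then absorb the scalar conversion factor into the constant $2\max(1,X^2/\lambda)$.

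First, I would use the rank-one update identity $\det(\mbold{V}_{t+1}) = \det(\mbold{V}_t + x_t x_t^\transp) = \det(\mbold{V}_t)\bigl(1 + \|x_t\|_{\mbold{V}_t^{-1}}^2\bigr)$, which follows from the matrix determinant lemma. Telescoping over $t = 1, \dots, T$ yields
\begin{align*}
\sum_{t=1}^{T} \log\!\bigl(1 + \|x_t\|_{\mbold{V}_t^{-1}}^2\bigr) \;=\; \log\!\frac{\det(\mbold{V}_{T+1})}{\det(\mbold{V}_1)} \;=\; \log\!\frac{\det(\mbold{V}_{T+1})}{\lambda^d},
\end{align*}
using $\mbold{V}_1 = \lambda \mbold{I}_d$.

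Next, I would bound $\|x_t\|_{\mbold{V}_t^{-1}}^2$ in terms of $\log(1 + \|x_t\|_{\mbold{V}_t^{-1}}^2)$. Since $\mbold{V}_t \succeq \lambda \mbold{I}_d$, we have $\|x_t\|_{\mbold{V}_t^{-1}}^2 \le \|x_t\|_2^2/\lambda \le X^2/\lambda$. On the interval $[0, B]$ with $B \defeq X^2/\lambda$, the elementary inequality $u \le \frac{B}{\log(1+B)} \log(1+u)$ holds (by comparing the endpoints and using concavity of $\log$). A short case analysis then shows $\frac{B}{\log(1+B)} \le 2\max(1,B)$: when $B \le 1$, concavity gives $\log(1+B) \ge B\log 2 \ge B/2$, so the factor is at most $2$; when $B \ge 1$, $\log(1+B) \ge \log 2 \ge 1/2$, yielding a factor at most $2B$.

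Combining the two steps gives
\begin{align*}
\sum_{t=1}^{T} \|x_t\|_{\mbold{V}_t^{-1}}^2 \;\le\; 2\max\!\Bigl(1, \tfrac{X^2}{\lambda}\Bigr) \sum_{t=1}^{T} \log\!\bigl(1 + \|x_t\|_{\mbold{V}_t^{-1}}^2\bigr) \;=\; 2\max\!\Bigl(1, \tfrac{X^2}{\lambda}\Bigr) \log\!\frac{\det(\mbold{V}_{T+1})}{\lambda^d},
\end{align*}
which is exactly the claimed bound. There is no serious obstacle here; the only mild care is in verifying the scalar inequality $\frac{B}{\log(1+B)} \le 2\max(1,B)$, which is why the max appears in the final constant. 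This argument reproduces Lemma~11 of \cite{abbasi2011improved}.
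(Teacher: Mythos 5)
Your proof is correct and follows essentially the same route as the paper's: the matrix-determinant-lemma rank-one update, telescoping to $\log(\det(\mbold{V}_{T+1})/\lambda^d)$, and then an elementary scalar inequality converting $u$ into a multiple of $\log(1+u)$ with constant $2\max(1,X^2/\lambda)$. The only (immaterial) difference is that the paper obtains this constant by rescaling the argument into $[0,1]$ and applying $\log(1+x)\geq x/2$, whereas you use the secant-line bound $u\leq \frac{B}{\log(1+B)}\log(1+u)$ on $[0,B]$ followed by $\frac{B}{\log(1+B)}\leq 2\max(1,B)$.
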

\begin{proof}
By definition of $\mbold{V}_t$:
\begin{align*}
    \left\vert \mbold{V}_{t+1}\right\vert &= \left\vert \mbold{V}_{t} + x_tx_t^T\right\vert\\
    &= \left\vert \mbold{V}_t\right\vert \left\vert \mbold{I}_d + \mbold{V}_t^{-1/2}x_tx_t^T\mbold{V}_t^{-1/2}\right\vert\\
    &= \left\vert \mbold{V}_t\right\vert\left(1+\left\lVert x_t\right\rVert_{\mbold{V}_t^{-1}}^2\right)\\
\end{align*}
and therefore by taking the log on both side of the equation and summing from $t=1$ to $T$:
\begin{align*}
     \sum_{t=1}^T \log\left(1+\left\lVert x_t\right\rVert_{\mbold{V}_t^{-1}}^2\right) &= \sum_{t=1}^T \log\left\vert \mbold{V}_{t+1}\right\vert - \log\left\vert \mbold{V}_{t}\right\vert &\\
     &= \log\left(\frac{\det(\mbold{V}_{T+1})}{\det(\lambda\mbold{I}_d)}\right) &\text{(telescopic sum)} \\
     &= \log\left(\frac{\det(\mbold{V}_{T+1})}{\lambda^d}\right)
\end{align*}
Remember that for all $x\in[0,1]$ we have the inequality $\log(1+x)\geq x/2$. Also note that $\left\lVert x_t\right\rVert_{\mbold{V}_t^{-1}}^2\leq X^2/\lambda$. Therefore:
\begin{align*}
     \log\left(\frac{\det(\mbold{V}_{T+1})}{\lambda^d}\right) &=  \sum_{t=1}^T \log\left(1+\left\lVert x_t\right\rVert_{\mbold{V}_t^{-1}}^2\right)&\\
     &\geq \sum_{t=1}^T \log\left(1+\frac{1}{\max(1,X^2/\lambda)}\left\lVert x_t\right\rVert_{\mbold{V}_t^{-1}}^2\right)&\\
     &\geq \frac{1}{2\max(1,X^2/\lambda)}\sum_{t=1}^T \left\lVert x_t\right\rVert_{\mbold{V}_t^{-1}}^2
\end{align*}
which yields the announced result. 
\end{proof}

We will also need Lemma~10 of \cite{abbasi2011improved}. We remind its statement here for the sake of completeness.

\begin{lemma}[Determinant-Trace inequality]
     Let $\{x_s\}_{s=1}^\infty$ a sequence in $\mbb{R}^d$ such that $\ltwo{x_s}\leq X$ for all $s\in\mbb{N}$, and  let $\lambda$ be a non-negative scalar. For $t\geq 1$ define $\mbold{V}_t \defeq \sum_{s=1}^{t-1} x_sx_s^T+\lambda\mbold{I}_d$. The following inequality holds:
     \begin{align*}
         \det(\mbold{V}_{t+1}) \leq \left(\lambda+tX^2/d\right)^d
     \end{align*}
\label{lemma:determinant_trace_inequality}
\end{lemma}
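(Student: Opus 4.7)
The plan is to prove this by reducing the inequality to the classical AM--GM relationship between the arithmetic and geometric means of the eigenvalues of $\mbold{V}_{t+1}$. Since $\mbold{V}_{t+1}=\sum_{s=1}^{t}x_sx_s^\transp+\lambda\mbold{I}_d$ is a sum of a positive semi-definite matrix and a strictly positive definite regularization, it is symmetric positive definite and admits $d$ strictly positive eigenvalues, call them $\mu_1,\ldots,\mu_d$.

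The first step is to recall that $\det(\mbold{V}_{t+1})=\prod_{i=1}^d \mu_i$ and $\mathrm{tr}(\mbold{V}_{t+1})=\sum_{i=1}^d \mu_i$. Applying the AM--GM inequality to the $\mu_i$'s yields
\begin{align*}
\det(\mbold{V}_{t+1})=\prod_{i=1}^d\mu_i \leq \left(\frac{1}{d}\sum_{i=1}^d\mu_i\right)^d = \left(\frac{\mathrm{tr}(\mbold{V}_{t+1})}{d}\right)^d.
\end{align*}

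The second step is to compute $\mathrm{tr}(\mbold{V}_{t+1})$ explicitly. Using the linearity of the trace and the identity $\mathrm{tr}(x_sx_s^\transp)=\ltwo{x_s}^2$, one obtains
\begin{align*}
\mathrm{tr}(\mbold{V}_{t+1}) = \sum_{s=1}^{t}\ltwo{x_s}^2 + \lambda\,\mathrm{tr}(\mbold{I}_d) \leq tX^2 + d\lambda,
\end{align*}
where the inequality uses the assumption $\ltwo{x_s}\leq X$. Combining the two displays gives $\det(\mbold{V}_{t+1})\leq ((tX^2+d\lambda)/d)^d = (\lambda + tX^2/d)^d$, which is the announced bound.

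No step of this plan is really an obstacle: the argument is short and self-contained, relying only on elementary linear algebra and AM--GM. The only mild subtlety is ensuring that AM--GM can be applied, which is immediate from the strict positivity of all eigenvalues of $\mbold{V}_{t+1}$ (guaranteed by $\lambda\geq 0$ combined with the positive semi-definiteness of $\sum_s x_sx_s^\transp$; even when $\lambda=0$ the inequality extends by continuity to the degenerate case where some eigenvalues may vanish).
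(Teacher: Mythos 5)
Your proof is correct. The paper itself does not prove this lemma --- it only restates it, deferring to Lemma~10 of \cite{abbasi2011improved} --- and your argument (determinant as product of eigenvalues, AM--GM against the trace, then the trace bound $\mathrm{tr}(\mbold{V}_{t+1}) = \sum_{s=1}^{t}\lVert x_s\rVert_2^2 + \lambda d \leq tX^2 + \lambda d$) is exactly the standard determinant--trace argument used in that reference. One cosmetic remark: the closing discussion about strict positivity and a continuity argument is unnecessary, since AM--GM already holds for nonnegative reals, so positive semi-definiteness of $\mbold{V}_{t+1}$ suffices even when $\lambda=0$.
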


\end{document}